\documentclass{article}

\usepackage[preprint]{neurips_2026}

\usepackage{microtype}
\usepackage{graphicx}
\usepackage{subcaption}
\usepackage{booktabs} 
\usepackage{amsthm}
\usepackage{thmtools,thm-restate}
\usepackage{enumitem}
\usepackage{multirow}
\usepackage{subcaption}
\usepackage{graphicx}
\usepackage[table]{xcolor}
\usepackage{bbm}

\usepackage[utf8]{inputenc} 
\usepackage[T1]{fontenc}    
\usepackage{hyperref}       
\usepackage{url}            
\usepackage{booktabs}       
\usepackage{amsfonts}       
\usepackage{nicefrac}       
\usepackage{microtype}      
\usepackage{xcolor}         

\usepackage{amsmath}
\usepackage{amssymb}
\usepackage{mathtools}
\usepackage{float}
\usepackage{algorithm}
\usepackage{algorithmic}
\usepackage{cancel}

\usepackage[capitalize,noabbrev]{cleveref}

\theoremstyle{plain}
\newtheorem{theorem}{Theorem}[section]
\newtheorem{proposition}[theorem]{Proposition}
\newtheorem{lemma}[theorem]{Lemma}
\newtheorem{corollary}[theorem]{Corollary}
\theoremstyle{definition}
\newtheorem{definition}[theorem]{Definition}

\theoremstyle{remark}

\newcommand{\meanadv}{\bar{A}^{\mathrm{info}}}

\usepackage[textsize=tiny]{todonotes}

\newcommand{\methodtitle}{Information-Time Proximal Policy Optimization}
\newcommand{\methodname}{Information-Time Proximal Policy Optimization}
\newcommand{\methodabb}{InfoPPO}

\title{\methodtitle{}}

\author{
  {\bf Yongcheng Zeng$^{1,2}$},  
  {\bf Xinyu Cui$^{1}$},
  {\bf Yan Song$^{3,4}$},
  {\bf Guoqing Liu$^{5}$},\\
  {\bf Hongsheng Xin$^{6}$},
  {\bf Kaike Zhang$^{6}$},
  {\bf Cheng Deng$^{7}$},
  {\bf Kun Zhan$^{6}$},\\
  {\bf Jian Ying$^{6}$},
  {\bf Jian Zhao$^8$},
  {\bf Haifeng Zhang$^{1,\dagger}$},
  {\bf Jun Wang$^{3,\dagger}$} \\
  \vspace{0.05 cm}\\
  {\normalsize $^1$ Institute of Automation, Chinese Academy of Sciences, China}\\
  {\normalsize $^2$ School of Artificial Intelligence, University of Chinese Academy of Sciences, China} \\
  {\normalsize $^3$ University College London} 
  {\normalsize $^4$ AI Lab, The Yangtze River Delta} \\
  {\normalsize $^5$ Microsoft Research AI4Science}
  {\normalsize $^6$ Li Auto}
  {\normalsize $^7$ The University of Edinburgh}\\
  {\normalsize $^8$ Zhongguancun Institute of Artificial Intelligence}
}

\begin{document}

\maketitle

\begin{abstract}
Reinforcement Learning with Verifiable Rewards (RLVR) has substantially improved the reasoning capabilities of Large Language Models (LLMs). However, existing methods typically parameterize temporal progression in the Markov Decision Process (MDP) by token-by-token generation, despite the highly non-uniform information flow along autoregressive trajectories. In this paper, we propose \methodname{} (\methodabb{}), which reparameterizes temporal progression using information density, so that temporal distance is measured by accumulated information rather than raw token count. 
This reparameterization induces a common state-dependent structure for both temporal credit propagation and policy updates. By measuring temporal progression through accumulated information rather than token count, \methodabb{} restores the effectiveness of non-trivial discounting in long-horizon reasoning, retaining effective-horizon contraction in information time while avoiding excessive attenuation of terminal supervision over long token sequences. Moreover, the information-time policy-improvement analysis naturally leads to a state-dependent update constraint, which we implement through adaptive clipping. By adapting the clipping threshold at each token position to the information density of its corresponding state, this mechanism enables more targeted policy
updates while preserving proximal control.
Theoretically, we extend performance-difference and policy-improvement analyses to the information-time MDP, deriving a policy-improvement lower bound when policy changes are regulated by information density. We further connect the general information-time analysis to practical LLM
policy optimization by relating state-wise information density to local policy
movement, while also providing theoretical grounding for the adaptive update
mechanism.
Experiments on Qwen3 models demonstrate consistent gains over competitive
baselines across five challenging competition-style mathematical reasoning benchmarks. \methodabb{} also maintains stable accuracy and response length across non-trivial discount settings under which token-time PPO deteriorates.
\end{abstract}

\section{Introduction}
Although Reinforcement Learning with Verifiable Rewards (RLVR) has substantially improved the performance of Large Language Models (LLMs) on challenging reasoning tasks \citep{guo2025deepseek,yang2025qwen3,jaech2024openai,shao2024deepseekmath,yu2025dapo}, reliably optimizing policies over long reasoning trajectories remains challenging. Existing RLVR methods commonly formulate each generated token as one transition of the Markov Decision Process (MDP), thereby measuring temporal progression by raw token count. This is a natural and widely adopted modeling choice. However, this formulation does not explicitly capture how information content varies across token transitions. This raises the question of whether token count alone provides the most suitable temporal metric for long-horizon LLM reasoning. Under sparse outcome-level supervision, this choice also makes non-trivial discounting difficult to apply: discounting at every token can cause terminal rewards to decay rapidly over long sequences, whereas removing discounting forgoes its control over the effective horizon. This tension motivates us to consider whether temporal progression in LLM reinforcement learning can be parameterized by a measure other than token count alone.

One natural basis for such an alternative is the non-uniform information structure of autoregressive trajectories. In conventional RL domains such as robotic control and games, temporal progression is typically indexed by successive environment interactions, with each transition advancing the discrete-time index by one step \citep{brockman2016openai,schulman2015trust,schulman2017proximal}. When this convention is applied to language generation, each generated token is likewise treated as one unit of temporal progression. Yet token progression need not match information progression: many locally predictable tokens, including common function words such as articles, contribute mainly to local syntactic or semantic continuity, whereas a smaller number of semantic transition points can substantially redirect the subsequent reasoning trajectory \citep{wang2025beyond,qu2025dynamic,fu2025deep}. The resulting information landscape therefore interleaves locally sparse and dense regions. In sparse regions, continuations are relatively constrained, whereas dense regions involve a broader range of plausible continuations and larger increments of information. Consequently, token spans of equal length can accumulate markedly different amounts of reasoning-relevant information, even though token-based discounting applies the same cumulative decay to each. Raw token count thus captures sequence distance, but not the information distance traversed along the sequence. This motivates parameterizing temporal progression by accumulated information rather than token count.

Building on this analysis, we propose \methodname{} (\methodabb{}), 
which preserves the token-level state and action structure while reparameterizing the temporal measure of the MDP using state-wise information density. Temporal distance is therefore measured by accumulated information rather than raw token count. Under this formulation, discounting and trace decay operate over information time, retaining effective-horizon contraction while mitigating the excessive attenuation of terminal supervision over long token sequences. The corresponding policy-improvement analysis further motivates constraining policy movement in proportion to information density, which \methodabb{} implements through adaptive clipping. By adapting the clipping range to local information density, this mechanism improves sample efficiency and accelerates policy learning. The state-wise information structure thus provides a common basis for temporal credit propagation and policy-update regulation.

Theoretically, we extend the performance-difference lemma to the information-time MDP and derive a policy-improvement lower bound for policy updates constrained by information density. We further connect the general information-time analysis to
practical LLM policy optimization by relating state-wise information density to
local policy movement, while also providing theoretical grounding for the
adaptive update mechanism in \methodabb{}.
Empirically, \methodabb{} delivers consistent gains over PPO and DAPO-based baselines across model scales and five challenging competition-style mathematical reasoning benchmarks, while maintaining stable response lengths across training and remaining robust to non-trivial discounting that substantially degrades token-time PPO.

\section{Related Works}
\textbf{Reinforcement Learning with Verifiable Rewards (RLVR).} 
RLVR has emerged as a powerful post-training paradigm for improving the reasoning capabilities of LLMs by using rule-based verification functions to generate binary reward signals \citep{guo2025deepseek,yang2025qwen3,jaech2024openai,lambert2024t}.
Within this paradigm, Proximal Policy Optimization (PPO) \citep{schulman2017proximal} remains a central framework for policy optimization and constitutes the most direct point of comparison for our method.
Beyond PPO, several methods have adapted policy optimization to the specific demands of LLM reasoning.
Group-based methods such as GRPO \citep{shao2024deepseekmath} eliminate the dependency on a separate value network by sampling multiple outputs per query and computing advantages via group-relative normalization, a strategy successfully validated by systems like DeepSeek-R1 \citep{guo2025deepseek}.
Concurrently, DAPO \citep{yu2025dapo} further improves performance and stability through refined engineering implementations. Recent studies have also reconsidered how policy updates should be allocated across a generated trajectory. DAPO-FT \citep{wang2025beyond} shows that a minority of high-entropy tokens account for a disproportionate share of effective policy learning and accordingly restrict gradient updates to these positions. At a coarser granularity, GSPO \citep{zheng2025group} shifts importance weighting and clipping from individual tokens to entire sequences. 
Collectively, these methods refine how learning signals are estimated, sampled, and allocated, while generally retaining token-by-token generation as the temporal parameterization of the underlying MDP. In contrast, InfoPPO preserves the proximal policy-optimization structure of PPO while reparameterizing temporal progression through state-wise information density, thereby enabling effective-horizon control over long reasoning trajectories and supporting more sample-efficient policy learning.

\textbf{Temporal Abstraction and Information Dynamics.}
Reinforcement learning has long explored temporal abstraction as a way to move beyond uniformly indexed primitive transitions.
Foundational frameworks such as Semi-Markov Decision Processes (SMDPs) and the options framework \citep{sutton1999between,precup2000temporal,bacon2017option} introduced temporal abstraction by aggregating primitive transitions into temporally extended actions. 
At the representation level, recent language modeling work, such as Dynamic Large Concept Models (DLCM) \citep{qu2025dynamic}, also explores adaptive semantic abstraction beyond fixed token-level representations.
These approaches typically require higher-level actions, units, or boundaries to be specified or learned, which is difficult in the semantically fluid process of autoregressive generation.
Beyond explicit temporal abstraction, another complementary line of work generalizes geometric discounting through weighted or transition-based discount factors, allowing temporal weighting to vary across states or transitions \citep{feinberg1994markov,white2017unifying}. \methodabb{} is related to this line of work, but grounds non-uniform temporal progression specifically in state-wise information density and extends the same structure to policy-update regulation. Unlike explicit temporal-abstraction methods, \methodabb{} preserves token-level states and actions, capturing non-uniform temporal progression without requiring explicit segmentation or macro-action modeling.

\section{Preliminaries}

\textbf{Markov Decision Process.}
We formulate the language generation process as a discounted Markov Decision Process (MDP), denoted by the tuple $\mathcal{M}=(\mathcal{S},\mathcal{A},\mathcal{P},r,\gamma)$.
The state $s_t \in \mathcal{S}$ represents the generated sequence prefix, while the action $a_t \in \mathcal{A}$ corresponds to the next token selected from the vocabulary $\mathcal{V}$.
The transition dynamics $\mathcal{P}:\mathcal{S} \times \mathcal{A} \times \mathcal{S} \rightarrow \mathbb{R}$ are deterministic, given by $s_{t+1} = s_t \oplus a_t$, where $\oplus$ denotes concatenation. $r:\mathcal{S}\rightarrow \mathbb{R}$ is the reward function and $\gamma$ is the discount factor. For convenience, we write $r_t\coloneqq r(s_t)$. Given a policy $\pi$, the objective is to maximize the expected discounted return $\eta(\pi) = \mathbb{E}_{\tau \sim \pi}[\sum_{t=0}^{\infty} \gamma^t r(s_t)]$.
Accordingly, we define the state-value function $V_\pi(s_t) = \mathbb{E}_\pi[\sum_{k=0}^\infty \gamma^k r(s_{t+k}) | s_t]$ and the action-value function $Q_\pi(s_t, a_t)= \mathbb{E}_\pi[\sum_{k=0}^\infty \gamma^k r(s_{t+k}) | s_t,a_t]$, yielding the advantage function $A_\pi(s, a) \coloneqq Q_\pi(s, a) - V_\pi(s)$.
Finally, we define the (unnormalized) discounted state visitation distribution as $d_\pi(s) \coloneqq \sum_{t=0}^{\infty}\gamma^t P(s_t=s | \pi)$.

In many sparse reward settings, such as reasoning or question-answering tasks in LLMs, the agent only receives a non-zero signal $R_T$ at the terminal timestep $T$, i.e.,
\begin{equation}
    r(s_t) = 
\begin{cases}
R_T, & t=T \\
0, & t<T 
\end{cases}
\label{eq:sparse_reward}
\end{equation}
Here, the terminal reward $R_T$ provides outcome-level supervision for the entire trajectory.

\textbf{Policy Optimization.}
The \textit{Performance Difference Lemma} \citep{kakade2002approximately} provides the theoretical foundation for monotonic policy improvement. It expresses the return difference between two policies, $\pi$ and $\tilde{\pi}$, as:
\begin{equation}
    \eta(\tilde{\pi}) - \eta(\pi) = \mathbb{E}_{\tau \sim \tilde{\pi}} \left[ \sum_{t=0}^\infty \gamma^tA_\pi(s_t, a_t) \right] = \sum_{s}d_{\tilde{\pi}}(s)\sum_{a}\tilde{\pi}(a|s)A_{\pi}(s,a).
\end{equation}

In practice, to render the optimization tractable, we employ a local approximation by replacing $d_{\tilde{\pi}}$ with $d_{\pi}$, yielding the following surrogate objective function:
\begin{align}
L_{\pi}(\tilde{\pi}) = \eta(\pi) + \sum_s d_{\pi}(s) \sum_a \tilde{\pi}(a | s) A_\pi(s,a). \label{eq:adv}
\end{align}
TRPO \citep{schulman2015trust} 
derives the following lower bound for monotonic policy improvement:
\begin{equation}
\begin{aligned}
\eta(\tilde{\pi}) \ge L_{\pi}(\tilde{\pi}) - \frac{4C \gamma  }{(1-\gamma)^2} \alpha^2,
\end{aligned}
\end{equation}
where $C = \max_{s,a}|A_\pi(s,a)|$ and $\alpha=\max\limits_s D_{\text{TV}}(\pi(\cdot|s)\|\tilde{\pi}(\cdot|s))$. In practice,
TRPO constrains the magnitude of the policy update using an expected KL-divergence trust region, optimizing the following surrogate objective:
\begin{equation}
\begin{gathered}
    \max_{\theta} \; \mathbb{E}_{\pi_{\theta_{\text{old}}}} \left[ \frac{\pi_\theta(a_t|s_t)}{\pi_{\theta_{\text{old}}}(a_t|s_t)} {A}_{\theta_{\text{old}}}(s_t,a_t) \right] \\
    \text{s.t.} \quad \mathbb{E}_{\pi_{\theta_{\text{old}}}} \left[ D_{\mathrm{KL}}(\pi_{\theta_{\text{old}}}(\cdot|s_t) \| \pi_\theta(\cdot|s_t)) \right] \le \delta,
\end{gathered}
\end{equation}
where $\pi_\theta$ is the parameterized policy, and $\theta_{\text{old}}$ denotes the policy parameters before the update.

PPO \citep{schulman2017proximal} simplifies this optimization by using a clipped surrogate objective 
to approximately constrain policy updates:
\begin{equation}
\begin{aligned}
   \mathcal{L}^{\mathrm{CLIP}}(\theta) = &\mathbb{E}_{\pi_{\theta_{\text{old}}}} \Big[ \min \big( \omega_t(\theta) {A}_{\theta_{\text{old}}}, \mathrm{clip}(\omega_t(\theta), 1-\epsilon_{\text{low}}, 1+\epsilon_{\text{high}}) {A}_{\theta_{\text{old}}} \big) \Big],
\end{aligned}
\end{equation}
where $\omega_t(\theta) = \frac{\pi_\theta(a_t|s_t)}{\pi_{\theta_{\text{old}}}(a_t|s_t)}$ represents the importance ratio, $\epsilon_{\text{low}}$ and $\epsilon_{\text{high}}$ are the clipping hyperparameters.

\textbf{Advantage Estimation.}
Generalized Advantage Estimation (GAE) \citep{schulman2015high} is used to estimate the advantage from a parameterized value function
$V_\phi$. 
Given the TD residual $\delta_t = r_t + \gamma V_\phi(s_{t+1}) - V_\phi(s_t)$, the GAE estimator is given by $A_{t}^{\text{GAE}(\gamma,\lambda)}(s_t,a_t) = \sum_{l=0}^{\infty} (\gamma \lambda)^l \delta_{t+l}$, where $\lambda \in (0,1]$ controls the bias-variance trade-off.

\section{Methodology}
In this section, we develop \methodname{} (\methodabb{}) in three stages. First, we identify the temporal misalignment induced by uniform token time and formulate an Information-Time MDP that preserves token-level states and actions while measuring trajectory progression through accumulated state-dependent information. Second, we extend the performance-difference and policy-improvement analyses to this non-uniform temporal measure, deriving a general lower bound when policy updates are constrained by information density. Finally, we specialize the framework to LLM policy optimization by using the old policy's uncertainty over next-token continuations as a tractable proxy for local information density. We empirically examine how this proxy varies along reasoning trajectories and theoretically show that the induced policy movement satisfies the update condition required by the general lower bound. Because the proxy is fixed within each update but recomputed across iterations,  
we further bound the effect of this recomputation on the information-time return.
This specialization yields the practical \methodabb{} algorithm, combining information-time advantage estimation with proximal updates guided by information density.

\subsection{Information-Time Markov Decision Process}

\begin{figure*}[t]
\begin{center}
\centerline{\includegraphics[width=\textwidth]{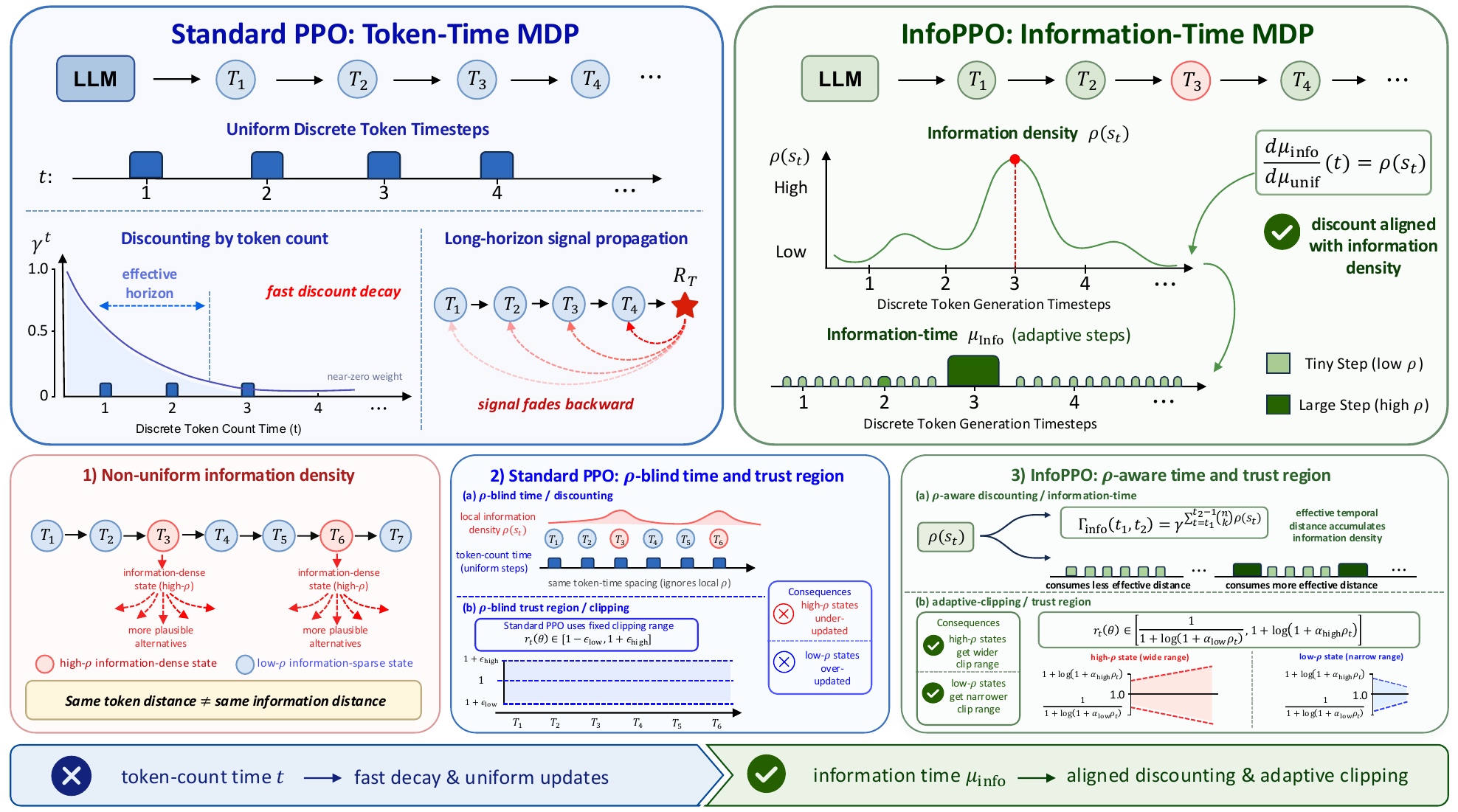}}
\caption{Comparison of standard PPO and \methodabb{}. PPO parameterizes temporal progression by uniform token steps and applies fixed clipping bounds across token positions. In long-horizon reasoning, this creates a tension between long-range credit propagation and effective-horizon contraction. In contrast, \methodabb{} preserves token-level transitions while measuring temporal distance through accumulated state-dependent information. This reparameterization retains effective-horizon contraction while moderating long-range attenuation. The same information density also adapts the clipping range across states, improving sample efficiency and accelerating policy learning.}
\label{method}
\end{center}
\vskip -0.2in
\end{figure*}

\paragraph{Temporal Misalignment under Uniform Token Time.}
Conventional RL formulations treat autoregressive token generation as an MDP over a uniform time measure, denoted as $\mu_{\text{unif}}$. Under this paradigm, each generated token advances the temporal coordinate by one unit, $\Delta\mu_{\text{unif}}(t)=1$, making temporal distance proportional to token count. This construction implicitly assigns the same temporal increment to every token transition, irrespective of how the local continuation structure changes along the trajectory. At each state, however, the autoregressive policy induces a distribution over possible continuations, and the structure of this distribution can vary substantially across the trajectory \citep{wang2025beyond,fu2025deep}. Some transitions occur where the continuation is relatively constrained, whereas others occur where a broader set of plausible continuations remains. The same unit token step can therefore correspond to different increments in the information traversed by the trajectory. Consequently, equal token distances need not imply equal information distances. We refer to this mismatch between token distance and information distance as \textbf{\textit{Temporal Misalignment}}.

Temporal Misalignment becomes consequential when temporal decay is defined over the same token-based coordinate.
Because discounting and trace decay accumulate with temporal distance, under uniform token time, their cumulative effect is determined by raw token distance rather than the information traversed along the trajectory.
In long-horizon reasoning, this creates a tension between long-range credit propagation and effective-horizon contraction: non-trivial token-wise decay can excessively attenuate terminal supervision over long sequences, whereas removing temporal decay forfeits effective-horizon contraction. This suggests that the issue is not necessarily temporal decay itself, but the coordinate over which that decay accumulates.
Rather than assigning every token transition the same temporal increment, we therefore seek a temporal coordinate whose local progression varies with the information associated with each state.

\paragraph{The Information-Time MDP Construction.}
To formalize this state-dependent temporal coordinate, we augment the standard MDP formulation to the Information-Time MDP, denoted by the tuple $\mathcal{M}_{\text{info}} = ( \mathcal{S}, \mathcal{A}, \mathcal{P}, r, \gamma, \rho )$, where $\rho:\mathcal{S}\rightarrow[0,1]$ quantifies the instantaneous information density of each state, reflecting the local intensity of information change along the trajectory. For example, in RL for LLM reasoning, the state corresponds to the self-generated context, while the old policy defines a predictive distribution over the next token at each visited state. 
Since the old policy remains fixed during the current update, this distribution provides a stable characterization of the information profile of the state.
This motivates using quantities derived from the old policy distribution, such as next-token uncertainty, as proxies for the information density $\rho(s_t)$. The details are provided in \cref{section:implementation}.

To formalize how state-wise information density reparameterizes temporal progression over discrete token steps, we define the information-time measure $\mu_{\text{info}}$ with respect to the uniform time measure $\mu_{\text{unif}}$:
 \begin{equation}
 \frac{d\mu_{\text{info}}}{d\mu_{\text{unif}}}(t) = \rho(s_t).
 \end{equation}
Since each token transition contributes a unit increment under $\mu_{\text{unif}}$, this gives $\Delta\mu_{\text{info}}(t)=\rho(s_t)$.
 
Equipped with this measure transformation, we proceed to reformulate the fundamental RL components, specifically the objective and value functions. To facilitate the derivation, let us first revisit the standard objective function $\eta(\pi)$ defined under the uniform time measure $\mu_{\text{unif}}$:
\begin{equation}
\eta(\pi) = \mathbb{E}_{\tau\sim\pi}\left[\sum_{t=0}^\infty \Gamma(0,t) r(s_t) \right],
\end{equation}
where the cumulative discount factor is defined as $\Gamma(t_1, t_2) \coloneqq\gamma^{\sum_{t=t_1}^{t_2-1} \Delta\mu_{\text{unif}}(t)}$. Since $\Delta\mu_{\text{unif}}(t)=1$, this recovers the standard form $\Gamma(t_1,t_2) = \gamma^{t_2-t_1}$.

Replacing the uniform time measure with the information-time measure, we define the corresponding objective as
\begin{equation}
\eta^{\text{info}}(\pi) = \mathbb{E}_{\tau\sim\pi}\left[\sum_{t=0}^\infty \Gamma^{\text{info}}(0,t) r(s_t)\right],
\end{equation}
Here, the information-time discount factor is defined as $\Gamma^{\text{info}}(t_1, t_2) \coloneqq\gamma^{\sum_{t=t_1}^{t_2-1} \Delta\mu_{\text{info}}(t)}$, specifically, this is equivalent to $\Gamma^{\text{info}}(t_1, t_2) =\gamma^{\sum_{t=t_1}^{t_2-1} \rho(s_{t})}$. For $\gamma=1$, the information-time objective reduces to the original
undiscounted RLVR objective, whereas for $\gamma<1$, information time determines
how temporal decay is accumulated along the trajectory.

Analogously, we define the value function $V_{\pi}^{\text{info}}$, action-value function $Q_{\pi}^{\text{info}}$, and advantage function $A_{\pi}^{\text{info}}$ with respect to the information-time measure $\mu_{\text{info}}$:
\begin{equation}
\begin{gathered}
\hspace{-0em} V_{\pi}^{\text{info}}(s_t) 
\coloneqq \mathbb{E}_{\pi}\!\left[\sum_{u=t}^{\infty}\Gamma^{\text{info}}(t,u){r}(s_{u}) \Big| s_t\right],\nonumber\\
\hspace{-0em} Q_{\pi}^{\text{info}}(s_t,a_t)
\coloneqq \mathbb{E}_{\pi}\!\left[\sum_{u=t}^{\infty}\Gamma^{\text{info}}(t,u){r}(s_{u}) \Big| s_t,a_t\right],\nonumber\\
\hspace{-0em} A_{\pi}^{\text{info}}(s,a) \coloneqq Q_{\pi}^{\text{info}}(s,a)-V_{\pi}^{\text{info}}(s).\nonumber
\end{gathered}
\end{equation}
When $\rho(s) \equiv 1$ for all $s \in \mathcal{S}$, the information-time formulation reduces to the standard uniform-time formulation.

\subsection{Policy Improvement on Information-Time MDP}
\label{sec:info_policy_improvement}
To analyze policy updates in the Information-Time MDP, we first extend the classic performance difference lemma \citep{kakade2002approximately} to the information-time setting.

 \begin{restatable}{lemma}{lemmaPD}\textup{(Information-Time Performance Difference).}\label{lemma:im_performance_diff}
Given two policies $\pi$ and $\tilde{\pi}$ within the Information-Time MDP framework, the following identity holds:
\begin{equation}
\hspace{-0.8em}
\begin{aligned}
\eta^{\text{info}}&(\tilde{\pi}) - \eta^{\text{info}}(\pi) = \mathbb{E}_{\tau \sim \tilde{\pi}}\!\!\left[\sum_{t=0}^{\infty} \Gamma^{\text{info}}(0,t) A^{\text{info}}_{\pi}(s_t, a_t)\right].
\end{aligned}
\label{eq:PD}
\end{equation}
  \end{restatable}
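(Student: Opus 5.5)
The plan is to adapt the telescoping argument of \citet{kakade2002approximately}, replacing the constant per-step factor $\gamma$ by the state-dependent factor $\gamma^{\rho(s_t)}$. The first step is a one-step Bellman identity for $Q^{\text{info}}_\pi$. Because $\Gamma^{\text{info}}(t,u)=\gamma^{\rho(s_t)}\,\Gamma^{\text{info}}(t+1,u)$ for $u\ge t+1$ and $\Gamma^{\text{info}}(t,t)=1$, we can split off the first term of the sum defining $Q^{\text{info}}_\pi$. Using the deterministic transition $s_{t+1}=s_t\oplus a_t$ and the Markov property, this gives
\begin{equation*}
Q^{\text{info}}_\pi(s_t,a_t)=r(s_t)+\gamma^{\rho(s_t)}\,V^{\text{info}}_\pi(s_{t+1}).
\end{equation*}
Hence $A^{\text{info}}_\pi(s_t,a_t)=r(s_t)+\gamma^{\rho(s_t)}V^{\text{info}}_\pi(s_{t+1})-V^{\text{info}}_\pi(s_t)$. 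Since the transition is deterministic, this holds pointwise along any trajectory, so no expectation over $s_{t+1}$ is needed. Here $\rho(s_t)$ is a fixed function of the state, which makes it measurable with respect to $s_t$; this is what allows the factor to be pulled out.

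Next, substitute this identity into the right-hand side of \eqref{eq:PD}, with $\tau\sim\tilde\pi$. Multiplying by $\Gamma^{\text{info}}(0,t)$ and using $\Gamma^{\text{info}}(0,t)\gamma^{\rho(s_t)}=\Gamma^{\text{info}}(0,t+1)$, each summand becomes
\begin{equation*}
\Gamma^{\text{info}}(0,t)r(s_t)+\Gamma^{\text{info}}(0,t+1)V^{\text{info}}_\pi(s_{t+1})-\Gamma^{\text{info}}(0,t)V^{\text{info}}_\pi(s_t).
\end{equation*}
Summing over $t$ telescopes the value terms, leaving
\begin{equation*}
\sum_t\Gamma^{\text{info}}(0,t)r(s_t)-V^{\text{info}}_\pi(s_0)+\lim_{T\to\infty}\Gamma^{\text{info}}(0,T)V^{\text{info}}_\pi(s_T).
\end{equation*}
Taking expectations under $\tilde\pi$, the first term is $\eta^{\text{info}}(\tilde\pi)$. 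The second term is $\mathbb{E}_{s_0}[V^{\text{info}}_\pi(s_0)]=\eta^{\text{info}}(\pi)$, because the initial-state distribution does not depend on the policy.

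The main obstacle is justifying the vanishing boundary term and the exchange of the infinite sum with the expectation, since $\rho$ may be zero and then $\Gamma^{\text{info}}$ need not decay. I would handle this in the paper's setting of episodic generation with sparse terminal reward~\eqref{eq:sparse_reward}. There I treat the terminal state as absorbing with zero reward after $T$, or equivalently truncate all sums at a finite horizon. Then $V^{\text{info}}_\pi$ vanishes after termination, every sum is finite, and the telescoping is exact. For the genuinely infinite-horizon case I would add an assumption: either bounded rewards together with $\sum_t\Gamma^{\text{info}}(0,t)$ having finite expectation (for instance $\rho\ge\rho_{\min}>0$ and $\gamma<1$), or $\gamma=1$ with almost surely finite episodes. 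Dominated convergence then justifies swapping limit and expectation and sends the boundary term to zero. As a sanity check, setting $\rho\equiv1$ recovers the classical lemma.
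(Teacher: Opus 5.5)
Your proposal is correct and follows essentially the same route as the paper: the one-step Bellman identity $Q^{\text{info}}_\pi(s_t,a_t)=r(s_t)+\gamma^{\rho(s_t)}V^{\text{info}}_\pi(s_{t+1})$, substitution into the $\tilde\pi$-expectation, telescoping via $\Gamma^{\text{info}}(0,t)\gamma^{\rho(s_t)}=\Gamma^{\text{info}}(0,t+1)$, and removal of the boundary term using zero continuation value after termination, which is exactly how the paper handles it. Your extra remarks on integrability and on conditions for a genuinely infinite horizon go slightly beyond the paper, which only invokes the episodic setting.
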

The detailed proof is provided in Appendix \ref{app_lemma:im_performance_diff}.

To express the information-time performance difference in state space,
we define the unnormalized information-discounted state visitation
measure
\begin{equation}
\nu_{\pi}^{\mathrm{info}}(s)
\coloneqq
\mathbb{E}_{\tau\sim\pi}
\left[
\sum_{t=0}^{\infty}
\Gamma^{\mathrm{info}}(0,t)
\mathbf{1}\{s_t=s\}
\right].
\label{eq:info_visitation}
\end{equation}
Unlike the standard discounted visitation measure, the information
discount remains inside the expectation because
$\Gamma^{\mathrm{info}}(0,t)$ depends on the information accumulated
along the sampled trajectory.

With this definition, Eq.~\eqref{eq:PD} can be equivalently written as
\begin{align}
\eta^{\mathrm{info}}(\tilde{\pi})
-
\eta^{\mathrm{info}}(\pi)
=
\sum_s
\nu_{\tilde{\pi}}^{\mathrm{info}}(s)
\sum_a
\tilde{\pi}(a|s)
A_{\pi}^{\mathrm{info}}(s,a).
\label{eq:state_adv}
\end{align}
The derivation is provided in Appendix~\ref{app:info_state_representation}.

This formulation implies that if the condition $\sum_a \tilde{\pi}(a|s) A_{\pi}^{\text{info}}(s,a) \ge 0$ holds for all states $s$, 
the update from $\pi$ to $\tilde{\pi}$ guarantees non-decreasing information-time return.
However, direct optimization of this expression is difficult because
the state visitation measure depends on the candidate policy
$\tilde{\pi}$. We therefore replace
$\nu_{\tilde{\pi}}^{\mathrm{info}}$ with the visitation measure induced
by the current policy $\pi$, yielding the local surrogate
\begin{align}
L_{\pi}^{\mathrm{info}}(\tilde{\pi})
=
\eta^{\mathrm{info}}(\pi)
+
\sum_s
\nu_{\pi}^{\mathrm{info}}(s)
\sum_a
\tilde{\pi}(a|s)
A_{\pi}^{\mathrm{info}}(s,a).
\label{eq:surrogate}
\end{align}

This replacement introduces an approximation error between the
surrogate objective and the true return of $\tilde{\pi}$.
To relate the surrogate to the information-time return, we need to control this error as the
candidate policy $\tilde{\pi}$ departs from the current policy $\pi$.
Unlike the standard discounted setting, however, the information-time
discount $\Gamma^{\text{info}}(0,t)$ depends on the accumulated
information density along the trajectory rather than on the token index
alone. As a result, $\Gamma^{\mathrm{info}}(0,t)$ no longer forms a standard
geometric sequence, and its cumulative sum cannot in general be reduced
to $(1-\gamma)^{-1}$ as in the uniform-time setting. The standard
geometric-series argument used to control the approximation error
therefore does not directly apply.

The underlying difference is that each transition now advances the
temporal coordinate by $\rho(s)$, rather than by a uniform unit step.
This state-dependent temporal increment motivates expressing the local
policy-update scale in terms of information density.
Specifically, we require the total-variation divergence between the
candidate policy $\tilde{\pi}$ and the current policy $\pi$ at state
$s$ to be constrained by its information density:
\begin{equation}
D_{\mathrm{TV}}
\left(
\tilde{\pi}(\cdot|s),
\pi(\cdot|s)
\right)
\le
\alpha \rho(s),
\qquad \forall s.
\label{eq:tv_divergence}
\end{equation}

Under this condition, states with lower information density admit smaller divergence
bounds, whereas states with higher information density admit larger ones.
Here, $\alpha$ determines the overall divergence scale.
We justify this condition for LLM policy optimization in
Section~\ref{section:implementation}.
We next show that this
state-wise divergence constraint provides a corresponding bound on the
surrogate approximation error.

\begin{restatable}{theorem}{theoremoneLB}
\label{theoremLB}
Let $\pi$ be the current policy and $\tilde{\pi}$ a candidate policy.
Suppose that the divergence condition in
Eq.~(\ref{eq:tv_divergence}) holds for some $\alpha \ge 0$.
Then, the following lower bound holds:
\begin{equation}
\begin{gathered}
\eta^{\text{info}}(\tilde{\pi}) \ge L_{\pi}^{\text{info}}(\tilde{\pi}) - \frac{4 C \alpha^2}{(1-\gamma)^2}, \\
 \text{where } C = \max_{s,a}|A_{\pi}^{\text{info}}(s,a)|.
\end{gathered}
\end{equation}
\end{restatable}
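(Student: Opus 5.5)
The plan is to adapt the coupling argument of TRPO \citep{schulman2015trust}, replacing the geometric sums by sums along the information clock $M_t \coloneqq \sum_{k<t}\rho(s_k)$. Write $\bar A(s)\coloneqq\sum_a\tilde\pi(a|s)A^{\mathrm{info}}_\pi(s,a)$.

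\textbf{Step 1: rewrite the gap.} By Lemma~\ref{lemma:im_performance_diff} and the visitation identity in Eq.~\eqref{eq:state_adv}, the gap between the true return and the surrogate is
\begin{equation*}
\eta^{\mathrm{info}}(\tilde\pi)-L^{\mathrm{info}}_\pi(\tilde\pi)=\mathbb{E}_{\tau\sim\tilde\pi}\Big[\sum_t\Gamma^{\mathrm{info}}(0,t)\bar A(s_t)\Big]-\mathbb{E}_{\tau\sim\pi}\Big[\sum_t\Gamma^{\mathrm{info}}(0,t)\bar A(s_t)\Big].
\end{equation*}
It therefore suffices to bound the absolute value of the right-hand side.

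\textbf{Step 2: bound $\bar A$ pointwise.} Since $\sum_a\pi(a|s)A^{\mathrm{info}}_\pi(s,a)=0$, we have $\bar A(s)=\sum_a(\tilde\pi-\pi)(a|s)A^{\mathrm{info}}_\pi(s,a)$. Hence
\begin{equation*}
|\bar A(s)|\le 2C\,D_{\mathrm{TV}}\big(\tilde\pi(\cdot|s),\pi(\cdot|s)\big)\le 2C\alpha\rho(s)
\end{equation*}
by Eq.~\eqref{eq:tv_divergence}. This is where the information-density constraint enters: the per-step advantage error is proportional to the per-step clock increment $\rho(s_t)$.

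\textbf{Step 3: couple the two rollouts.} Couple $\tilde\pi$ and $\pi$ so that at each state they choose the same action except with probability at most $\alpha\rho(s)$ (maximal coupling). Before the first disagreement the two trajectories coincide, and so do their discounts $\Gamma^{\mathrm{info}}(0,t)$, because these depend only on the shared states. The $t$-th terms of the two expectations therefore cancel on the event that no disagreement has occurred before $t$. On the complementary event, each term is bounded in absolute value by $\Gamma^{\mathrm{info}}(0,t)\cdot 2C\alpha\rho(s_t)$ for the respective trajectory. A union bound over disagreement times on the shared prefix gives a failure probability of at most $\alpha M_t$. Summing over $t$, the gap is controlled by $4C\alpha^2$ times a quantity of the form
\begin{equation*}
\mathbb{E}\Big[\sum_t M_t\,\gamma^{M_t}\rho(s_t)\Big].
\end{equation*}
Making this step rigorous is the delicate part. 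The failure probability and the post-disagreement discount are both trajectory dependent, and the prefix clock has to be kept inside the expectation rather than replaced by a worst case. I would condition on the prefix $s_0,\dots,s_t$ and apply the union bound conditionally, so that everything stays a pathwise sum.

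\textbf{Step 4: bound the pathwise sums (main obstacle).} This is the step that replaces the geometric series. Since $\gamma^{M}$ is not a geometric sequence in $t$, I use instead the concavity fact $1-\gamma^{\rho}\ge\rho(1-\gamma)$ for $\rho\in[0,1]$. It gives
\begin{equation*}
\gamma^{M_t}\rho(s_t)\le\frac{\gamma^{M_t}-\gamma^{M_{t+1}}}{1-\gamma},
\end{equation*}
so $\sum_t\gamma^{M_t}\rho(s_t)\le 1/(1-\gamma)$ by telescoping, uniformly over trajectories. For the weighted sum, $M_t(\gamma^{M_t}-\gamma^{M_{t+1}})\le\int_{M_t}^{M_{t+1}}x\ln(1/\gamma)\gamma^x\,dx$, because the integrand is at least $M_t\ln(1/\gamma)\gamma^x$ on that interval. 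Summing over $t$ and using $\int_0^\infty x\ln(1/\gamma)\gamma^x\,dx=1/\ln(1/\gamma)\le 1/(1-\gamma)$ gives $\sum_t M_t\gamma^{M_t}\rho(s_t)\le 1/(1-\gamma)^2$. Combining this with Steps 1--3 yields
\begin{equation*}
\eta^{\mathrm{info}}(\tilde\pi)\ge L^{\mathrm{info}}_\pi(\tilde\pi)-\frac{4C\alpha^2}{(1-\gamma)^2}.
\end{equation*}
I expect the main care to lie in keeping the constants exactly at $4$, with no stray $\gamma^{-1}$ factor from the Riemann-sum comparison; the concavity and telescoping trick above is chosen to avoid that factor.
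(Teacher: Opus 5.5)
Your conclusion is correct, but Step 3 has a genuine gap, and the remedy you propose for it does not work. Steps 1 and 2, and the telescoping bound $\sum_t\gamma^{M_t}\rho(s_t)\le 1/(1-\gamma)$ in Step 4, are correct and coincide with the paper's ingredients.

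\textbf{Where Step 3 fails.} The weight $\Gamma^{\mathrm{info}}(0,t)\rho(s_t)$ that multiplies the failure event $\{T_{\mathrm{div}}<t\}$ is correlated with that event. You therefore cannot bound their product by a failure probability $\alpha M_t$ times the weight. (In TRPO this accounting works only because the bound on $|\bar A|$ is uniform in $s$.) Conditioning on $s_0,\dots,s_t$ does not decouple them. Since $s_{k+1}=s_k\oplus a_k$, it conditions on the realized $a_k$. Given $a_k$, the maximal coupling disagrees with probability $1-\min\{1,\tilde\pi(a_k|s_k)/\pi(a_k|s_k)\}$, which is not bounded by $\alpha\rho(s_k)$. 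The bound $\alpha\rho(s_k)$ holds only conditionally on the history \emph{before} $a_k$ is drawn.

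\textbf{Counterexample to the intermediate inequality.} Take the following construction:
\begin{itemize}
\item Let $\rho(s_0)=1$, $\pi(b|s_0)=\alpha$ and $\tilde\pi(b|s_0)=0$.
\item After action $a$, every state has $\rho=0$, so $\tilde\pi=\pi$ there.
\item After action $b$, every state has $\rho=1$ and two actions, with $\pi=(\tfrac12,\tfrac12)$, $\tilde\pi=(\tfrac12+\alpha,\tfrac12-\alpha)$, and rewards $\pm c$ on the states they lead to. Then $C=\gamma c$ and $\bar A=2C\alpha$ on this branch.
\end{itemize}
The rollouts diverge exactly when $\pi$ draws $b$, and the true gap is $2C\alpha^2\gamma/(1-\gamma)$. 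Your bound gives $4C\alpha^2\,\mathbb{E}_\pi[\sum_t M_t\gamma^{M_t}\rho(s_t)]=4C\alpha^3\gamma/(1-\gamma)^2$, and the corresponding $\tilde\pi$-expectation is $0$. This is smaller than the true gap whenever $\alpha<(1-\gamma)/2$. The theorem itself survives here; your intermediate quantity simply does not dominate the gap.

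\textbf{The fix: swap the order of summation before conditioning.} This is the paper's argument.
\begin{enumerate}
\item Write $\mathbf{1}\{T_{\mathrm{div}}<t\}=\sum_{k<t}\mathbf{1}\{T_{\mathrm{div}}=k\}$. On $\{T_{\mathrm{div}}=k\}$ the two rollouts share $s_0,\dots,s_k$.
\item Your telescoping bound, started at index $k$, gives on each rollout $\sum_{t\ge k}\Gamma^{\mathrm{info}}(0,t)|\bar A(s_t)|\le 2C\alpha\,\Gamma^{\mathrm{info}}(0,k)/(1-\gamma)$. The right-hand side is measurable with respect to the pre-action history $H_k$.
\item Hence the gap is at most $\frac{4C\alpha}{1-\gamma}\sum_k\mathbb{E}[\mathbf{1}\{T_{\mathrm{div}}=k\}\Gamma^{\mathrm{info}}(0,k)]$.
\item Conditioning on $H_k$ gives $\mathbb{E}[\mathbf{1}\{T_{\mathrm{div}}=k\}\Gamma^{\mathrm{info}}(0,k)]\le\alpha\,\mathbb{E}_\pi[\Gamma^{\mathrm{info}}(0,k)\rho(s_k)]$.
\item Summing over $k$ and using the same telescoping bound once more gives at most $\alpha/(1-\gamma)$.
\end{enumerate}
This yields $4C\alpha^2/(1-\gamma)^2$ directly. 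The weighted sum $\sum_t M_t\gamma^{M_t}\rho(s_t)$ and its integral comparison, which you flagged as the main obstacle, are then unnecessary.
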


The proof is provided in Appendix~\ref{thm:lower_bound}.
It uses a coupling argument to relate the state-wise total-variation
bound to action disagreement between the two policies. Consequently, if the surrogate gain over the current policy
exceeds the penalty term in Theorem~\ref{theoremLB}, the candidate
policy is guaranteed to achieve a non-decreasing information-time
return. 
Section~\ref{section:implementation} extends this one-step result to the practical setting in which the information clock is induced by the old policy and refreshed across policy iterations.

\subsection{Practical Construction of the Information Clock}
\label{section:density_instantiation}
The analysis in Section~\ref{sec:info_policy_improvement} is stated in
the general infinite-horizon form of the Information-Time MDP. Episodic
language generation is naturally covered by this formulation by treating
the state reached after termination as absorbing, with zero continuation
reward and value. We now specialize this framework to the practical RLVR
setting considered in this work. Each generated response forms a finite
trajectory
$
\tau=(s_0,a_0,\ldots,s_T)
$, 
which terminates at the EOS action, and supervision is provided through
a terminal outcome reward $R_T$. 

\paragraph{Instantiating the Information Density.} The Information-Time MDP specifies the role of the state-wise information
density $\rho(s)$, but leaves its functional form unspecified. Because
$\rho(s_t)$ determines the information-time increment assigned to the
transition from $s_t$ to $s_{t+1}$, its practical instantiation should capture the local uncertainty that remains in resolving that transition under the current policy.
In autoregressive language generation, conditioned on the current prefix
$s_t$, the next transition is determined by the next-token choice
$\pi_{{\mathrm{old}}}(\cdot| s_t)$. The conditional
next-token distribution therefore provides a state-local characterization
of the uncertainty over the next transition. Shannon entropy provides a
direct scalar summary of this uncertainty, because it equals the expected
surprisal of the next token under the predictive distribution. Unlike the
surprisal of a realized token, entropy is determined from the full
predictive distribution before the next token is sampled and thus defines
a state-wise quantity within the current policy update, consistent with
the role of $\rho$. A concentrated distribution yields low entropy and
indicates a locally constrained continuation, whereas a dispersed
distribution yields high entropy and reflects a broader set of plausible
continuations. Moreover, entropy can be computed directly from the policy
distribution at the current state, varies continuously with the predictive distribution, and requires neither future trajectory information nor
additional supervision. We therefore use the normalized next-token entropy of the frozen old policy as a tractable proxy for $\rho(s_t)$.

\begin{figure*}[t]
    \centering


    \begin{subfigure}[b]{0.48\textwidth}
        \centering
        \includegraphics[width=\textwidth]{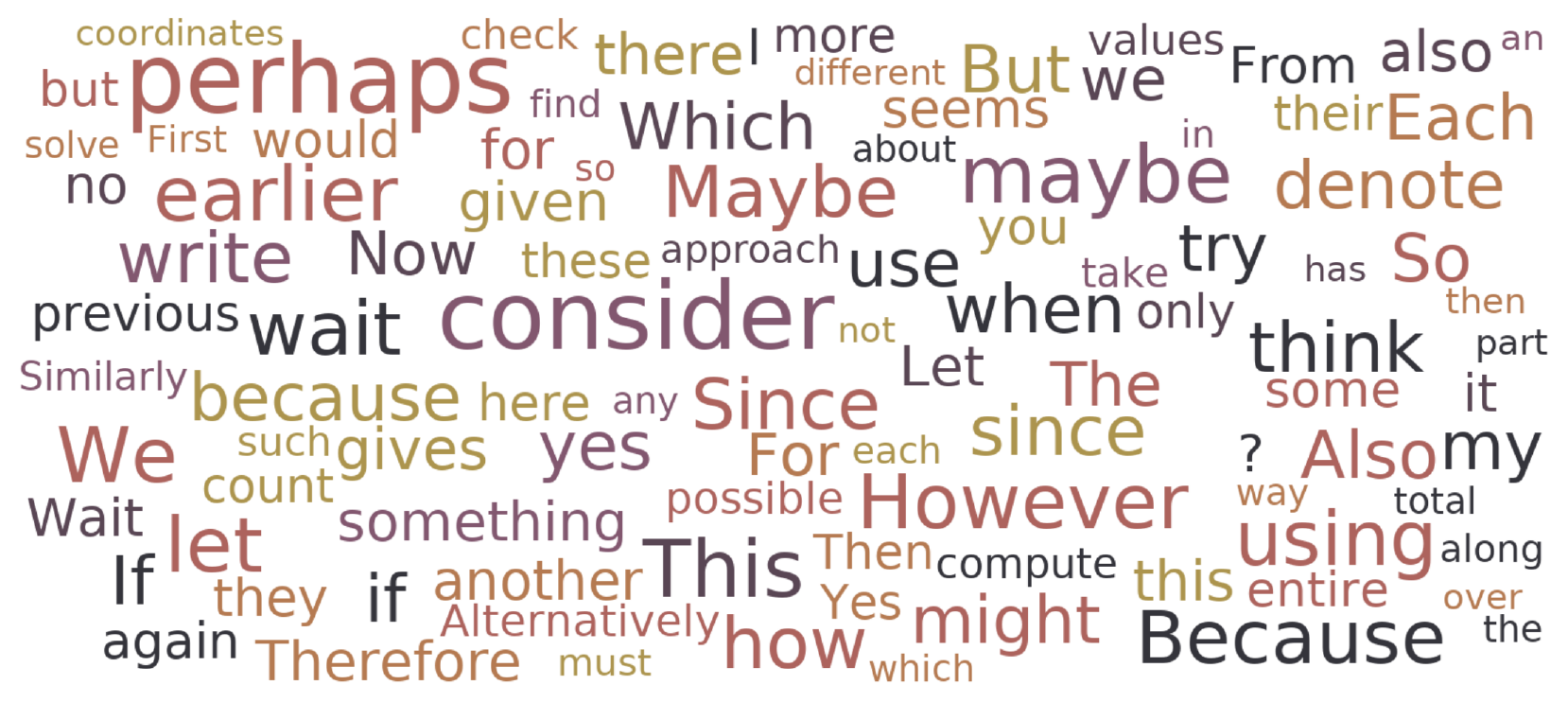}
        \vspace{-0.1in}
        \caption{Highest average predictive entropy.}
        \label{fig:high_entropy_wordcloud}
    \end{subfigure}
    \hfill
    \begin{subfigure}[b]{0.48\textwidth}
        \centering
        \includegraphics[width=\textwidth]{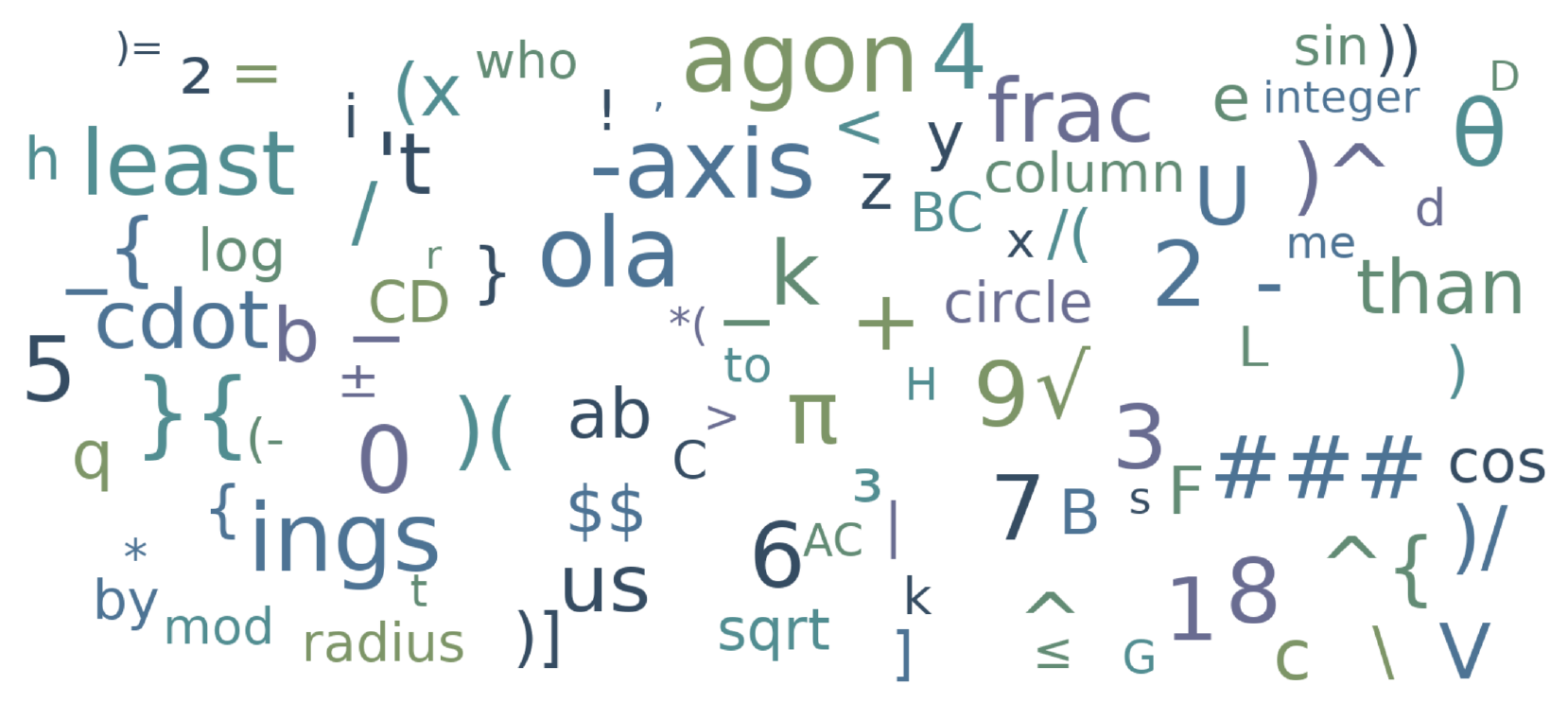}
        \vspace{-0.1in}
        \caption{Lowest average predictive entropy.} \label{fig:low_entropy_wordcloud}
    \end{subfigure}

    \caption{
\textbf{Predictive entropy patterns in mathematical reasoning.}
We visualize up to 100 tokens with the highest and lowest average predictive
entropy in Qwen3-4B, 8B, and 14B responses to AIME24 and AIME25.
To reduce noise in the average entropy estimates, we only include tokens
occurring more than 100 times in every model.} \label{fig:predictive_entropy_wordclouds}
\end{figure*}

Figure~\ref{fig:predictive_entropy_wordclouds} provides a qualitative view of
token patterns associated with different levels of predictive entropy in
mathematical reasoning. Tokens with higher average entropy tend to occur in
discourse and reasoning transitions, whereas lower entropy is more common in
mathematical symbols, numerals, and predictable subword fragments. This
contrast supports predictive entropy as a practical proxy for local information
density, since higher entropy reflects a broader set of plausible next-token
continuations, while lower entropy indicates more constrained transitions.
Figures~\ref{app_fig:res_entropy_1}--\ref{app_fig:res_entropy_6} further illustrate this local variation
along individual reasoning trajectories.

Formally, for each state $s_t$ visited under the frozen old policy,
we define its predictive entropy as
\begin{equation}
\mathcal H_{\mathrm{old}}(s_t) \coloneqq \mathcal{H}(\pi_{\text{old}}(\cdot|s_t))
=
-\sum_{a\in\mathcal V}
\pi_{{\mathrm{old}}}(a|s_t)
\log
\pi_{{\mathrm{old}}}(a|s_t),
\end{equation}
and instantiate the information density as
\begin{equation}
\rho(s_t)
=
\frac{\mathcal H_{\mathrm{old}}(s_t)}
{\mathcal H_{\max}},
\end{equation}
where $\mathcal H_{\max}>0$ is a normalization scale held fixed
throughout the current policy update, so that $\rho(s_t)\in[0,1]$
over the states under consideration. This construction makes $\rho$
well defined and fixed within a single policy update, while allowing it to adapt across policy iterations as the policy's predictive uncertainty changes. Connecting this practical
construction to the preceding analysis requires addressing two
additional points. Theorem~\ref{theoremLB} assumes that policy
divergence scales with $\rho(s)$, so this relation must first be
verified for the entropy construction. Moreover, the theorem compares
policies under a common information density, whereas the practical
algorithm recomputes $\rho$ after each policy update. We therefore also
need to quantify the change in information-time return introduced by
this recomputation.

We first verify the required divergence relation. For a fixed state $s$, the action-dependent
term of the surrogate in Eq.~\eqref{eq:surrogate} admits the equivalent
importance-ratio representation
\begin{equation}
\sum_{a}
\tilde{\pi}(a|s)
A_{\pi_{{\mathrm{old}}}}^{\mathrm{info}}(s,a)
=
\mathbb{E}_{a\sim\pi_{{\mathrm{old}}}(\cdot|s)}
\left[
\frac{\tilde{\pi}(a|s)}
{\pi_{{\mathrm{old}}}(a|s)}
A_{\pi_{{\mathrm{old}}}}^{\mathrm{info}}(s,a)
\right].
\label{eq:conditional_info_surrogate}
\end{equation}
Because the divergence condition in Eq.~\eqref{eq:tv_divergence} is imposed separately at each state, we characterize the policy movement induced by this conditional objective at a fixed $s$.

\begin{restatable}{proposition}{propositionEntropyMovement}
\label{proposition:entropy_policy_movement}
Suppose that $\pi_{\mathrm{old}}$ is parameterized by a softmax
distribution and that $\pi_{\mathrm{new}}$ is obtained by one local
gradient step on the conditional surrogate in
Eq.~\eqref{eq:conditional_info_surrogate}, with the local step size
bounded above by $\bar{\beta}$. Then, for every state $s$,
\begin{align}
D_{\mathrm{TV}}
\left(
\pi_{\mathrm{new}}(\cdot|s),
\pi_{\mathrm{old}}(\cdot|s)
\right)
\leq
\frac{\bar{\beta}C}{2}
\left(
1-\exp\{-\mathcal H_{\mathrm{old}}(s)\}
\right)
\leq
\frac{\bar{\beta}C\mathcal H_{\max}}{2}
\rho(s),
\label{eq:entropy_local_tv}
\end{align}
where $C = \max_{s,a}|A_{\pi_{\text{old}}}^{\text{info}}(s,a)|$. Hence, with $
\alpha=\bar{\beta}C\mathcal H_{\max}/{2}$,
the local update satisfies the state-dependent divergence scaling in
Eq.~\eqref{eq:tv_divergence}.
\end{restatable}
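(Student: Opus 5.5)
The plan is to make the "one local gradient step" explicit in logit space, compute the induced first-order change of the softmax distribution, and bound its $\ell_1$ norm by a collision-probability quantity. That quantity is then controlled by the Shannon entropy.

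\textbf{Step 1: the logit step.} Fix $s$ and write $\pi_a=\pi_{\mathrm{old}}(a|s)$, $A_a=A^{\mathrm{info}}_{\pi_{\mathrm{old}}}(s,a)$, and $\pi_\theta(\cdot|s)=\mathrm{softmax}(z)$. I will use the zero-mean identity $\sum_a\pi_aA_a=0$, which holds because $A^{\mathrm{info}}=Q^{\mathrm{info}}-V^{\mathrm{info}}$ and $V^{\mathrm{info}}$ is the $\pi_{\mathrm{old}}$-average of $Q^{\mathrm{info}}$. The gradient of Eq.~\eqref{eq:conditional_info_surrogate} at $z=z_{\mathrm{old}}$ is
\[
\partial_{z_b}=\pi_b\Bigl(A_b-\textstyle\sum_a\pi_aA_a\Bigr)=\pi_bA_b .
\]
A step of size $\beta\le\bar\beta$ therefore gives $\Delta z_b=\beta\pi_bA_b$.

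\textbf{Step 2: the induced probability change.} Using the softmax Jacobian, the first-order change is
\[
\Delta\pi_a=\pi_a\Bigl(\Delta z_a-\textstyle\sum_b\pi_b\Delta z_b\Bigr)=\beta\pi_a\Bigl(\pi_aA_a-\textstyle\sum_b\pi_b^2A_b\Bigr).
\]
I treat "local step" as licensing this linearization; the higher-order remainder is $O(\beta^2)$. I will state this explicitly, or absorb the remainder into the bound on $\bar\beta$.

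\textbf{Step 3: the key $\ell_1$ estimate.} Write $\pi_aA_a-\sum_b\pi_b^2A_b=\sum_b\pi_b(\pi_aA_a-\pi_bA_b)$. Because the $b=a$ term vanishes,
\[
\sum_a|\Delta\pi_a|\le\beta\sum_{a\neq b}\pi_a\pi_b\,|\pi_aA_a-\pi_bA_b|.
\]
The crude bound $|\pi_aA_a-\pi_bA_b|\le C(\pi_a+\pi_b)$ gives $2\beta C\sum_a\pi_a^2(1-\pi_a)\le 2\beta C(1-\sum_a\pi_a^2)$. This yields $D_{\mathrm{TV}}\le\beta C(1-\sum_a\pi_a^2)$.

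To reach the stated constant $\tfrac{\bar\beta C}{2}$, I would try two refinements first:
\begin{itemize}
\item the sharper bound $|\pi_aA_a-\pi_bA_b|\le C\max(\pi_a,\pi_b)$ for $a\neq b$, which is valid when $A_a$ and $A_b$ have the same sign, combined with a sign-splitting argument that uses $\sum_a\pi_aA_a=0$;
\item the identity $\sum_a\Delta\pi_a=0$, so that $D_{\mathrm{TV}}=\sum_a(\Delta\pi_a)_+$ and only the positive part has to be bounded.
\end{itemize}
Both degenerate cases behave correctly. In the deterministic case $\pi_1=1$ we have $A_1=0$, so no movement occurs. In the uniform case the bound scales as $1-1/|\mathcal V|$.

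\textbf{Step 4: from collision probability to entropy.} Rényi-2 entropy is at most Shannon entropy, that is, $-\log\sum_a\pi_a^2\le\mathcal H_{\mathrm{old}}(s)$ (Jensen applied to $-\log$). Hence $\sum_a\pi_a^2\ge e^{-\mathcal H_{\mathrm{old}}(s)}$, and so
\[
1-\sum_a\pi_a^2\le1-e^{-\mathcal H_{\mathrm{old}}(s)}.
\]
This gives the first inequality in Eq.~\eqref{eq:entropy_local_tv}, using $\beta\le\bar\beta$.

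The second inequality follows from $1-e^{-x}\le x$ together with $\mathcal H_{\mathrm{old}}(s)=\mathcal H_{\max}\rho(s)$. Reading off $\alpha=\bar\beta C\mathcal H_{\max}/2$ then gives Eq.~\eqref{eq:tv_divergence}.

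\textbf{Main obstacle.} I expect Step 3 to be the hard part: getting the sharp factor $\tfrac12$ while making the linearization in Step 2 rigorous. If the sharp constant resists, the fallback is to prove the bound with constant $\bar\beta C$. The qualitative conclusion, that divergence scales with $\rho(s)$, is unchanged and only $\alpha$ doubles.
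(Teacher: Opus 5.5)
Your Steps 1 and 4 match the paper's. Step 2, however, is a genuine gap. The proposition is a non-asymptotic inequality for every step size $\beta\le\bar\beta$, but your estimate controls only the linearized change $J(z_{\mathrm{old}})\Delta z$. Absorbing an $O(\beta^2)$ remainder into $\bar\beta$ would need an extra smallness hypothesis and would alter the constant, so neither your main bound nor your fallback is actually proved.

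The obstruction is structural. You extract the factor $1-\sum_a\pi_a^2$ from the Jacobian weights at $\pi_{\mathrm{old}}$. At an intermediate point $z_\xi=z_{\mathrm{old}}+\xi\Delta z$ the Jacobian weights are $\pi_{z_\xi}$, and their pairing with the factor $\pi_a$ in $\Delta z_a$ is lost, so the first-order computation does not integrate.

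The missing idea, which is the paper's route, is to put the entropy dependence into the logit displacement itself.
\begin{itemize}
\item From $\sum_b\pi_bA_b=0$ you get $A_a=\sum_{b\neq a}\pi_b(A_a-A_b)$. Hence $|A_a|\le 2C(1-\pi_a)$ and $\|\Delta z\|_1=\beta\sum_a\pi_a|A_a|\le 2\beta C(1-\sum_a\pi_a^2)$.
\item After that, only a uniform Jacobian bound is needed. At any logits with softmax $p$, the $j$-th column of the softmax Jacobian has absolute sum $2p_j(1-p_j)\le\frac12$, so $\|J(\xi)\|_{1\to1}\le\frac12$ for every $\xi\in[0,1]$.
\item The fundamental theorem of calculus along the segment then gives $\|\pi_{\mathrm{new}}(\cdot|s)-\pi_{\mathrm{old}}(\cdot|s)\|_1\le\frac12\|\Delta z\|_1$ exactly, with no remainder.
\item Hence $D_{\mathrm{TV}}\le\frac14\|\Delta z\|_1\le\frac{\beta C}{2}(1-\sum_a\pi_a^2)$, and your Step 4 (Rényi-2 entropy at most Shannon entropy, then $1-e^{-x}\le x$) finishes the argument.
\end{itemize}

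Separately, the factor-of-two problem you flag as the main obstacle is self-inflicted. For ordered pairs $a\neq b$ you have $\pi_a+\pi_b\le 1$, so $\sum_{a\neq b}\pi_a\pi_b(\pi_a+\pi_b)\le\sum_{a\neq b}\pi_a\pi_b=1-\sum_a\pi_a^2$. Your crude bound therefore already gives $\|\Delta\pi\|_1\le\beta C(1-\sum_a\pi_a^2)$ at first order. That is the sharp constant $\frac{\beta C}{2}$ for $D_{\mathrm{TV}}$, with no sign-splitting required. The factor was lost when you replaced $\sum_a\pi_a^2(1-\pi_a)$ by $\sum_a\pi_a(1-\pi_a)$ instead of by $\frac12\sum_a\pi_a(1-\pi_a)$. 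This repairs the constant but not the linearization gap above.
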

The proof is provided in
Appendix~\ref{app:proof_policy_optimization}.

Proposition~\ref{proposition:entropy_policy_movement} resolves the first
point by establishing the required divergence scaling for the entropy
construction. We now turn to the effect of recomputing the information
density after the policy update. Let $\pi_k$ denote the policy before
an update and let $\rho_k$ be the information density computed from it.
During the update from $\pi_k$ to $\pi_{k+1}$, $\rho_k$ remains fixed,
so Theorem~\ref{theoremLB} applies directly to the comparison between
these two policies under the same information density. Once the update
is complete, the density is recomputed from $\pi_{k+1}$, yielding
$\rho_{k+1}$. We therefore quantify the change in information-time
return caused solely by replacing $\rho_k$ with $\rho_{k+1}$ while
keeping $\pi_{k+1}$ fixed. For this
comparison, we write $\eta_{\rho}^{\mathrm{info}}(\pi)$ for the
information-time return of $\pi$ evaluated using density $\rho$.
The quantities $\eta_{\rho_k}^{\mathrm{info}}(\pi_{k+1})$ and
$\eta_{\rho_{k+1}}^{\mathrm{info}}(\pi_{k+1})$ therefore isolate the
effect of recomputing $\rho$ while keeping the policy fixed.

\begin{proposition}
\label{prop:recomputed_density_bound}
Suppose that the terminal reward is bounded by
$|R_T|\leq R_{\max}<\infty$.
Under the update from $\pi_k$ to $\pi_{k+1}$ in
Proposition~\ref{proposition:entropy_policy_movement}, the change in
information-time return caused by recomputing the information density
satisfies
\begin{equation}
\left|
\eta_{\rho_{k+1}}^{\mathrm{info}}(\pi_{k+1})
-
\eta_{\rho_k}^{\mathrm{info}}(\pi_{k+1})
\right|
\leq
\frac{8R_{\max}}{e}\alpha.
\label{eq:recomputed_density_perturbation_bound}
\end{equation}
\end{proposition}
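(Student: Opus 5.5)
The plan is to exploit the sparse-reward structure in Eq.~\eqref{eq:sparse_reward}, which collapses the information-time return to a single discounted terminal term. For any density $\rho$,
\[
\eta_{\rho}^{\mathrm{info}}(\pi_{k+1})=\mathbb{E}_{\tau\sim\pi_{k+1}}\big[\gamma^{X_\rho(\tau)}R_T\big],\qquad X_\rho(\tau)\coloneqq\sum_{t=0}^{T-1}\rho(s_t).
\]
Both returns are evaluated under the same policy $\pi_{k+1}$, so they share the trajectory distribution. Hence
\[
\bigl|\eta_{\rho_{k+1}}^{\mathrm{info}}(\pi_{k+1})-\eta_{\rho_k}^{\mathrm{info}}(\pi_{k+1})\bigr|\le R_{\max}\,\sup_\tau\bigl|\gamma^{X_{k+1}(\tau)}-\gamma^{X_k(\tau)}\bigr|,
\]
and the problem reduces to a pathwise, deterministic estimate. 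I write $X_k, X_{k+1}$ for the accumulated information under $\rho_k$ and $\rho_{k+1}$ along a fixed trajectory.

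The second step is a per-state \emph{relative} bound on how much the density moves. I would aim to show
\[
|\rho_{k+1}(s)-\rho_k(s)|\le c\,\alpha\,\rho_k(s)
\]
for a small absolute constant $c$, which must be $4$ for the stated constant to come out. The ingredients are Proposition~\ref{proposition:entropy_policy_movement}, which gives $D_{\mathrm{TV}}(\pi_{k+1}(\cdot|s),\pi_k(\cdot|s))\le \alpha\rho_k(s)$, together with the same softmax single-step structure used in its proof. Concretely, I would write $\pi_{k+1}\propto\pi_k e^{\beta g}$ with $|g|\le C$ and bound $\bigl|\tfrac{d}{d\beta}\mathcal H\bigr|=\beta\,\mathrm{Var}_\pi(\log\pi,g)$-type terms by quantities proportional to $1-e^{-\mathcal H_{\mathrm{old}}}\le\mathcal H_{\mathrm{old}}$. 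Dividing by $\mathcal H_{\max}$ then yields the relative form. Summing over the trajectory gives $|X_{k+1}-X_k|\le c\,\alpha X_k$, and in particular $X_{k+1}\ge(1-c\alpha)X_k$.

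The third step is a mean-value argument. Let $\lambda=\ln(1/\gamma)$, and take $\xi$ between $X_k$ and $X_{k+1}$. Then
\[
|\gamma^{X_{k+1}}-\gamma^{X_k}|=\lambda\gamma^{\xi}|X_{k+1}-X_k|\le c\,\alpha\,\lambda X_k\,\gamma^{\min(X_k,X_{k+1})}.
\]
I would split into two cases. If $X_{k+1}\ge X_k$, the right-hand side is $c\alpha\,(\lambda X_k)e^{-\lambda X_k}\le c\alpha/e$, using $\sup_{y\ge0}ye^{-y}=1/e$. Otherwise $X_{k+1}\ge X_k/2$, either because $c\alpha\le 1/2$ or trivially because the difference of two numbers in $[0,1]$ is at most $1$ when $\alpha$ is large. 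In that case I would substitute $y=\lambda X_k/2$ to get $\lambda X_k\gamma^{X_k/2}=2ye^{-y}\le 2/e$. Both cases together give $2c\alpha/e=8\alpha/e$ with $c=4$, which matches $\frac{8R_{\max}}{e}\alpha$. If $c$ comes out differently, I would absorb the factor by adjusting where the $\min$ is split. The bound is uniform in $\gamma$ and $T$, as the statement requires.

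The main obstacle is the second step: converting a TV bound into a \emph{multiplicative} control of entropy change. Generic entropy continuity (Fannes-type) carries a $\log|\mathcal V|$ factor and is only additive. I expect I will have to reuse the explicit softmax gradient-step form from the proof of Proposition~\ref{proposition:entropy_policy_movement}, rather than TV alone, so that near-deterministic states provably keep their entropy change proportional to their current entropy.
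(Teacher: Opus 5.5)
You reduce the problem to a pathwise comparison of $\gamma^{X_{k+1}(\tau)}$ and $\gamma^{X_k(\tau)}$ under the common law of $\pi_{k+1}$, exactly as the paper does first, and you correctly locate the difficulty in Step 2. \textbf{But that step is left as a sketch, and the sketch lacks the idea that makes it work.} A factor proportional to $1-e^{-\mathcal H_{\mathrm{old}}(s)}$ controls only the logit displacement $\Delta z_k(s,\cdot)$. The other factor in $\frac{d}{d\xi}\mathcal H_{z_\xi}(s)$, along $z_\xi=z_k+\xi\Delta z_k$, depends on the intermediate distribution $\pi_{z_\xi}$. Its entropy is not a priori controlled by $\mathcal H_{\mathrm{old}}(s)$, and bounding it by a constant brings back a vocabulary-dependent factor.

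The paper's key estimate is that the logit-gradient of entropy is bounded by the entropy at the same point: $\|\nabla_z\mathcal H_z(s)\|_1=\sum_a\pi_z(a|s)\,|\log\pi_z(a|s)+\mathcal H_z(s)|\le 2\mathcal H_z(s)$. One then integrates $\frac{d}{d\xi}\log\mathcal H_{z_\xi}(s)$ instead, a Gr\"onwall-type step. Combined with $\|\Delta z_k(s,\cdot)\|_\infty\le\|\Delta z_k(s,\cdot)\|_1\le 2\bar{\beta}C\mathcal H_k(s)=4\alpha\rho_k(s)$ from the proof of Proposition~\ref{proposition:entropy_policy_movement}, this gives $|\log(\rho_{k+1}(s)/\rho_k(s))|\le 8\alpha\rho_k(s)$. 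So the natural output is a \emph{multiplicative} bound with constant $8$, not an additive relative bound with $c=4$. Read additively it is $c\approx 8$, for which your split yields $16\alpha/e$. Re-splitting cannot rescue this: under a purely additive constraint with $\delta=c\alpha$, the decreasing case has worst case $\delta(1-\delta)^{(1-\delta)/\delta}>\delta/e$.

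\textbf{Step 3 also does not close as written, even granting $c=4$.} In the decreasing case with $1/2<c\alpha<e/2$, neither subcase applies. The inequality $X_{k+1}\ge X_k/2$ is no longer guaranteed, and the trivial bound $1$ exceeds $2c\alpha/e$. This is a real obstruction, not bookkeeping. At $c\alpha=1$ the additive bound allows $X_{k+1}=0$, and then $1-\gamma^{X_k}$ can be arbitrarily close to $1>2/e=8\alpha/e$.

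The repair is the paper's route. Keep the multiplicative form and sum it along the trajectory to get $|\log X_{k+1}(\tau)-\log X_k(\tau)|\le 8\alpha\max_t\rho_k(s_t)\le 8\alpha$; the case $X_k=0$ forces $X_{k+1}=0$. Then apply the log-Lipschitz inequality $|\gamma^x-\gamma^y|\le e^{-1}|\log x-\log y|$ for $x,y>0$. It follows from the mean value theorem for $u\mapsto\gamma^{e^u}$, whose derivative has modulus $qe^{-q}\le e^{-1}$ with $q=e^u\log(1/\gamma)$. This gives $8\alpha/e$ uniformly in $\alpha$, $\gamma$, and $T$, with no case analysis. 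The inequality integrates $ye^{-y}$ along $\log X$ rather than evaluating $\gamma^{\xi}$ at the worse endpoint, and that is exactly what saves the factor $2$ your half-split loses.
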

Since $\alpha=\bar{\beta}C\mathcal H_{\max}/2$, the effect of
recomputing $\rho$ is directly controlled by the local step-size bound $\bar{\beta}$. Together with Theorem~\ref{theoremLB}, this separates the policy update performed with $\rho_k$ fixed from the subsequent variation introduced by recomputing $\rho$. The proof is provided in Appendix~\ref{app:recomputed_density_bound}.

The preceding result controls the perturbation caused by a single
information-clock refresh. Across multiple policy iterations, however,
the information density is repeatedly recomputed. To compare policy
iterates on a common basis, we evaluate them under the same reference
clock. The following proposition establishes a lower bound for this
common-clock comparison.

\begin{proposition}
\label{prop:cross_iteration_clock_ordering}
Let $\pi_x$ and $\pi_y$, with $x<y$, be two policy iterates from the
same sequence of updates, and let $\rho_r$ be any fixed
information-density snapshot used as a common reference clock.
Suppose that $|R_T|\le R_{\max}<\infty$. Define the cumulative certified gain between the two iterates as
\begin{equation}
\mathcal G_{x,y}
\coloneqq
\sum_{t=x}^{y-1}
\left[
L_{\pi_t}^{\mathrm{info}}(\pi_{t+1})
-
\eta_{\rho_t}^{\mathrm{info}}(\pi_t)
-
\frac{4C_t\alpha_t^2}{(1-\gamma)^2}
-
\frac{8R_{\max}}{e}\alpha_t
\right],
\label{eq:cumulative_certified_gain}
\end{equation}
where $C_t$ and $\alpha_t$ denote the corresponding quantities for
the update $\pi_t\rightarrow\pi_{t+1}$, and all information-time quantities in the $t$-th summand are evaluated under $\rho_t$. For a trajectory $\tau=(s_0,a_0,\ldots,s_T)$, define $
\bar\rho_j(\tau)
\coloneqq
\frac{1}{T}
\sum_{u=0}^{T-1}\rho_j(s_u),
$
and
$
\varepsilon_{x,y}^{(r)}
\coloneqq
\frac{1}{2}
\sum_{j\in\{x,y\}}
\mathbb E_{\tau\sim\pi_j}
\left[
\left|
\log
\frac{\bar\rho_r(\tau)}
{\bar\rho_j(\tau)}
\right|
\right]
$.
Then
\begin{equation}
\eta_{\rho_r}^{\mathrm{info}}(\pi_y)
-
\eta_{\rho_r}^{\mathrm{info}}(\pi_x)
\ge
\mathcal G_{x,y}
-
\frac{2R_{\max}}{e}
\varepsilon_{x,y}^{(r)}.
\label{eq:cross_iteration_clock_ordering}
\end{equation}
\end{proposition}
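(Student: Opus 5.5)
The plan is to telescope one-step certificates under the iterate-specific clocks $\rho_t$, and then pay a clock-conversion cost only at the two endpoints $x$ and $y$. Throughout, I use the sparse terminal-reward structure of Eq.~\eqref{eq:sparse_reward}. For a finite trajectory it gives
\[
\eta_{\rho}^{\mathrm{info}}(\pi)=\mathbb E_{\tau\sim\pi}\bigl[\gamma^{T\bar\rho(\tau)}R_T\bigr],
\]
because $\Gamma^{\mathrm{info}}(0,T)=\gamma^{\sum_{u=0}^{T-1}\rho(s_u)}=\gamma^{T\bar\rho(\tau)}$.

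\textbf{Step 1 (one-step chain).} Fix $t\in\{x,\dots,y-1\}$. During the update $\pi_t\to\pi_{t+1}$ the density $\rho_t$ is frozen. Proposition~\ref{proposition:entropy_policy_movement} supplies the divergence condition~\eqref{eq:tv_divergence} with $\alpha_t$, so Theorem~\ref{theoremLB}, applied under $\rho_t$, gives
\[
\eta_{\rho_t}^{\mathrm{info}}(\pi_{t+1})\ge L_{\pi_t}^{\mathrm{info}}(\pi_{t+1})-\frac{4C_t\alpha_t^2}{(1-\gamma)^2}.
\]
Proposition~\ref{prop:recomputed_density_bound} then controls the clock refresh:
\[
\eta_{\rho_{t+1}}^{\mathrm{info}}(\pi_{t+1})\ge\eta_{\rho_t}^{\mathrm{info}}(\pi_{t+1})-\frac{8R_{\max}}{e}\alpha_t.
\]
Subtracting $\eta_{\rho_t}^{\mathrm{info}}(\pi_t)$ from both sides shows that $\eta_{\rho_{t+1}}^{\mathrm{info}}(\pi_{t+1})-\eta_{\rho_t}^{\mathrm{info}}(\pi_t)$ is at least the $t$-th summand of $\mathcal G_{x,y}$.

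\textbf{Step 2 (telescoping).} Summing the Step 1 inequality over $t=x,\dots,y-1$ collapses the left-hand side to
\[
\eta_{\rho_y}^{\mathrm{info}}(\pi_y)-\eta_{\rho_x}^{\mathrm{info}}(\pi_x)\ge\mathcal G_{x,y}.
\]
A small bookkeeping point: the statement takes $\rho_y$ to be the density recomputed from $\pi_y$, which is consistent with the refresh in Step 1.

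\textbf{Step 3 (clock conversion at the endpoints).} This is the main step. For $j\in\{x,y\}$ I aim to show
\[
\bigl|\eta_{\rho_r}^{\mathrm{info}}(\pi_j)-\eta_{\rho_j}^{\mathrm{info}}(\pi_j)\bigr|\le\frac{R_{\max}}{e}\,\mathbb E_{\tau\sim\pi_j}\Bigl[\Bigl|\log\frac{\bar\rho_r(\tau)}{\bar\rho_j(\tau)}\Bigr|\Bigr].
\]
Both returns are expectations over the same trajectory law $\pi_j$, so it suffices to bound the difference pathwise and use $|R_T|\le R_{\max}$.

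Set $z_i=-T\log\gamma\cdot\bar\rho_i(\tau)\ge0$, so that $\gamma^{T\bar\rho_i}=e^{-z_i}$. View $e^{-z}$ as a function of $w=\log z$, namely $g(w)=\exp(-e^{w})$. Its derivative is $g'(w)=-e^{w}\exp(-e^{w})$, and $\sup_{z>0} z e^{-z}=1/e$. By the mean value theorem,
\[
|e^{-z_r}-e^{-z_j}|\le\frac1e|\log z_r-\log z_j|=\frac1e\Bigl|\log\frac{\bar\rho_r}{\bar\rho_j}\Bigr|.
\]
The factor $-T\log\gamma$ cancels in the ratio, which is why the bound is free of $T$ and $\gamma$.

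Degenerate cases are handled separately:
\begin{itemize}
\item If $\gamma=1$, both discounts equal $1$ and the difference is zero.
\item If some $\bar\rho_i=0$, the log-ratio is infinite and the bound holds trivially; alternatively, one uses the convention that such trajectories are excluded.
\end{itemize}

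\textbf{Step 4 (assembly).} Write
\[
\eta_{\rho_r}^{\mathrm{info}}(\pi_y)-\eta_{\rho_r}^{\mathrm{info}}(\pi_x)=\bigl[\eta_{\rho_y}^{\mathrm{info}}(\pi_y)-\eta_{\rho_x}^{\mathrm{info}}(\pi_x)\bigr]+\bigl[\eta_{\rho_r}^{\mathrm{info}}(\pi_y)-\eta_{\rho_y}^{\mathrm{info}}(\pi_y)\bigr]-\bigl[\eta_{\rho_r}^{\mathrm{info}}(\pi_x)-\eta_{\rho_x}^{\mathrm{info}}(\pi_x)\bigr].
\]
Bound the first bracket by Step 2 and the other two by Step 3. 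The two endpoint errors add up to
\[
\frac{R_{\max}}{e}\sum_{j\in\{x,y\}}\mathbb E_{\tau\sim\pi_j}\Bigl[\Bigl|\log\frac{\bar\rho_r(\tau)}{\bar\rho_j(\tau)}\Bigr|\Bigr]=\frac{2R_{\max}}{e}\,\varepsilon_{x,y}^{(r)},
\]
which yields Eq.~\eqref{eq:cross_iteration_clock_ordering}.

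The main obstacle is Step 3. The two clocks must be compared on the same trajectory distribution, and the key is to choose the logarithmic parametrization so that the Lipschitz constant $1/e$ is uniform. Everything else follows by direct application of Theorem~\ref{theoremLB} and Proposition~\ref{prop:recomputed_density_bound}.
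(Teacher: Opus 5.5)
Your proposal is correct and follows the paper's proof essentially step for step. You chain Theorem~\ref{theoremLB} under the frozen clock $\rho_t$ with the refresh bound of Proposition~\ref{prop:recomputed_density_bound}, telescope to $\eta_{\rho_y}^{\mathrm{info}}(\pi_y)-\eta_{\rho_x}^{\mathrm{info}}(\pi_x)\ge\mathcal G_{x,y}$, and re-anchor the two endpoints via the $1/e$ log-Lipschitz bound for $\gamma^{x}$ as a function of $\log x$; your $g(w)=\exp(-e^{w})$ is the paper's $f(x)=\gamma^{e^{x}}$ with the factor $-\log\gamma$ absorbed. Your explicit treatment of $\gamma=1$ and of vanishing $\bar\rho$ is a small tidying that the paper leaves implicit, with one correction: when both $\bar\rho_r$ and $\bar\rho_j$ vanish, the two discounts coincide, so the log-ratio term should be taken to be $0$ there rather than infinite.
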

The proof is provided in
Appendix~\ref{app_prop:cross_iteration_clock_ordering}.
Here, $\varepsilon_{x,y}^{(r)}$ measures the average relative
discrepancy between the trajectory-level information densities of the
two endpoint policies and the common reference clock. Proposition~
\ref{prop:cross_iteration_clock_ordering} therefore shows that moderate
variation in the information clock induces only a limited perturbation
to comparisons between policy iterates, making their
relative comparison less sensitive to clock recomputation across
training. In the experiments, we further track the evolution of
normalized entropy throughout training to empirically assess how the
information clock evolves in practice.

\subsection{Information-Time Policy Optimization}
\label{section:implementation}

With the information clock specified, we incorporate
$\rho_t\coloneqq\rho(s_t)$ into advantage estimation and policy
optimization. Replacing uniform token-time decay in standard GAE with
information-time decay gives
\begin{equation}
\begin{aligned}
\delta_t^{\mathrm{info}}
&=
r_t+\gamma^{\rho_t}V_\phi(s_{t+1})-V_\phi(s_t),\\
\hat A_t^{\mathrm{info}}
&=
\delta_t^{\mathrm{info}}
+
(\gamma\lambda)^{\rho_t}
\hat A_{t+1}^{\mathrm{info}}.
\end{aligned}
\label{eq:info_gae}
\end{equation}
By defining the information-time trace decay as
$\Lambda^{\mathrm{info}}(t_1,t_2)
\coloneqq
\lambda^{\sum_{j=t_1}^{t_2-1}\rho_j}$,
the recursion can be equivalently written as
\begin{equation}
    \hat A_t^{\mathrm{info}}
=
\sum_{l=0}^{T-t-1}
\Gamma^{\mathrm{info}}(t,t+l)
\Lambda^{\mathrm{info}}(t,t+l)
\delta_{t+l}^{\mathrm{info}}.
\end{equation}
The information density also regulates the proximal policy update.
Theorem~\ref{theoremLB} and
Proposition~\ref{proposition:entropy_policy_movement}
motivate a state-dependent update scale that increases with $\rho_t$.
Following the PPO paradigm, we realize this update geometry through an adaptive clipping range. A direct choice would scale the bounds linearly
as $[1-\epsilon_{\mathrm{low}}^{\text{info}}\rho_t,
1+\epsilon_{\mathrm{high}}^{\text{info}}\rho_t]$. While this linear construction captures the desired dependence on information density, it can yield relatively permissive updates at high-information states. We therefore adopt a more conservative logarithmic scaling. Specifically, we define the upper and lower clipping bounds as $\xi_{\mathrm{high},t}^{\mathrm{info}}\coloneqq 1+\log\left(1+\epsilon_{\mathrm{high}}^{\mathrm{info}}\rho_t\right)$ and $\xi_{\mathrm{low},t}^{\mathrm{info}}\coloneqq\frac{1}{
1+\log\left(1+\epsilon_{\mathrm{low}}^{\mathrm{info}}\rho_t\right)
}$, respectively. Thus, the resulting clipping range expands with information density, while its logarithmic scaling moderates policy updates in high-information regions. Finally, we obtain the optimization objective of \methodabb{}:
\begin{equation}
\begin{aligned}
\mathcal{L}^{\mathrm{info\text{-}CLIP}}(\theta)
=
\mathbb{E}_{\pi_{\theta_\text{old}}}
\Bigg[
\min\Bigg(\omega_t(\theta)\hat A_t^{\mathrm{info}}, \operatorname{clip}\!\left(
\omega_t(\theta),
\xi_{\mathrm{low},t}^{\mathrm{info}},
\xi_{\mathrm{high},t}^{\mathrm{info}}
\right)
\hat A_t^{\mathrm{info}}
\Bigg)
\Bigg].
\end{aligned}
\label{eq:actor_loss}
\end{equation}

\section{Experiments}
We organize our experiments to examine three aspects of \methodabb{}. We first characterize the information-time quantities induced during training, including information density, cumulative information-time discounting, and state-dependent clipping. We then compare information-time and token-time PPO under non-trivial discount and trace-decay settings. Finally, we evaluate the complete method against representative RLVR baselines across model scales and mathematical reasoning benchmarks.

\textbf{Models and Training Settings.} We conduct our experiments on the Qwen3 series of models \citep{yang2025qwen3} with the DAPO-Math-17K dataset \citep{yu2025dapo} and compare \methodabb{} against three representative baselines: PPO \citep{schulman2017proximal}, DAPO \citep{yu2025dapo}, and DAPO-FT \citep{wang2025beyond}. For both \methodabb{} and PPO, the policy model is trained with a learning rate of $1e{-}6$ and a 10-step warmup, while the critic model employs a learning rate of $2e-6$ without warmup. 
For the reproduction of DAPO and DAPO-FT, we adhere to the official default configurations. The maximum response length is set to 10k tokens for the 4B and 8B models and 12k tokens for the 14B model, consistently during training and evaluation. Further details are provided in the Appendix~\ref{app_:implementation_details}.

\textbf{Evaluation.}
We evaluate the models on five challenging competition-style
mathematical reasoning benchmarks: AMC23~\citep{li2024numinamath},
AIME24~\citep{aime24}, AIME25~\citep{aime25},
AIME26~\citep{aime26}, and BeyondAIME~\citep{beyondaime2025}.
These benchmarks provide a challenging evaluation of mathematical
reasoning across competition-style problem settings.
All evaluations are performed in a zero-shot setting. For each prompt, we independently sample 16 responses with decoding temperature $T=1.0$ and
report the average accuracy, denoted as Mean@16.

\subsection{Empirical Characterization of the Information Clock}
\label{sec:empirical_information_time}
\begin{figure*}[ht]
    \centering


    \begin{subfigure}[b]{0.32\textwidth}
        \centering
        \includegraphics[width=\textwidth]{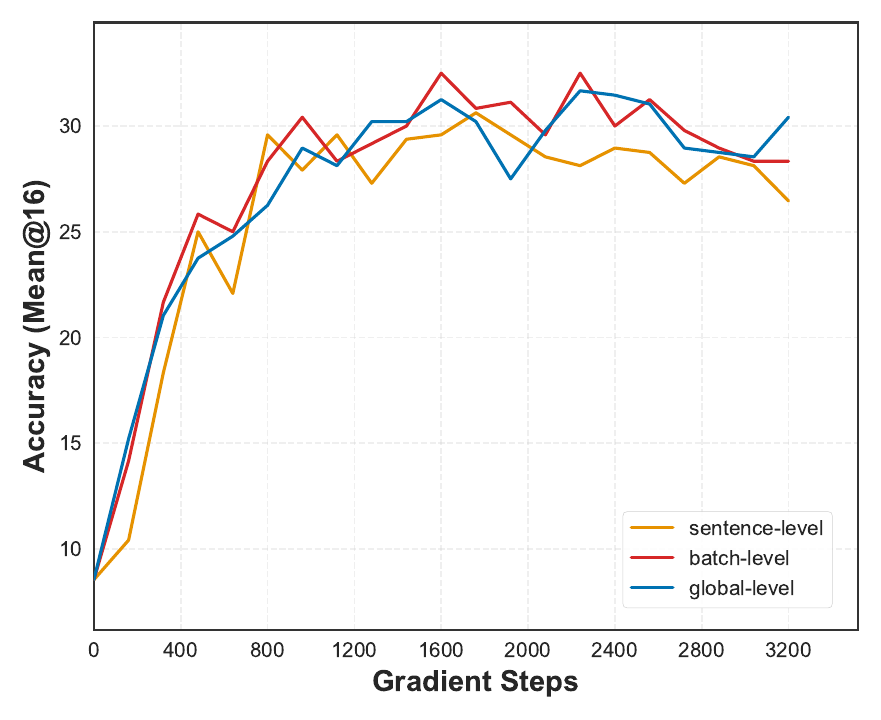}
        \vspace{-0.05in}
        \caption{AIME24 Accuracy.}
        \label{fig:entropy_norm_aime24}
    \end{subfigure}
    \hfill
    \begin{subfigure}[b]{0.32\textwidth}
        \centering
        \includegraphics[width=\textwidth]{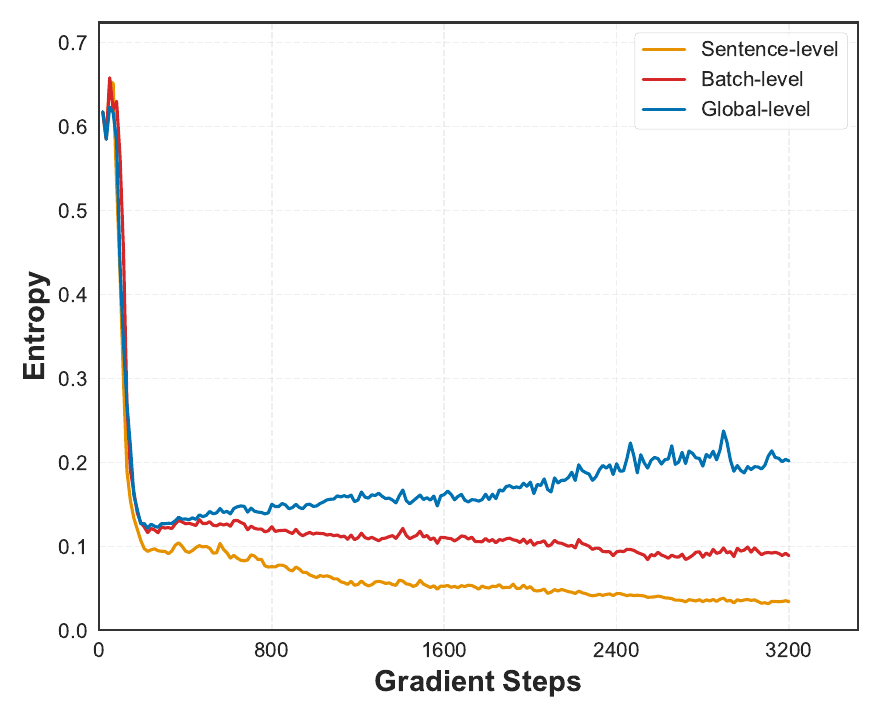}
        \vspace{-0.05in}
        \caption{Raw Predictive Entropy.}
        \label{fig:entropy_norm_raw_entropy}
    \end{subfigure}
    \hfill
    \begin{subfigure}[b]{0.32\textwidth}
        \centering
        \includegraphics[width=\textwidth]{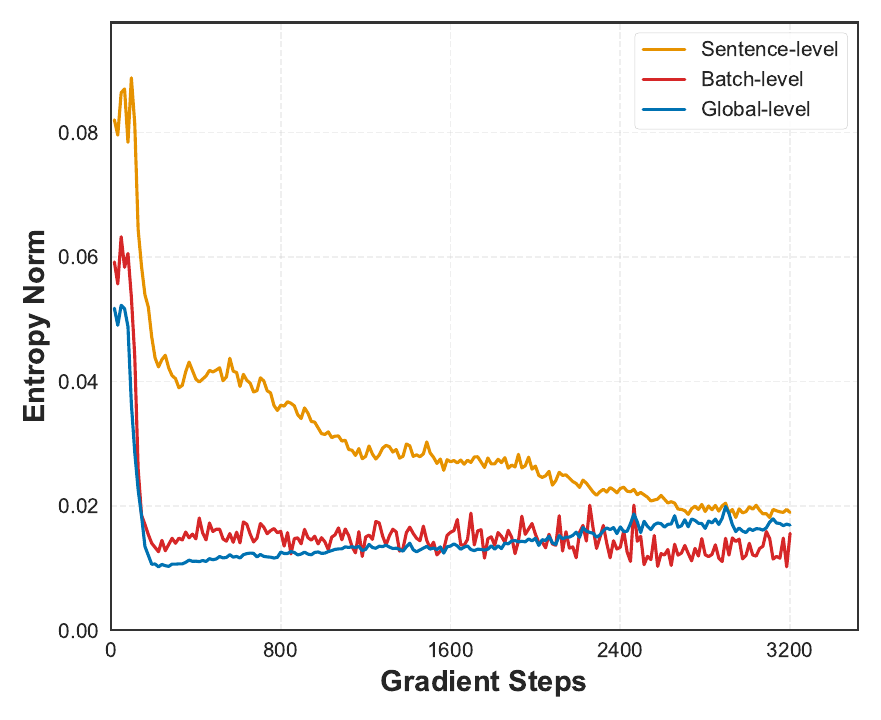}
        \vspace{-0.05in}
        \caption{Normalized Entropy.}
        \label{fig:entropy_norm_normalized_entropy}
    \end{subfigure}

    \par\vspace{0.08in}


    \begin{subfigure}[b]{0.32\textwidth}
        \centering
        \includegraphics[width=\textwidth]{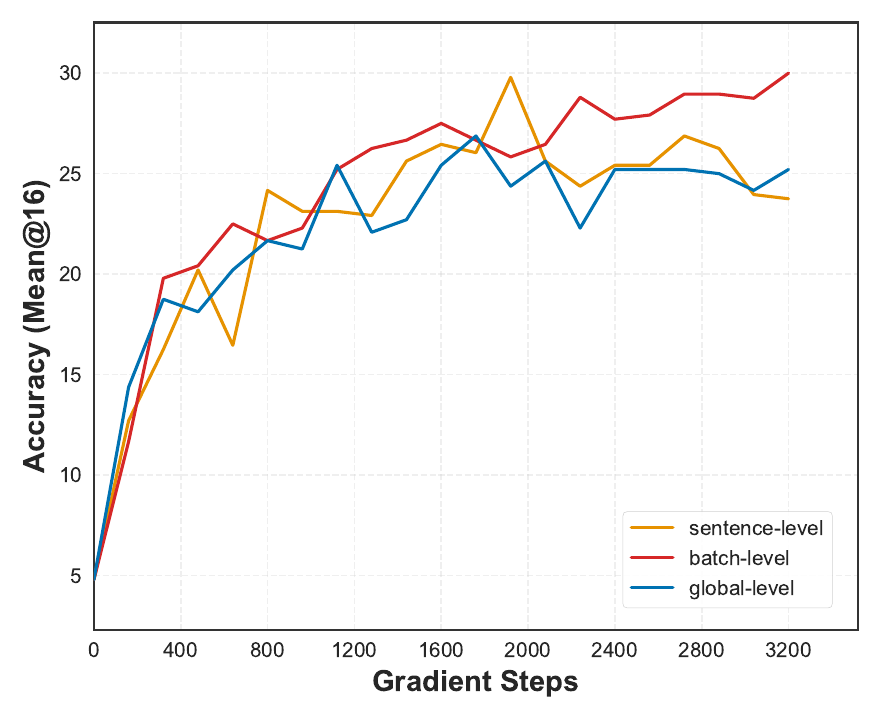}
        \vspace{-0.05in}
        \caption{AIME25 Accuracy.}
        \label{fig:entropy_norm_aime25}
    \end{subfigure}
    \hfill
    \begin{subfigure}[b]{0.32\textwidth}
        \centering
        \includegraphics[width=\textwidth]{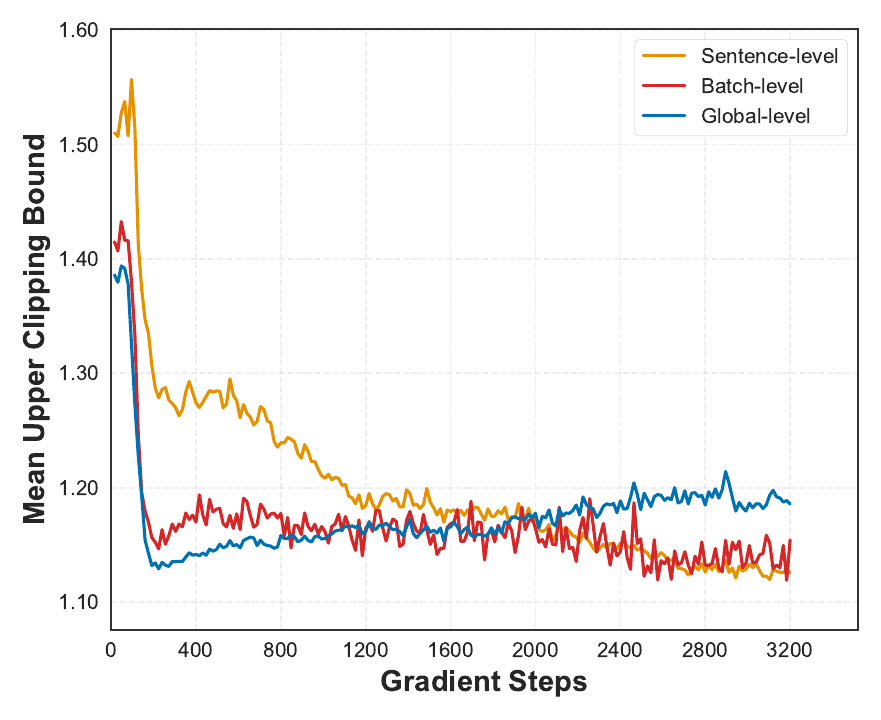}
        \vspace{-0.05in}
        \caption{Mean Upper Clipping Bound.}
        \label{fig:entropy_norm_clip_upper}
    \end{subfigure}
    \hfill
    \begin{subfigure}[b]{0.32\textwidth}
        \centering
        \includegraphics[width=\textwidth]{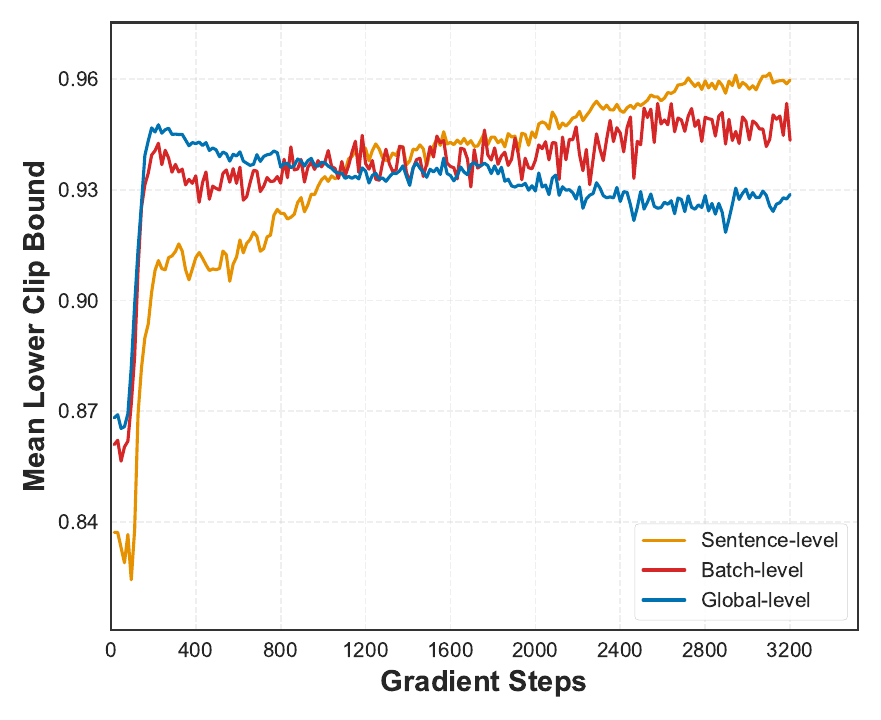}
        \vspace{-0.05in}
        \caption{Mean Lower Clipping Bound.}
        \label{fig:entropy_norm_clip_lower}
    \end{subfigure}

    \caption{
Effects of Entropy Normalization.
We empirically compare alternative normalization schemes by examining their effects on normalized entropy, adaptive clipping, and performance.
}
    \label{fig:entropy_normalization_effects}

\end{figure*}

\begin{figure*}[ht]
    \centering


    \begin{subfigure}[b]{0.32\textwidth}
        \centering
        \includegraphics[width=\textwidth]{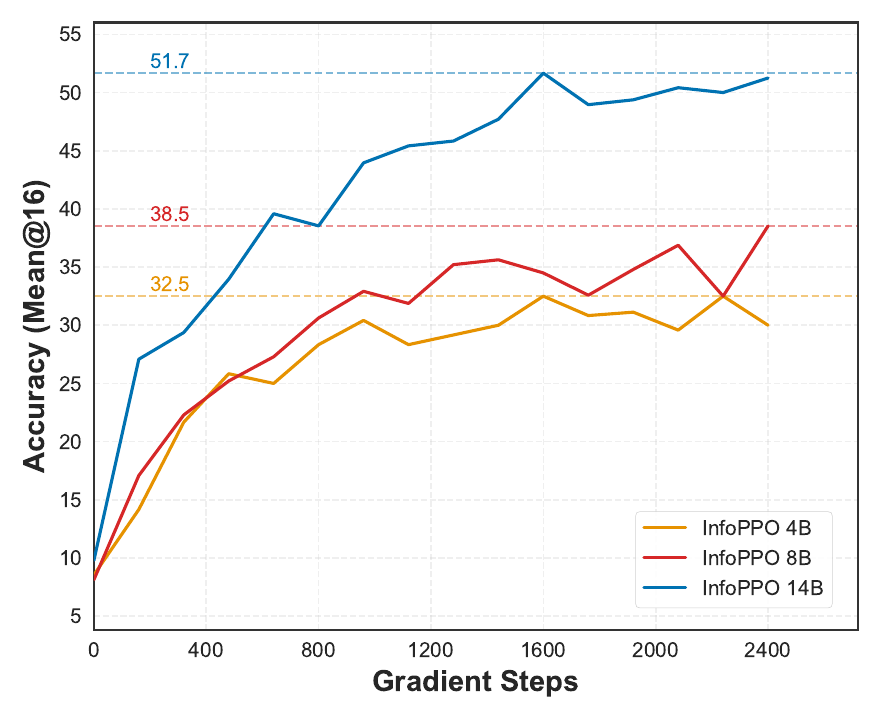}
        \vspace{-0.05in}
        \caption{AIME24 Accuracy.}
        \label{fig:info_time_aime24}
    \end{subfigure}
    \hfill
    \begin{subfigure}[b]{0.32\textwidth}
        \centering
        \includegraphics[width=\textwidth]{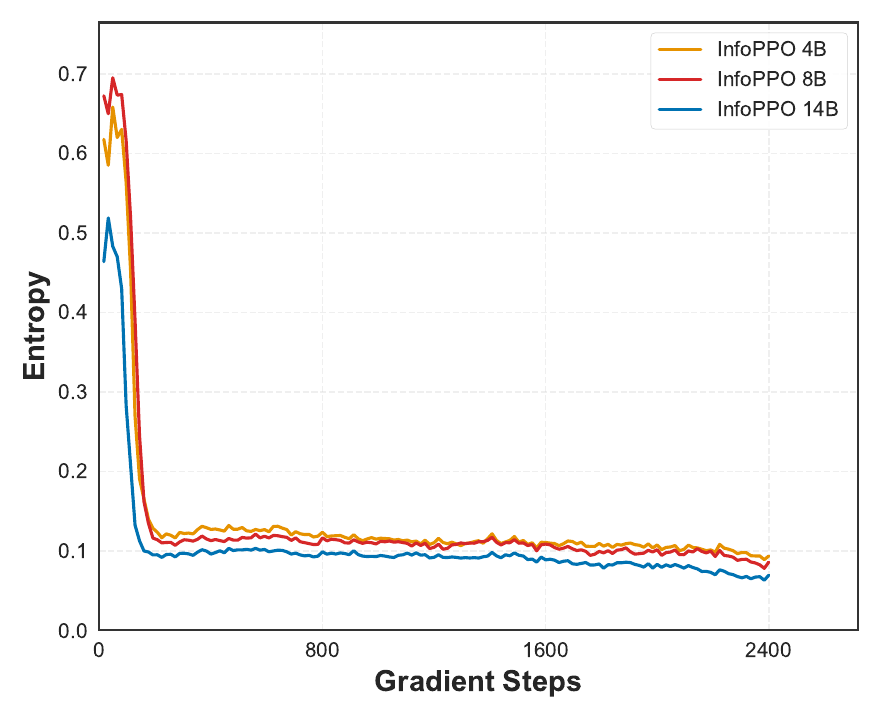}
        \vspace{-0.05in}
        \caption{Raw Predictive Entropy.}
        \label{fig:info_time_raw_entropy}
    \end{subfigure}
    \hfill
    \begin{subfigure}[b]{0.32\textwidth}
        \centering
        \includegraphics[width=\textwidth]{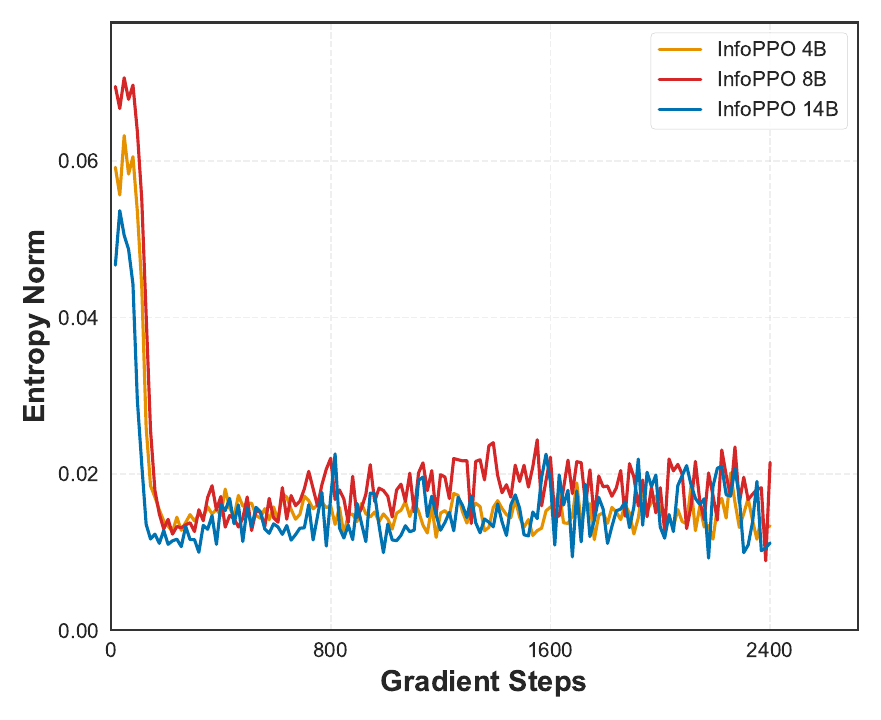}
        \vspace{-0.05in}
        \caption{Normalized Entropy.}
        \label{fig:info_time_norm_entropy}
    \end{subfigure}

    \par\vspace{0.08in}


    \begin{subfigure}[b]{0.32\textwidth}
        \centering
        \includegraphics[width=\textwidth]{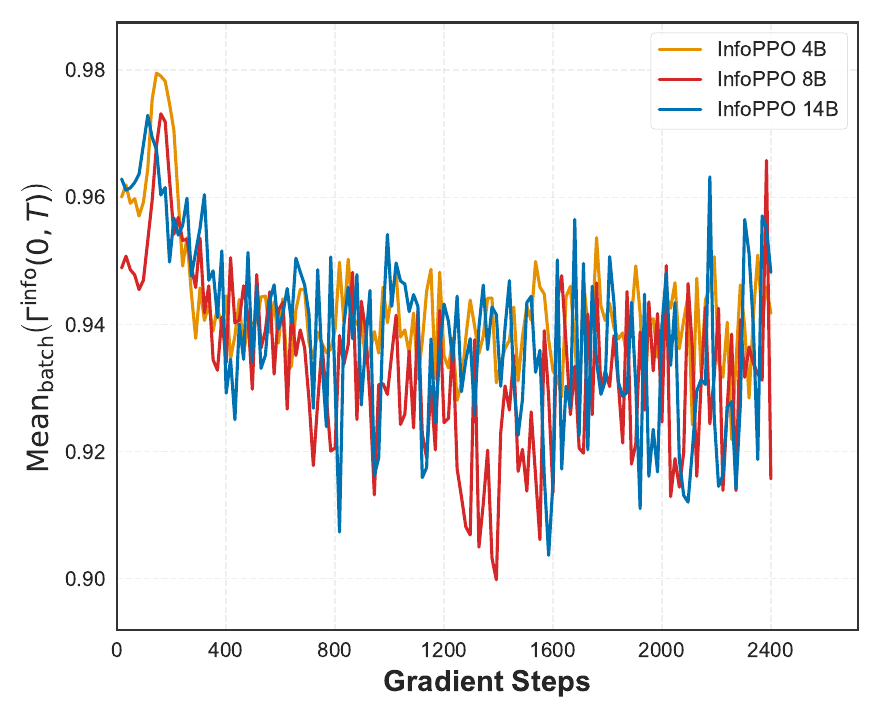}
        \vspace{-0.05in}
        \caption{Mean $\Gamma^{\mathrm{info}}(0,T)$.}
        \label{fig:info_time_discount_mean}
    \end{subfigure}
    \hfill
    \begin{subfigure}[b]{0.32\textwidth}
        \centering
        \includegraphics[width=\textwidth]{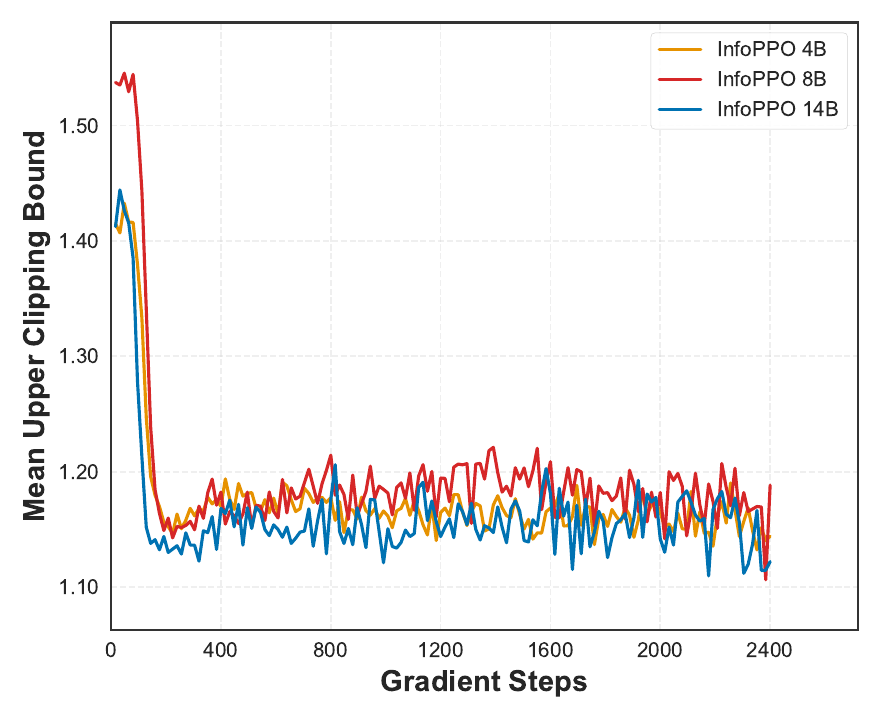}
        \vspace{-0.05in}
        \caption{Mean Upper Clipping Bound.}
        \label{fig:info_time_clip_upper}
    \end{subfigure}
    \hfill
    \begin{subfigure}[b]{0.32\textwidth}
        \centering
        \includegraphics[width=\textwidth]{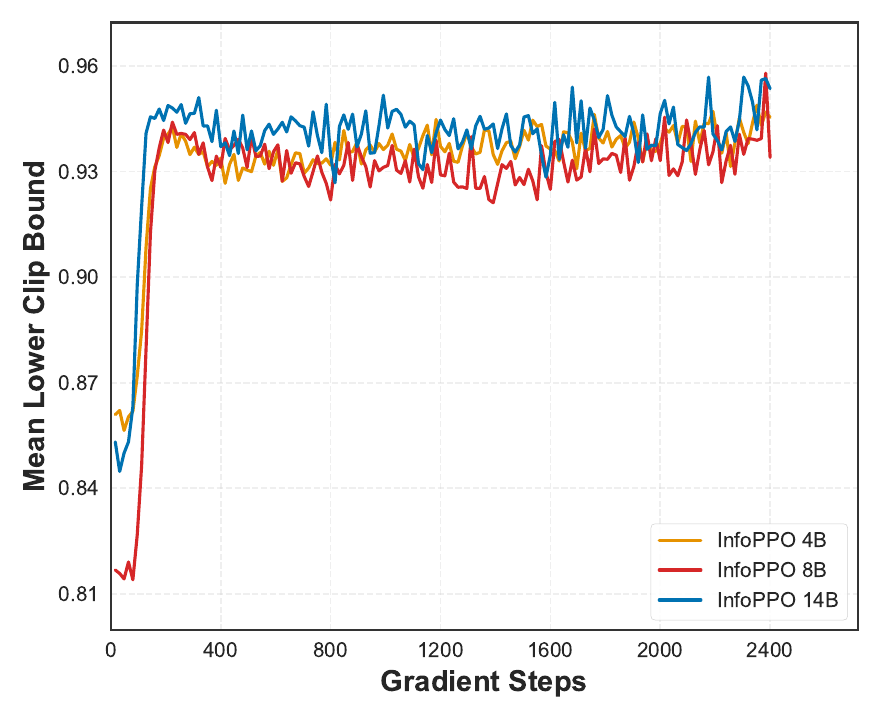}
        \vspace{-0.05in}
        \caption{Mean Lower Clipping Bound.}
        \label{fig:info_time_clip_lower}
    \end{subfigure}

    \caption{
        Information-Time Dynamics Across Model Scales.
        We track the training dynamics of \methodabb{} on Qwen3-4B, 8B, and 14B base models.
    (a) reports AIME24 accuracy; (b)--(c) show the raw predictive entropy
        and its normalized form used as the information density $\rho_t$;
        (d) reports the mean terminal information-time discount
        $\Gamma^{\mathrm{info}}(0,T)$; and (e)--(f) show the corresponding
        upper and lower adaptive clipping bounds.
    }
    \label{fig:info_time_dynamics}

\end{figure*}

Section~\ref{section:density_instantiation} instantiates the information
density as normalized predictive entropy,
$\rho(s_t)=\mathcal H_{\mathrm{old}}(s_t)/\mathcal H_{\max}$.
We examine two complementary aspects of this construction.
First, using globally fixed normalization as the reference, we further
evaluate batch- and sentence-level alternatives empirically, focusing on
the trade-off between local scale adaptation and information-clock stability.
Second, using the selected scheme, we characterize how information density,
temporal discounting, and adaptive clipping evolve during training across
model scales.

\paragraph{Entropy Normalization.}
Figure~\ref{fig:entropy_normalization_effects} compares global-, batch-,
and sentence-level entropy normalization. Relative to a globally fixed reference scale, more local normalization can make better use of the effective dynamic range of normalized entropy by adapting the scale to the entropy statistics of the current samples. This increased local adaptivity, however,
makes the normalization scale itself data-dependent and can introduce
additional variation into the resulting information density. 
The three schemes therefore exhibit different trade-offs. Global normalization preserves a common reference scale but can compress locally relevant entropy variation into a relatively narrow range. Sentence-level normalization provides the strongest local rescaling, but exhibits larger fluctuations and removes the absolute entropy-scale information across responses. Batch-level
normalization lies between these two extremes, preserving meaningful local
variation while retaining a more consistent reference across samples.
Empirically, it also maintains comparatively well-behaved normalized-entropy and clipping dynamics together with competitive downstream performance.
We therefore adopt batch-level normalization in the remaining experiments.

\paragraph{Information-Time Dynamics Across Model Scales.}
Figure~\ref{fig:info_time_dynamics} characterizes the information-time
dynamics across Qwen3 model scales. After an initial transient, the normalized entropy defining
$\rho_t$ remains relatively stable, suggesting that the overall scale of
the information clock does not exhibit large systematic drift as the policy evolves. This empirical stability is consistent with the moderate clock variation
regime characterized by the cross-iteration analysis in
Section~\ref{section:density_instantiation}.
The mean terminal information-time discount
$\Gamma^{\mathrm{info}}(0,T)$ remains at a substantial level throughout
training, indicating that information-time discounting avoids severe
attenuation of terminal supervision along typical trajectories.
Meanwhile, the adaptive
clipping bounds remain within stable ranges, and AIME24 accuracy improves
over training across model scales. These results show that the information clock
induces well-behaved temporal weighting and state-dependent update scales across model scales.

\subsection{Information-Time vs. Token-Time PPO}
\label{sec:info_vs_token_time}

We next compare information-time PPO with standard token-time PPO under
non-trivial discount and trace-decay settings, highlighting how the
information-time parameterization changes optimization behavior in
long-horizon reasoning.

\begin{figure*}[ht]
    \centering
    \includegraphics[width=\textwidth]{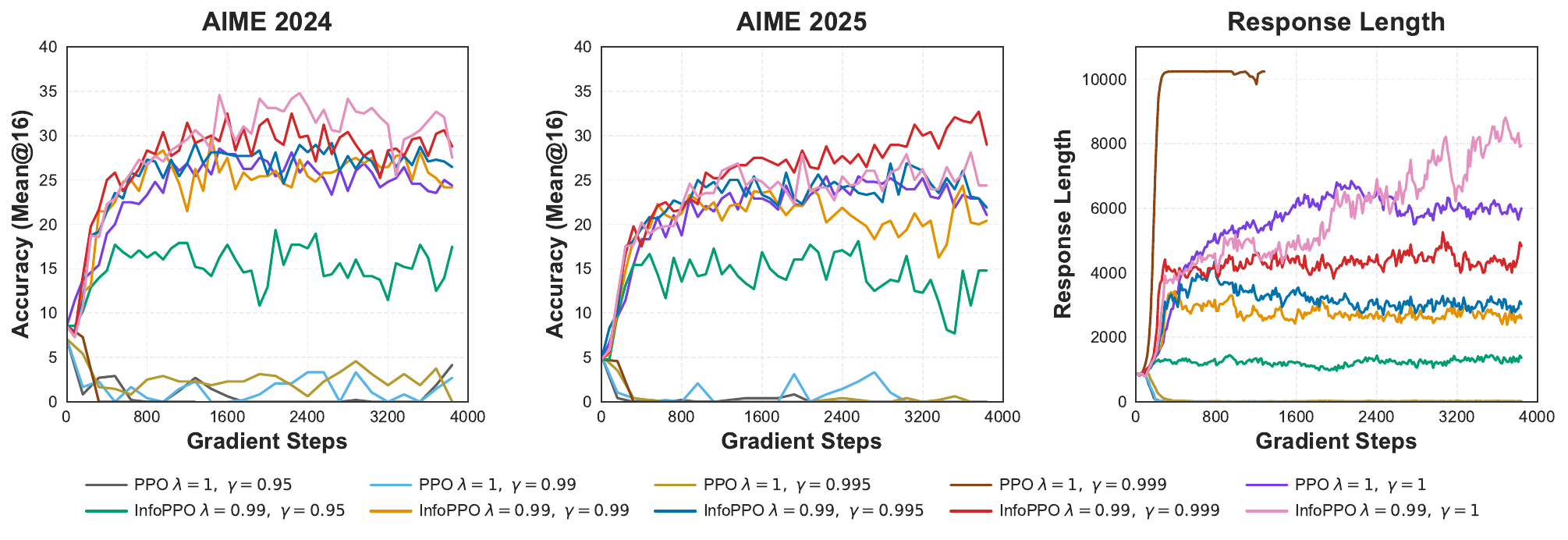}
    \caption{ Comparison of information-time and token-time discounting under different
    $\gamma$ settings. All experiments are conducted using the Qwen3-4B Base Model. (a) and (b) report the accuracy results on AIME24 and AIME25, respectively. (c) illustrates the average response length. Token-time PPO is highly sensitive to non-trivial discounting, whereas information-time PPO maintains stronger performance and more stable response lengths across $\gamma$ settings. }
    \label{fig:ablation_study_gamma}
    \vskip 0.1in
\end{figure*}

\paragraph{Discounting over Information Time.}
Figure~\ref{fig:ablation_study_gamma} compares the two temporal
parameterizations across different $\gamma$ settings. Token-time PPO is
highly sensitive to non-trivial discounting. For $\gamma<1$, accuracy
deteriorates sharply on both AIME24 and AIME25, while response length
exhibits pronounced and often degenerate changes. Notably, at $\gamma=0.999$, response length rapidly saturates near the
maximum generation length. This sharp variation highlights the instability
of token-time discounting across $\gamma$ settings. In contrast,
information-time PPO remains effective across the tested $\gamma$ range,
with substantially stronger accuracy and more gradual changes in response
length. More importantly, information-time discounting preserves effective-horizon
control, thereby keeping response length within a well-behaved,
non-degenerate range across $\gamma$ settings. This behavior is consistent with the different accumulation of
temporal decay. Token-time discounting compounds directly with raw sequence
length as $\gamma^T$, whereas information-time discounting accumulates
according to $\gamma^{\sum_t\rho_t}$. This reparameterization retains effective-horizon control while avoiding the excessive attenuation caused by the raw token distance.

\begin{figure*}[t]
    \centering
    \begin{minipage}{\textwidth}
        \centering

    \begin{subfigure}[b]{0.35\textwidth} 
        \centering
        \includegraphics[width=\textwidth]{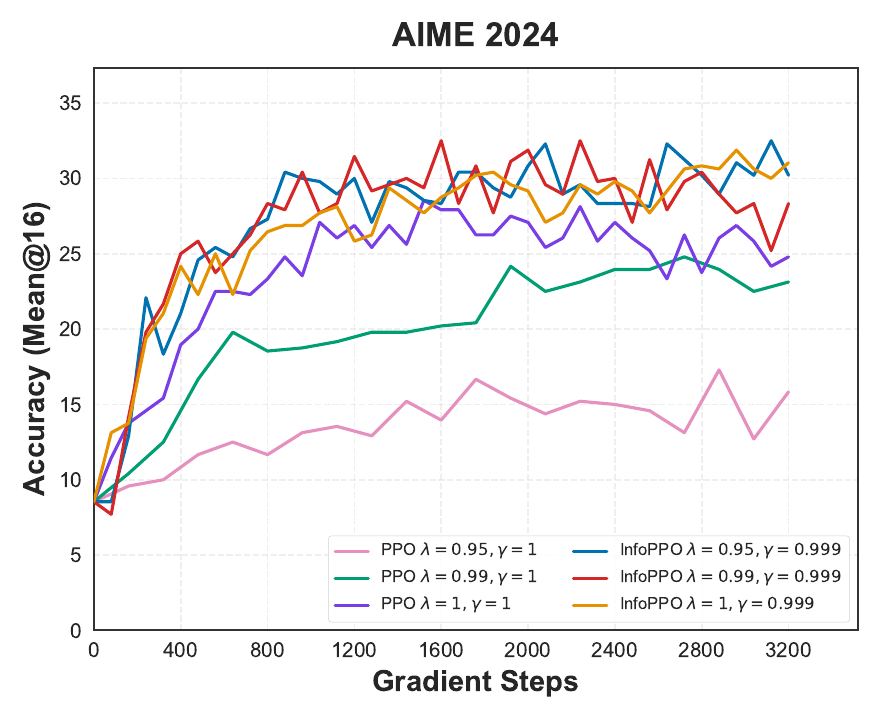}
        \caption{AIME24 Accuracy} 
        \label{fig:24_lambda_acc}
    \end{subfigure}
    \qquad 
    \begin{subfigure}[b]{0.35\textwidth}
        \centering
        \includegraphics[width=\textwidth]{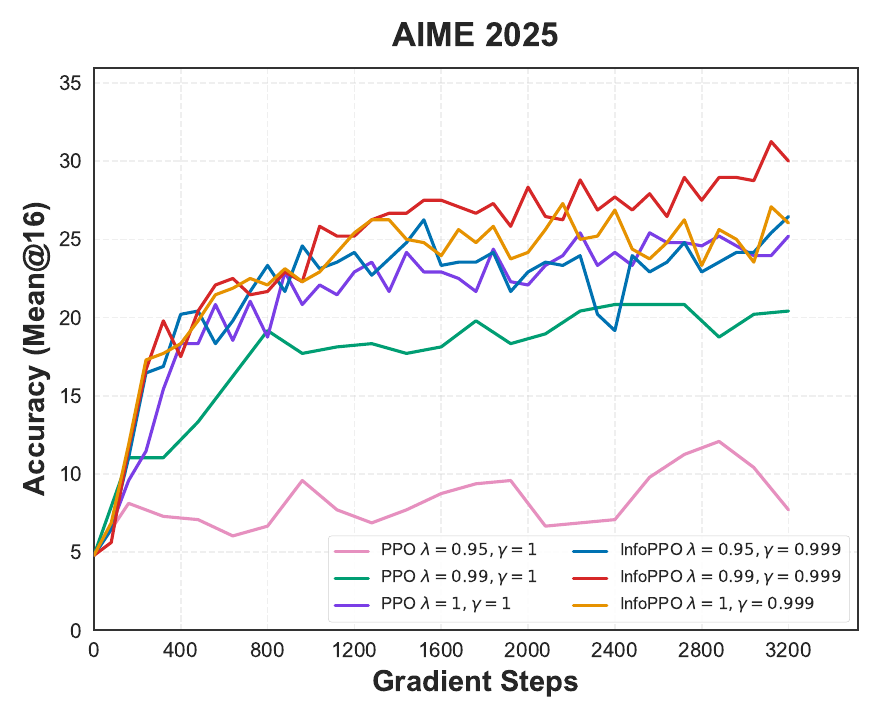}
        \caption{AIME25 Accuracy} 
        \label{fig:25_lambda_acc}
    \end{subfigure}
    \caption{    Comparison of information-time and token-time trace decay under different
    $\lambda$ settings.  (a) and (b) report the accuracy results on AIME24 and AIME25,
    respectively.}
    \label{fig:ablation_study_lambda}
    \end{minipage}
\end{figure*}

\paragraph{Trace Decay over Information Time.}
Figure~\ref{fig:ablation_study_lambda} compares the two parameterizations
across different $\lambda$ settings. Under token-time PPO, reducing
$\lambda$ below one leads to substantial performance degradation, whereas
information-time PPO remains competitive across non-trivial $\lambda$
values. Notably, $\lambda=0.99$ achieves particularly strong performance
and even outperforms $\lambda=1$ on AIME25, suggesting that non-trivial
trace decay can provide a more favorable bias--variance trade-off rather
than merely being tolerated. Under token time, 
$\lambda^{t_2-t_1}$ compounds over every generated token, causing the trace
to attenuate rapidly over long trajectories. In \methodabb{}, trace decay
accumulates over information time rather than raw token distance, allowing
credit to propagate over longer horizons without excessive attenuation.
This makes non-trivial $\lambda$ practically useful for controlling the
bias--variance trade-off in GAE.

Taken together, these results show that information-time discounting
supports non-trivial temporal decay while preserving both effective-horizon
control and stable long-range credit propagation. Based on the performance
and training behavior observed on AIME24 and AIME25, we use
$\gamma=0.999$ and $\lambda=0.99$ as the default configuration for
\methodabb{} in the remaining experiments. For the PPO baseline, we retain
its standard configuration with $\gamma=1$ and $\lambda=1$.

\subsection{Overall Performance Comparison}
Having characterized the information-time dynamics and its effects on
temporal credit propagation, we now evaluate the complete \methodabb{}
against representative RLVR baselines across model scales and mathematical
reasoning benchmarks. We examine whether the benefits observed in the
preceding analyses translate into consistent gains in reasoning performance
while maintaining well-behaved response lengths.

\begin{table*}[t]
\centering
\caption{Performance comparison of \methodabb{} with PPO, DAPO, and DAPO-FT. For each prompt, we independently sample 16 responses and report the average accuracy, denoted as Mean@16. \textbf{Bold} indicates the best result.}
\label{tab:main_results}
\begingroup
\setlength{\tabcolsep}{7pt}
\renewcommand{\arraystretch}{0.9}
\resizebox{\textwidth}{!}{%
\begin{tabular}{lcccccc>{\columncolor[HTML]{D7E8E8}}c>{\columncolor[HTML]{D7E8E8}}c}
\toprule
\multirow{2}{*}{\textbf{Benchmark}} 
& \multicolumn{2}{c}{\textbf{DAPO}} 
& \multicolumn{2}{c}{\textbf{DAPO-FT}} 
& \multicolumn{2}{c}{\textbf{PPO}} 
& \multicolumn{2}{c}{\textbf{\methodabb{}}} \\
\cmidrule(lr){2-3} \cmidrule(lr){4-5} \cmidrule(lr){6-7} \cmidrule(lr){8-9}
& \textbf{Score} & \textbf{Len.} 
& \textbf{Score} & \textbf{Len.} 
& \textbf{Score} & \textbf{Len.} 
& \textbf{Score} & \textbf{Len.} \\
\midrule

\multicolumn{9}{c}{\cellcolor[HTML]{EAF0FC}\textit{Qwen3-4B Base Model}} \\
\midrule
AMC23 
& 73.4 &  3938.7
& 75.0 &  4013.0
& 72.8 & 6583.0
& \textbf{76.6} & 3792.5 \\

AIME24 
& 30.4 & 5650.1
& 31.1 &  6286.2
& 28.1  &  8552.0
& \textbf{32.5} & 5855.8 \\

AIME25 
& 25.2 &  5098.3
& 25.4 &  5415.1
& 25.4 &  8322.0
& \textbf{28.8} &  5545.9 \\

AIME26 
& 21.7 & 5457.5 
& 20.6 & 6019.3
& 20.2 & 8632.6
& \textbf{23.8} & 5863.3 \\

BeyondAIME
& 10.3 & 4820.6
& 12.3 & 5580.7
& 12.3 & 8428.7
& \textbf{12.8} & 4714.9  \\

\midrule
\textbf{Average} 
& 32.2 & 4993.0
& 32.9 & 5462.9
& 31.8 & 8103.7
& \textbf{34.9} & 5154.5 \\

\midrule
\multicolumn{9}{c}{\cellcolor[HTML]{EAF0FC}\textit{Qwen3-8B Base Model}} \\
\midrule

AMC23 
& 80.5 & 3206.6
& 81.7 & 2855.4
& 76.9  & 5230.8
& \textbf{84.4} & 3233.2  \\

AIME24 
& 32.3 & 4789.9
& 31.3 &  4919.0
& 31.3 & 7263.6
& \textbf{38.5} & 5657.9 \\

AIME25 
& 27.5 &  4099.5
& 23.1 &  3956.1
& 25.4 & 6499.4
& \textbf{30.8} & 4795.9 \\

AIME26 
& 28.1 &  4833.6
& 26.0 &  4758.3
& 26.5 &  7483.6
& \textbf{33.5} &  5495.4 \\

BeyondAIME
& 13.0 & 3924.7
& 12.4 & 3957.5
& 12.6 & 7160.8
& \textbf{18.8} & 4878.8  \\

\midrule
\textbf{Average} 
& 36.3 & 4170.9
& 34.9 & 4089.3
& 34.5 & 6727.6
& \textbf{41.2} & 4812.2 \\
\midrule
\multicolumn{9}{c}{\cellcolor[HTML]{EAF0FC}\textit{Qwen3-14B Base Model}} \\
\midrule
AMC23 
& 87.2 & 3680.5
& 86.1 & 4382.5
& 86.4 & 5066.1
& \textbf{89.1}  & 3497.4 \\

AIME24 
& 47.7 & 6519.0
& 47.5 & 7622.8
& 45.4 & 8254.1
& \textbf{51.7} & 6334.6  \\

AIME25 
& 34.8 & 6326.4
& 37.1 & 7231.8
& 30.8 & 8024.7
& \textbf{37.5}  & 6751.6 \\

AIME26 
& 37.1 &  6459.9
& 41.3 &  7508.1
& 37.3 &  8045.8
& \textbf{45.8} &  6801.0 \\

BeyondAIME
& 19.8 & 5746.1
& 20.9 & 7245.0
& 19.4 & 7954.2
& \textbf{22.7} & 6260.1  \\

\midrule
\textbf{Average} 
& 45.3 & 5746.4
& 46.6  & 6798.0
& 43.9 & 7469.0
& \textbf{49.4} & 5928.9 \\

\bottomrule
\end{tabular}
}
\endgroup
\vskip -0.05in
\end{table*}

\begin{figure*}[ht]
    \centering

    \begin{subfigure}[b]{0.32\textwidth}
        \centering
        \includegraphics[width=\textwidth]{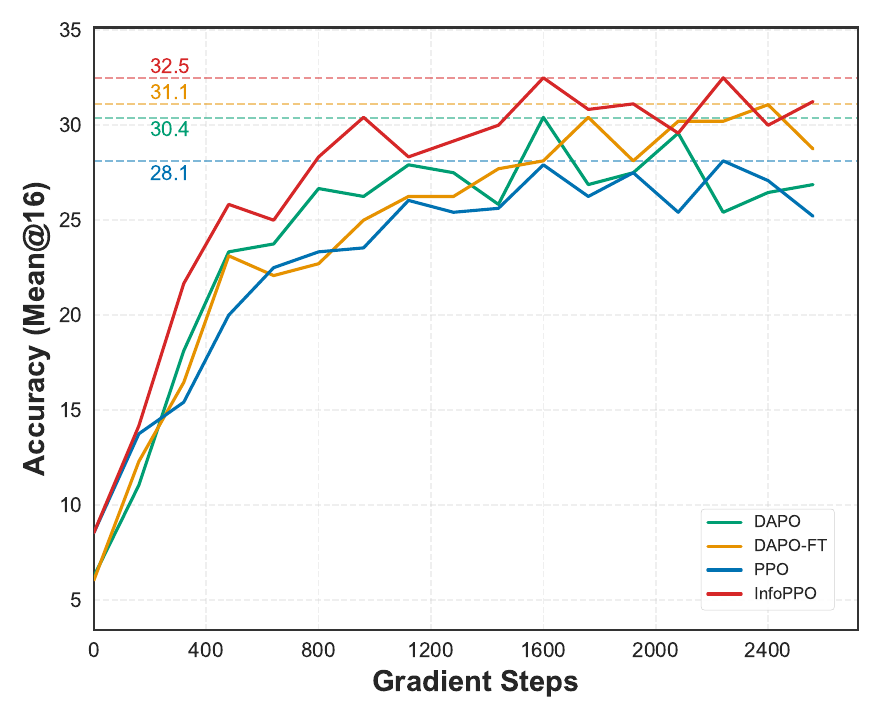}
        \vspace{-0.05in}
        \caption{AIME24 Accuracy trained from Qwen3-4B Base.}
        \label{fig:AIME24_4B}
    \end{subfigure}
    \hfill
    \begin{subfigure}[b]{0.32\textwidth}
        \centering
        \includegraphics[width=\textwidth]{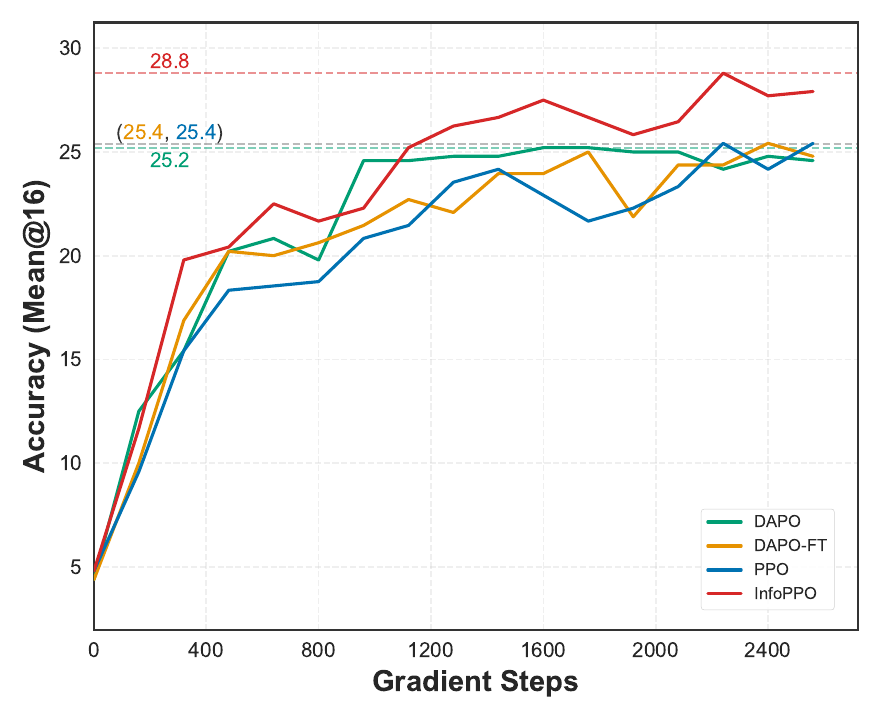}
        \vspace{-0.05in}
        \caption{AIME25 Accuracy trained from Qwen3-4B Base.}
        \label{fig:AIME25_4B}
    \end{subfigure}
    \hfill
    \begin{subfigure}[b]{0.32\textwidth}
        \centering
        \includegraphics[width=\textwidth]{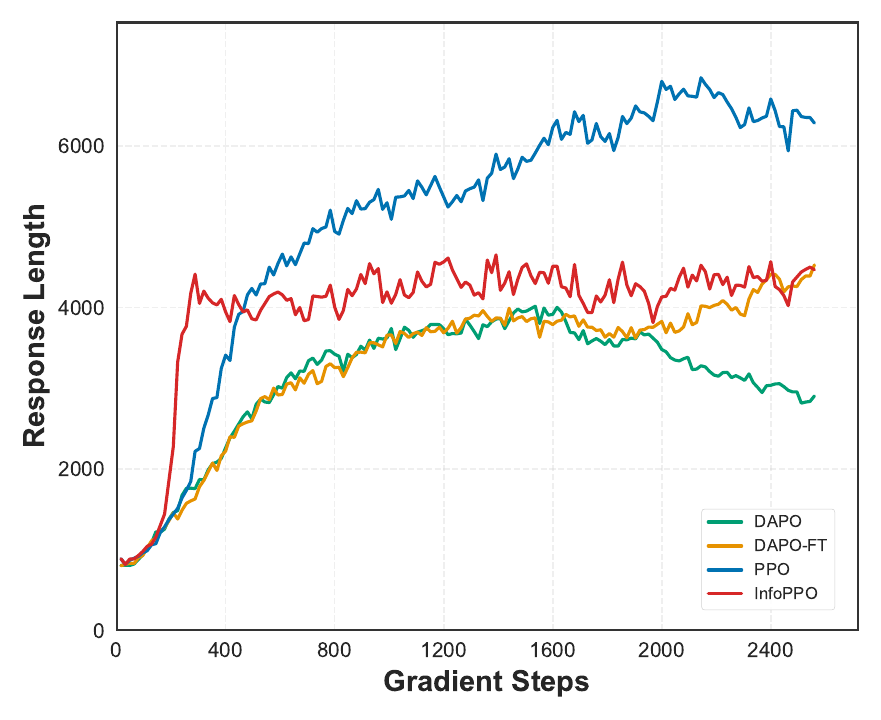}
        \vspace{-0.05in}
        \caption{Response lengths trained from Qwen3-4B Base.}
        \label{fig:response_4B}
    \end{subfigure}

    \par\vspace{0.08in}

    \begin{subfigure}[b]{0.32\textwidth}
        \centering
        \includegraphics[width=\textwidth]{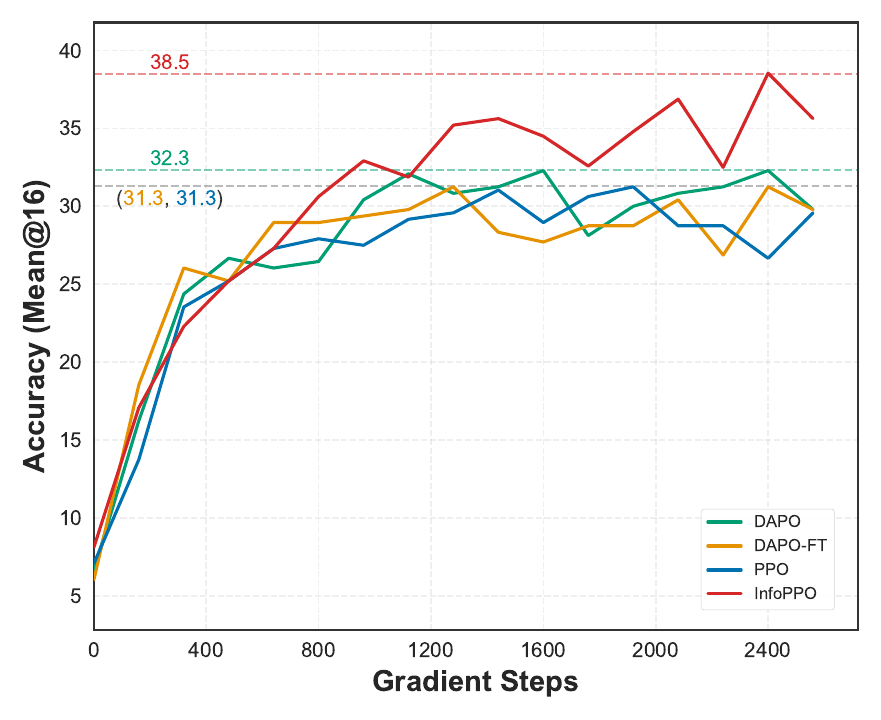}
        \vspace{-0.05in}
        \caption{AIME24 Accuracy trained from Qwen3-8B Base.}
        \label{fig:AIME24_8B}
    \end{subfigure}
    \hfill
    \begin{subfigure}[b]{0.32\textwidth}
        \centering
        \includegraphics[width=\textwidth]{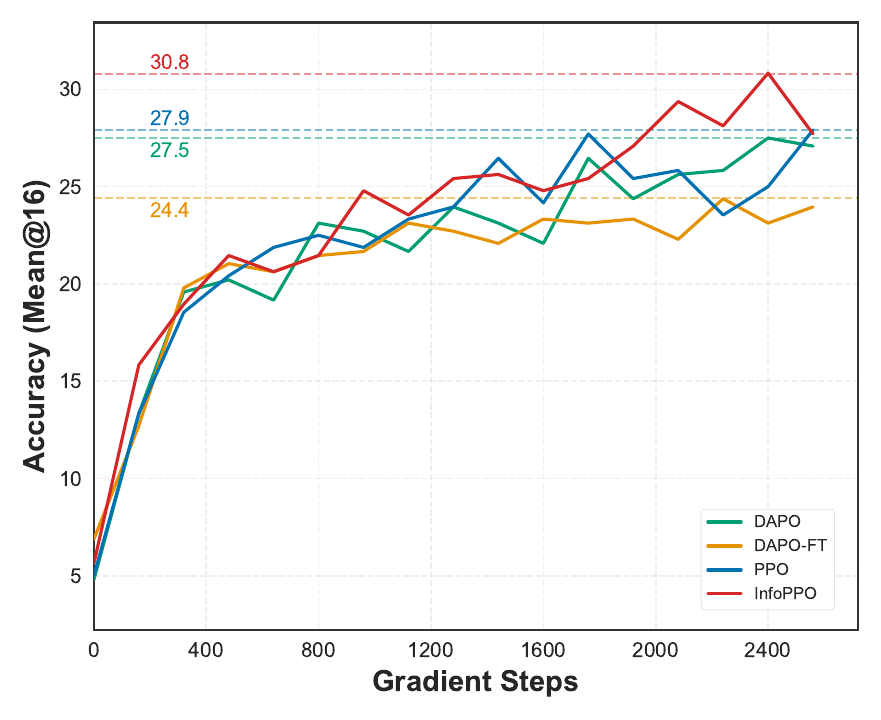}
        \vspace{-0.05in}
        \caption{AIME25 Accuracy trained from Qwen3-8B Base.}
        \label{fig:AIME25_8B}
    \end{subfigure}
    \hfill
    \begin{subfigure}[b]{0.32\textwidth}
        \centering
        \includegraphics[width=\textwidth]{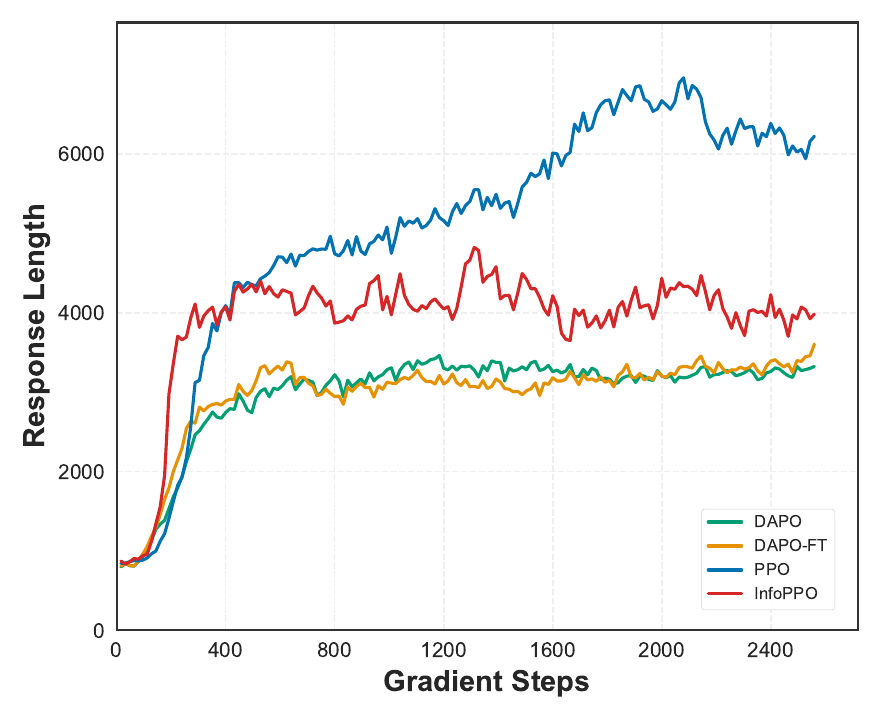}
        \vspace{-0.05in}
        \caption{Response lengths trained from Qwen3-8B Base.}
        \label{fig:response_8B}
    \end{subfigure}

    \par\vspace{0.08in}

    \begin{subfigure}[b]{0.32\textwidth}
        \centering
        \includegraphics[width=\textwidth]{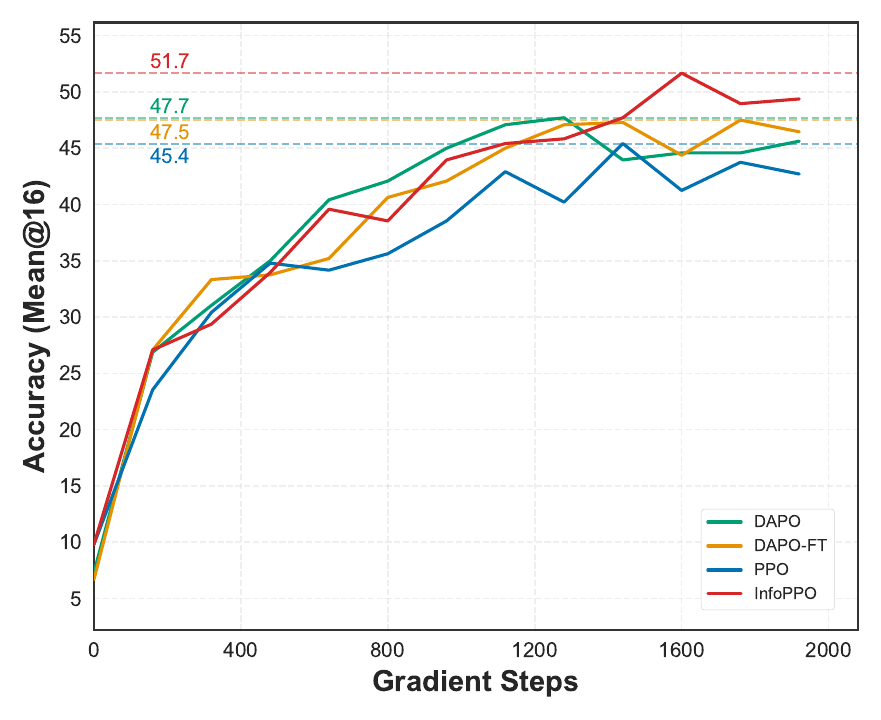}
        \vspace{-0.05in}
        \caption{AIME24 Accuracy trained from Qwen3-14B Base.}
        \label{fig:AIME24_14B}
    \end{subfigure}
    \hfill
    \begin{subfigure}[b]{0.32\textwidth}
        \centering
        \includegraphics[width=\textwidth]{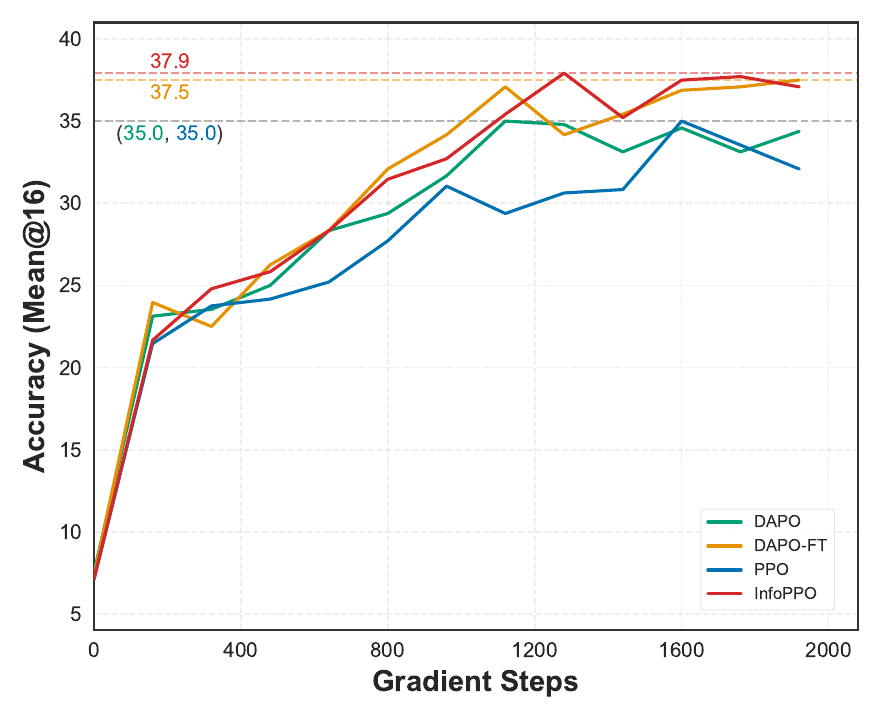}
        \vspace{-0.05in}
        \caption{AIME25 Accuracy trained from Qwen3-14B Base.}
        \label{fig:AIME25_14B}
    \end{subfigure}
    \hfill
    \begin{subfigure}[b]{0.32\textwidth}
        \centering
        \includegraphics[width=\textwidth]{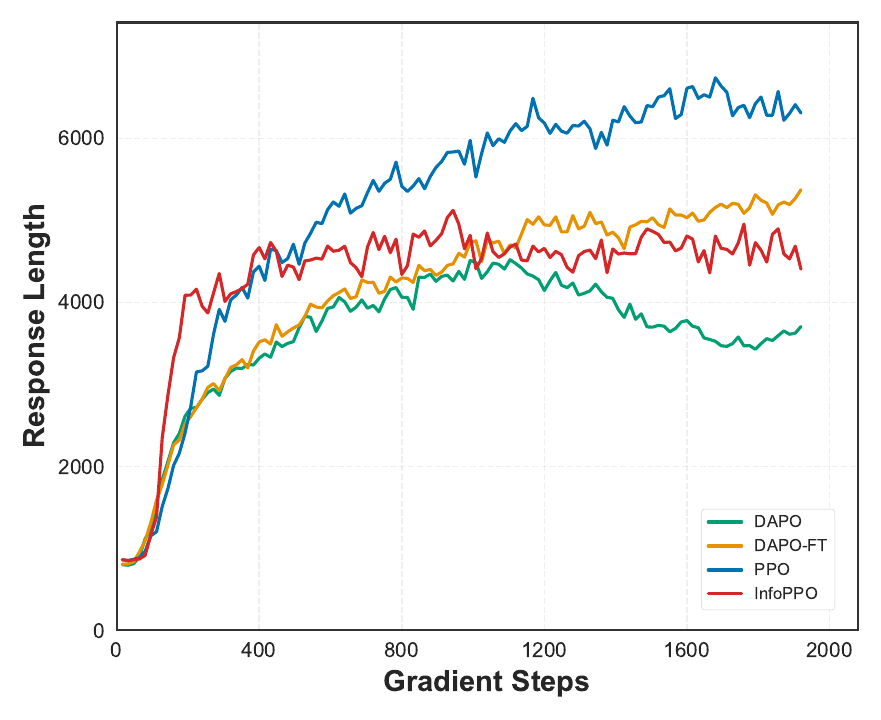}
        \vspace{-0.05in}
        \caption{Response lengths trained from Qwen3-14B base.}
        \label{fig:response_14B}
    \end{subfigure}


    \caption{
    Training Dynamics of Accuracy and Response Length Across Model Scales.
    \methodabb{} exhibits stronger accuracy gains over training while maintaining
    stable response lengths, indicating more efficient policy improvement without
    relying on continued length growth.
    }
    \label{fig:nine_subfigures}

\end{figure*}

Table \ref{tab:main_results} summarizes the performance of \methodabb{} against competitive baselines. \methodabb{} delivers consistently
strong performance across Qwen3-4B, 8B, and 14B, with an overall advantage
across multiple challenging mathematical reasoning benchmarks. These gains
are not accompanied by systematic response-length growth. The generation
lengths of \methodabb{} remain broadly comparable to those of DAPO and
DAPO-FT, while relative to PPO, \methodabb{} achieves higher average accuracy
with shorter responses. Overall, \methodabb{} exhibits a more favorable
accuracy--length trade-off rather than relying on continued expansion of the
reasoning trajectory.

Figure~\ref{fig:nine_subfigures} further shows that this pattern persists
throughout training. Across model scales, \methodabb{} steadily improves
accuracy while keeping response lengths within a relatively stable range.
In contrast, the baselines exhibit larger variations in response length over the course of training. This behavior is consistent with the
information-time formulation. Information-time discounting introduces a
non-trivial effective-horizon constraint while avoiding excessive accumulation of token-wise decay over long sequences, and information-dependent
policy updates adapt the optimization scale to local information density.
Together, these mechanisms enable effective policy improvement within a controlled
information horizon without relying on continued trajectory expansion.

Taken together, the final performance and training dynamics show that
\methodabb{} consistently translates the information-time formulation into
improved reasoning performance across model scales and challenging
mathematical reasoning tasks, while maintaining a favorable overall
efficiency.

\section{Conclusion}
We introduce \methodabb{}, which reparameterizes temporal progression in LLM
reinforcement learning by accumulated information rather than raw token count.
This information-time formulation provides a common basis for temporal credit
propagation and policy-update regulation, enabling non-trivial effective-horizon
control while adapting policy updates to local information density. We establish
policy-improvement guarantees under this state-dependent temporal geometry and
connect the practical entropy-based construction to the resulting policy
movement. Experiments across Qwen3 model scales and challenging mathematical
reasoning benchmarks show consistent performance gains and stable response-length behavior, demonstrating the effectiveness of information time as a temporal parameterization for long-horizon LLM reinforcement learning. Looking forward, an important direction is to explore alternative
instantiations of information density beyond predictive entropy, potentially
capturing complementary aspects of information progression along reasoning
trajectories. It would also be interesting to explicitly regulate predictive
entropy during policy optimization, which could help maintain a more stable
information clock across policy iterations while offering a potential means of
balancing exploration and exploitation during training.

\clearpage

\bibliography{example_paper}

@article{schulman2017proximal,
  title={Proximal policy optimization algorithms},
  author={Schulman, John and Wolski, Filip and Dhariwal, Prafulla and Radford, Alec and Klimov, Oleg},
  journal={arXiv preprint arXiv:1707.06347
        
        
        
        
        
        },
  year={2017}
}

@inproceedings{schulman2015trust,
  title={Trust region policy optimization},
  author={Schulman, John and Levine, Sergey and Abbeel, Pieter and Jordan, Michael and Moritz, Philipp},
  booktitle={International conference on machine learning},
  pages={1889--1897},
  year={2015},
  organization={PMLR}
}

@article{lambert2024t,
  title={T$\backslash$" ULU 3: Pushing Frontiers in Open Language Model Post-Training},
  author={Lambert, Nathan and Morrison, Jacob and Pyatkin, Valentina and Huang, Shengyi and Ivison, Hamish and Brahman, Faeze and Miranda, Lester James V and Liu, Alisa and Dziri, Nouha and Lyu, Shane and others},
  journal={arXiv preprint arXiv:2411.15124
        
        
        
        
        
        
        
        
        
        
        
        
        
        },
  year={2024}
}

@article{guo2025deepseek,
  title={Deepseek-r1: Incentivizing reasoning capability in llms via reinforcement learning},
  author={Guo, Daya and Yang, Dejian and Zhang, Haowei and Song, Junxiao and Zhang, Ruoyu and Xu, Runxin and Zhu, Qihao and Ma, Shirong and Wang, Peiyi and Bi, Xiao and others},
  journal={arXiv preprint arXiv:2501.12948
        
        
        
        
        
        
        
        
        
        },
  year={2025}
}

@article{yu2025dapo,
  title={Dapo: An open-source llm reinforcement learning system at scale},
  author={Yu, Qiying and Zhang, Zheng and Zhu, Ruofei and Yuan, Yufeng and Zuo, Xiaochen and Yue, Yu and Fan, Tiantian and Liu, Gaohong and Liu, Lingjun and Liu, Xin and others},
  journal={arXiv preprint arXiv:2503.14476
        
        
        
        
        
        
        
        
        
        
        
        
        
        
        
        
        
        
        
        
        
        
        
        },
  year={2025}
}

@article{shao2024deepseekmath,
  title={Deepseekmath: Pushing the limits of mathematical reasoning in open language models},
  author={Shao, Zhihong and Wang, Peiyi and Zhu, Qihao and Xu, Runxin and Song, Junxiao and Bi, Xiao and Zhang, Haowei and Zhang, Mingchuan and Li, YK and Wu, Y and others},
  journal={arXiv preprint arXiv:2402.03300
        
        
        
        },
  year={2024}
}

@article{li2024numinamath,
  title={Numinamath: The largest public dataset in ai4maths with 860k pairs of competition math problems and solutions},
  author={Li, Jia and Beeching, Edward and Tunstall, Lewis and Lipkin, Ben and Soletskyi, Roman and Huang, Shengyi and Rasul, Kashif and Yu, Longhui and Jiang, Albert Q and Shen, Ziju and others},
  journal={Hugging Face repository},
  volume={13},
  pages={9},
  year={2024}
}

@article{yang2025qwen3,
  title={Qwen3 technical report},
  author={Yang, An and Li, Anfeng and Yang, Baosong and Zhang, Beichen and Hui, Binyuan and Zheng, Bo and Yu, Bowen and Gao, Chang and Huang, Chengen and Lv, Chenxu and others},
  journal={arXiv preprint arXiv:2505.09388
        
        
        
        
        
        
        
        
        
        
        
        
        
        },
  year={2025}
}

@article{jaech2024openai,
  title={Openai o1 system card},
  author={Jaech, Aaron and Kalai, Adam and Lerer, Adam and Richardson, Adam and El-Kishky, Ahmed and Low, Aiden and Helyar, Alec and Madry, Aleksander and Beutel, Alex and Carney, Alex and others},
  journal={arXiv preprint arXiv:2412.16720
        
        
        
        
        
        
        
        
        
        
        
        
        
        
        
        
        
        },
  year={2024}
}

@article{qu2025dynamic,
  title={Dynamic Large Concept Models: Latent Reasoning in an Adaptive Semantic Space},
  author={Qu, Xingwei and Wang, Shaowen and Huang, Zihao and Hua, Kai and Yin, Fan and Zhu, Rui-Jie and Zhou, Jundong and Min, Qiyang and Wang, Zihao and Li, Yizhi and others},
  journal={arXiv preprint arXiv:2512.24617
        
        
        
        
        
        
        
        
        
        
        
        
        
        
        
        
        
        
        
        
        
        
        
        },
  year={2025}
}

@article{wang2025beyond,
  title={Beyond the 80/20 rule: High-entropy minority tokens drive effective reinforcement learning for llm reasoning},
  author={Wang, Shenzhi and Yu, Le and Gao, Chang and Zheng, Chujie and Liu, Shixuan and Lu, Rui and Dang, Kai and Chen, Xionghui and Yang, Jianxin and Zhang, Zhenru and others},
  journal={arXiv preprint arXiv:2506.01939
        
        
        
        
        
        
        
        
        
        
        
        
        
        
        
        
        
        
        
        
        
        
        
        
        
        
        
        },
  year={2025}
}

@article{brockman2016openai,
  title={Openai gym},
  author={Brockman, Greg and Cheung, Vicki and Pettersson, Ludwig and Schneider, Jonas and Schulman, John and Tang, Jie and Zaremba, Wojciech},
  journal={arXiv preprint arXiv:1606.01540
        
        
        
        
        
        
        
        },
  year={2016}
}

@article{zheng2025group,
  title={Group sequence policy optimization},
  author={Zheng, Chujie and Liu, Shixuan and Li, Mingze and Chen, Xiong-Hui and Yu, Bowen and Gao, Chang and Dang, Kai and Liu, Yuqiong and Men, Rui and Yang, An and others},
  journal={arXiv preprint arXiv:2507.18071
        
        
        
        
        
        
        
        
        
        },
  year={2025}
}

@article{sutton1999between,
  title={Between MDPs and semi-MDPs: A framework for temporal abstraction in reinforcement learning},
  author={Sutton, Richard S and Precup, Doina and Singh, Satinder},
  journal={Artificial intelligence},
  volume={112},
  number={1-2},
  pages={181--211},
  year={1999},
  publisher={Elsevier}
}

@book{precup2000temporal,
  title={Temporal abstraction in reinforcement learning},
  author={Precup, Doina},
  year={2000},
  publisher={University of Massachusetts Amherst}
}

@inproceedings{bacon2017option,
  title={The option-critic architecture},
  author={Bacon, Pierre-Luc and Harb, Jean and Precup, Doina},
  booktitle={Proceedings of the AAAI conference on artificial intelligence},
  volume={31},
  number={1},
  year={2017}
}

@inproceedings{kakade2002approximately,
  title={Approximately optimal approximate reinforcement learning},
  author={Kakade, Sham and Langford, John},
  booktitle={Proceedings of the nineteenth international conference on machine learning},
  pages={267--274},
  year={2002}
}

@article{schulman2015high,
  title={High-dimensional continuous control using generalized advantage estimation},
  author={Schulman, John and Moritz, Philipp and Levine, Sergey and Jordan, Michael and Abbeel, Pieter},
  journal={arXiv preprint arXiv:1506.02438
        
        
        
        
        
        
        
        
        
        
        
        },
  year={2015}
}

@article{fu2025deep,
  title={Deep think with confidence},
  author={Fu, Yichao and Wang, Xuewei and Tian, Yuandong and Zhao, Jiawei},
  journal={arXiv preprint arXiv:2508.15260
        
        
        
        
        
        
        
        
        
        
        
        
        
        
        
        
        
        },
  year={2025}
}

@article{feinberg1994markov,
  title={Markov decision models with weighted discounted criteria},
  author={Feinberg, Eugene A and Shwartz, Adam},
  journal={Mathematics of Operations Research},
  volume={19},
  number={1},
  pages={152--168},
  year={1994},
  publisher={INFORMS}
}

@inproceedings{white2017unifying,
  title={Unifying task specification in reinforcement learning},
  author={White, Martha},
  booktitle={International Conference on Machine Learning},
  pages={3742--3750},
  year={2017},
  organization={PMLR}
}

@misc{aime26,
      title={American Invitational Mathematics Examination (AIME) 2026}, 
      author={Zhang, Yifan and Math-AI, Team},
      year={2026},
}

@misc{aime25,
      title={American Invitational Mathematics Examination (AIME) 2025}, 
      author={Zhang, Yifan and Math-AI, Team},
      year={2025},
}

@misc{aime24,
      title={American Invitational Mathematics Examination (AIME) 2024}, 
      author={Zhang, Yifan and Math-AI, Team},
      year={2024},
}

@misc{beyondaime2025,
  author    = {{ByteDance-Seed}},
  title     = {{BeyondAIME}: Advancing Math Reasoning Evaluation Beyond High School Olympiads},
  year      = {2025},
  publisher = {Hugging Face},
  url       = {https://huggingface.co/datasets/ByteDance-Seed/BeyondAIME}
}
\bibliographystyle{icml2026}


\appendix

\newpage

\section{Mathematical Derivations}
\subsection{Proofs for  Policy Improvement on the Information-Time MDP}\label{app:proof_monotonic_policy_improvement}
 \begin{lemma}
 [Information-Time Performance Difference]
\label{app_lemma:im_performance_diff}
Given two policies $\pi$ and $\tilde{\pi}$ within the Information-Time MDP framework, the following identity holds:
\begin{equation}
\begin{aligned}
\eta^{\text{info}}&(\tilde{\pi}) - \eta^{\text{info}}(\pi) = \mathbb{E}_{\tau \sim \tilde{\pi}}\left[\sum_{t=0}^{\infty} \Gamma^{\text{info}}(0,t) A^{\text{info}}_{\pi}(s_t, a_t)\right],
\end{aligned}
\end{equation}
where $\Gamma^{\text{info}}(0,t)$ represents the cumulative information-time discount factor.
\end{lemma}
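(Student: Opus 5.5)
The plan is to mirror the classical Kakade--Langford telescoping argument, replacing the geometric factor $\gamma^t$ by the trajectory-dependent factor $\Gamma^{\text{info}}(0,t)$. The key structural fact is that the information discount is multiplicative:
\[
\Gamma^{\text{info}}(0,t+1)=\Gamma^{\text{info}}(0,t)\,\gamma^{\rho(s_t)},
\]
and $\gamma^{\rho(s_t)}$ depends only on the current state $s_t$, which is known at time $t$.

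First, I would write down the one-step Bellman identity for the information-time action value under $\pi$. Since the transition $s_{t+1}=s_t\oplus a_t$ is deterministic and $\Gamma^{\text{info}}(t,u)=\gamma^{\rho(s_t)}\Gamma^{\text{info}}(t+1,u)$ for $u\ge t+1$, splitting off the $u=t$ term in the definition of $Q^{\text{info}}_\pi$ gives
\[
Q^{\text{info}}_\pi(s_t,a_t)=r(s_t)+\gamma^{\rho(s_t)}V^{\text{info}}_\pi(s_{t+1}).
\]
This holds because $\rho(s_t)$ is fixed given $s_t$, and the future after $s_{t+1}$ is generated by $\pi$ with discounts measured from $t+1$. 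This identity does not depend on which policy chose $a_t$. Consequently, along any trajectory drawn from $\tilde\pi$,
\[
A^{\text{info}}_\pi(s_t,a_t)=r(s_t)+\gamma^{\rho(s_t)}V^{\text{info}}_\pi(s_{t+1})-V^{\text{info}}_\pi(s_t).
\]

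Next, I would multiply this identity by $\Gamma^{\text{info}}(0,t)$ and sum over $t$, working on finite partial sums up to $N$. Using the multiplicative property, the terms
\[
\Gamma^{\text{info}}(0,t)\gamma^{\rho(s_t)}V^{\text{info}}_\pi(s_{t+1})=\Gamma^{\text{info}}(0,t+1)V^{\text{info}}_\pi(s_{t+1})
\]
telescope against the subtracted value terms. The partial sum therefore equals
\[
\sum_{t=0}^{N}\Gamma^{\text{info}}(0,t)\,r(s_t)+\Gamma^{\text{info}}(0,N+1)V^{\text{info}}_\pi(s_{N+1})-V^{\text{info}}_\pi(s_0).
\]
Taking expectations under $\tau\sim\tilde\pi$ turns the first term into the truncated $\eta^{\text{info}}(\tilde\pi)$. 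The last term becomes $\mathbb{E}_{s_0}[V^{\text{info}}_\pi(s_0)]=\eta^{\text{info}}(\pi)$, because the initial-state distribution does not depend on the policy.

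The main obstacle is justifying the limit $N\to\infty$ and the interchange of sums and expectations, since $\Gamma^{\text{info}}$ need not decay geometrically when $\rho$ can be $0$. I would resolve this using the episodic structure adopted in the paper: trajectories reach an absorbing terminal state after $T$ steps, and continuation reward and value there are zero. Then $V^{\text{info}}_\pi(s_{N+1})=0$ for $N\ge T$, so the boundary term vanishes. All sums are finite almost surely and bounded by $|R_T|\,\Gamma\le R_{\max}$, since $\Gamma^{\text{info}}\le 1$ because $\rho\ge0$ and $\gamma\le1$. Dominated convergence then permits exchanging the limit and the expectation.

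For the fully general infinite-horizon case, I would instead assume bounded rewards and an information-time return that is absolutely summable, for instance because $\sum_t\Gamma^{\text{info}}(0,t)$ is integrable. This forces $\mathbb{E}[\Gamma^{\text{info}}(0,N)V^{\text{info}}_\pi(s_N)]\to0$. Combining the limits gives exactly Eq.~\eqref{eq:PD}.
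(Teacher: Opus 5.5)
Your proposal is correct and follows the paper's proof essentially step for step: the one-step Bellman identity $Q^{\text{info}}_\pi(s_t,a_t)=r(s_t)+\gamma^{\rho(s_t)}V^{\text{info}}_\pi(s_{t+1})$, telescoping via $\Gamma^{\text{info}}(0,t+1)=\Gamma^{\text{info}}(0,t)\gamma^{\rho(s_t)}$, and removing the boundary term $\Gamma^{\text{info}}(0,N+1)V^{\text{info}}_\pi(s_{N+1})$ through the episodic absorbing-state structure; your finite-$N$ partial sums with dominated convergence are a slightly more careful version of the paper's direct manipulation of the infinite sum. Your closing remark on the general infinite-horizon case goes beyond what the paper proves, and as phrased it would also need a uniform bound such as $\sup_s|V^{\text{info}}_\pi(s)|<\infty$, in addition to $\mathbb{E}_{\tau\sim\tilde\pi}[\sum_t\Gamma^{\text{info}}(0,t)]<\infty$, to force the boundary term to zero.
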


\begin{proof}
First, we verify that the cumulative information discount factor $\Gamma^{\text{info}}(t_1,t_2)$ satisfies the following multiplicative property:
\begin{align}
\Gamma^{\text{info}}(t_1, t_2) = \gamma^{\sum_{t=t_1}^{t_2-1} \Delta\mu_{\text{info}}(t)}=\gamma^{\sum_{t_1}^{t_{\text{mid}}-1} \Delta\mu_{\text{info}}}\times \gamma^{\sum_{t_{\text{mid}}}^{t_2-1} \Delta\mu_{\text{info}}} = \Gamma^{\text{info}}(t_1, t_{\text{mid}})\Gamma^{\text{info}}(t_{\text{mid}},t_2).
\end{align}

Recalling the definition $Q_{\pi}^{\text{info}}(s_t,a_t)
\coloneqq \mathbb{E}_{\pi}\!\left[\sum_{l=0}^{\infty}\Gamma^{\text{info}}(t,t+l)r(s_{t+l}) \Big| s_t,a_t\right]$, we proceed to derive the Bellman equation on the information-time:
\begin{align}
Q_{\pi}^{\text{info}}(s_t,a_t)
&= \mathbb{E}_{\pi}\!\left[\sum_{l=0}^{\infty}\Gamma^{\text{info}}(t,t+l){r}(s_{t+l}) \Big| s_t,a_t\right]\\
&=\mathbb{E}_{\pi}\!\left[{r}(s_{t}) + \Gamma^{\text{info}}(t,t+1)\sum_{l=t+1}^{\infty}\Gamma^{\text{info}}(t+1,l){r}(s_{l}) \Big| s_t,a_t\right]\\
&={r}(s_{t}) + \Gamma^{\text{info}}(t,t+1)\mathbb{E}_{\pi}\!\left[ \sum_{l=t+1}^{\infty}\Gamma^{\text{info}}(t+1,l){r}(s_{l}) \Big| s_t,a_t\right]\\
&={r}(s_{t}) + \Gamma^{\text{info}}(t,t+1)V_{\pi}^{\text{info}}(s_{t+1}).
\end{align}

Consequently, for the advantage function $A_{\pi}^{\text{info}}(s,a)$, we obtain
\begin{align}
A_{\pi}^{\text{info}}(s,a) = Q_{\pi}^{\text{info}}(s,a) - V_{\pi}^{\text{info}}(s) ={r}(s_{t}) + \Gamma^{\text{info}}(t,t+1)V_{\pi}^{\text{info}}(s_{t+1}) - V_{\pi}^{\text{info}}(s_{t}).
\end{align}
Therefore, 
\begin{align}
&\mathbb{E}_{\tau \sim \tilde{\pi}}\left[\sum_{t=0}^{\infty} \Gamma^{\text{info}}(0,t) A^{\text{info}}_{\pi}(s_t, a_t)\right]\\
&= \mathbb{E}_{\tau \sim \tilde{\pi}}\left[\sum_{0}^{\infty} \Gamma^{\text{info}}(0,t) \left({r}(s_{t}) + \Gamma^{\text{info}}(t,t+1)V_{\pi}^{\text{info}}(s_{t+1}) - V_{\pi}^{\text{info}}(s_{t})\right)\right]\\
&=\mathbb{E}_{\tau \sim \tilde{\pi}}\left[\sum_{0}^{\infty}\left(\Gamma^{\text{info}}(0,t+1)V_{\pi}^{\text{info}}(s_{t+1}) - \Gamma^{\text{info}}(0,t)V_{\pi}^{\text{info}}(s_{t})\right)+\sum_{t=0}^{\infty} \Gamma^{\text{info}}(0,t) {r}(s_{t})\right]\\
&=\mathbb{E}_{\tau \sim \tilde{\pi}}\left[-\Gamma^{\text{info}}(0,0)V_{\pi}^{\text{info}}(s_{0})+\sum_{t=0}^{\infty} \Gamma^{\text{info}}(0,t) {r}(s_{t})\right]\\
&=\mathbb{E}_{\tau \sim \tilde{\pi}}\left[-V_{\pi}^{\text{info}}(s_{0})+\sum_{t=0}^{\infty} \Gamma^{\text{info}}(0,t) {r}(s_{t})\right]\\
&=  -\eta^{\text{info}}(\pi) + \eta^{\text{info}}(\tilde{\pi}).
\end{align}
For the infinite-horizon telescoping argument above, the remaining boundary term is required to vanish:
$$
\lim_{N\to\infty}
\mathbb{E}_{\tau\sim\tilde{\pi}}
\left[
\Gamma^{\mathrm{info}}(0,N+1)
V_\pi^{\mathrm{info}}(s_{N+1})
\right]
=0.
$$
This condition is naturally satisfied in episodic environments with terminal states, since the continuation value is zero after termination. In particular, it holds for the language-generation setting considered in this work, where each response eventually terminates and therefore has zero continuation value thereafter. 
\end{proof}

To connect the trajectory form of the Information-Time Performance
Difference with the state-space formulation used in
Section~\ref{sec:info_policy_improvement}, we now derive its equivalent
representation in terms of the information-discounted state visitation
measure.

\begin{corollary}
\textup{(State-Space Representation of the Information-Time Performance Difference).}
\label{app:info_state_representation}
Define the unnormalized information-discounted state visitation measure
under policy $\pi$ as
\begin{equation}
\nu_{\pi}^{\mathrm{info}}(s)
\coloneqq
\mathbb{E}_{\tau\sim\pi}
\left[
\sum_{t=0}^{\infty}
\Gamma^{\mathrm{info}}(0,t)
\mathbf{1}\{s_t=s\}
\right].\nonumber
\end{equation}
Then
\begin{equation}
\eta^{\mathrm{info}}(\tilde{\pi})
-
\eta^{\mathrm{info}}(\pi)
=
\sum_s
\nu_{\tilde{\pi}}^{\mathrm{info}}(s)
\sum_a
\tilde{\pi}(a|s)
A_{\pi}^{\mathrm{info}}(s,a).
\end{equation}
\end{corollary}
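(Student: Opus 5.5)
The identity follows from Lemma~\ref{app_lemma:im_performance_diff} by rewriting its right-hand side in state space. We exchange expectation and summation and condition on the visited state.

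Start from
\[
\eta^{\mathrm{info}}(\tilde{\pi})-\eta^{\mathrm{info}}(\pi)
=\mathbb{E}_{\tau\sim\tilde{\pi}}\Bigl[\sum_{t=0}^{\infty}\Gamma^{\mathrm{info}}(0,t)A_\pi^{\mathrm{info}}(s_t,a_t)\Bigr].
\]
\begin{enumerate}[leftmargin=*]
\item \emph{Exchange sum and expectation.} Pull the sum over $t$ outside the expectation. This is justified by dominated convergence, using $|A_\pi^{\mathrm{info}}|\le C$ and the finiteness of $\mathbb{E}\bigl[\sum_t\Gamma^{\mathrm{info}}(0,t)\bigr]$. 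In the episodic setting this expectation is at most the expected trajectory length, since every $\Gamma^{\mathrm{info}}(0,t)\le 1$ and the terms vanish after absorption. The paper already assumes this finiteness for the boundary term in the lemma.

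\item \emph{Insert the state indicator.} Write each term as $\sum_s \mathbf{1}\{s_t=s\}$ times itself.

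\item \emph{Condition on the prefix.} Condition on the prefix $(s_0,a_0,\dots,s_t)$. The key observation is that $\Gamma^{\mathrm{info}}(0,t)=\gamma^{\sum_{u<t}\rho(s_u)}$ is measurable with respect to the states $s_0,\dots,s_{t-1}$. Meanwhile $a_t\sim\tilde{\pi}(\cdot|s_t)$ is conditionally independent of that history given $s_t$, by the Markov property of the policy. By the tower rule,
\[
\mathbb{E}\bigl[\Gamma^{\mathrm{info}}(0,t)\mathbf{1}\{s_t=s\}A_\pi^{\mathrm{info}}(s_t,a_t)\bigr]
=\mathbb{E}\bigl[\Gamma^{\mathrm{info}}(0,t)\mathbf{1}\{s_t=s\}\bigr]\sum_a\tilde{\pi}(a|s)A_\pi^{\mathrm{info}}(s,a).
\]

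\item \emph{Resum over $t$.} Summing over $t$ and exchanging the sums over $t$ and $s$ (absolute convergence again) yields $\nu_{\tilde\pi}^{\mathrm{info}}(s)$ as the coefficient, which gives the claimed formula.
\end{enumerate}

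The main point needing care is step~3. The information discount is random and trajectory-dependent, so it cannot be factored out as a deterministic $\gamma^t$, unlike the standard case. This is exactly why $\nu^{\mathrm{info}}$ keeps $\Gamma^{\mathrm{info}}$ inside the expectation. Step~3 works because $\Gamma^{\mathrm{info}}(0,t)$ depends only on states strictly before time $t$, not on $a_t$.

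Secondary care is needed for the interchange of sums when the state space is countably infinite. Nonnegativity of the visitation weights and boundedness of the advantage suffice via Fubini.
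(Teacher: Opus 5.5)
Your proposal is correct and follows the paper's proof essentially step for step. Both insert the indicator $\mathbf{1}\{s_t=s\}$, condition on the history up to $s_t$ (with respect to which both $\Gamma^{\mathrm{info}}(0,t)$ and the indicator are measurable), replace the action expectation by $\sum_a\tilde\pi(a|s)A_\pi^{\mathrm{info}}(s,a)$, and resum to obtain $\nu_{\tilde\pi}^{\mathrm{info}}$. Your justification of the interchanges goes beyond what the paper writes, but the right dominating quantity is $\sum_t\Gamma^{\mathrm{info}}(0,t)|A_\pi^{\mathrm{info}}(s_t,a_t)|\le C(T+1)$, which uses that the advantage vanishes at the absorbing state, rather than $\sum_t\Gamma^{\mathrm{info}}(0,t)$ itself, which need not be finite (e.g.\ for $\gamma=1$ or when $\rho$ is zero after absorption).
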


\begin{proof}
Starting from the Information-Time Performance Difference in Lemma~\ref{app_lemma:im_performance_diff},
\begin{align}
\eta^{\mathrm{info}}(\tilde{\pi})
-
\eta^{\mathrm{info}}(\pi)
&=
\mathbb{E}_{\tau\sim\tilde{\pi}}
\left[
\sum_{t=0}^{\infty}
\Gamma^{\mathrm{info}}(0,t)
A_{\pi}^{\mathrm{info}}(s_t,a_t)
\right].
\label{eq:state_rep_start}
\end{align}
For each fixed time $t$, we introduce an indicator over the current state:
\begin{equation}
A_{\pi}^{\mathrm{info}}(s_t,a_t)
=
\sum_s
\mathbf{1}\{s_t=s\}
A_{\pi}^{\mathrm{info}}(s,a_t).
\end{equation}
Substituting this identity into Eq.~\eqref{eq:state_rep_start} gives
\begin{align}
\eta^{\mathrm{info}}(\tilde{\pi})
-
\eta^{\mathrm{info}}(\pi)
&=
\sum_{t=0}^{\infty}
\mathbb{E}_{\tau\sim\tilde{\pi}}
\left[
\Gamma^{\mathrm{info}}(0,t)
\sum_s
\mathbf{1}\{s_t=s\}
A_{\pi}^{\mathrm{info}}(s,a_t)
\right]
\nonumber\\
&=
\sum_{t=0}^{\infty}
\sum_s
\mathbb{E}_{\tau\sim\tilde{\pi}}
\left[
\Gamma^{\mathrm{info}}(0,t)
\mathbf{1}\{s_t=s\}
A_{\pi}^{\mathrm{info}}(s,a_t)
\right].
\label{eq:state_rep_decompose}
\end{align}

Let $h_t$ denote the trajectory history up to the current state $s_t$,
before $a_t$ is sampled. Conditional on $h_t$, both
$\Gamma^{\mathrm{info}}(0,t)$ and $\mathbf{1}\{s_t=s\}$ are determined,
whereas the current action is sampled according to
$a_t\sim\tilde{\pi}(\cdot|s_t)$. Hence, applying iterated expectation
first over the current action and then over the history gives
\begin{align}
&
\mathbb{E}_{\tau\sim\tilde{\pi}}
\left[
\Gamma^{\mathrm{info}}(0,t)
\mathbf{1}\{s_t=s\}
A_{\pi}^{\mathrm{info}}(s,a_t)
\right]
\nonumber\\
&=
\mathbb{E}_{h_t\sim\tilde{\pi}}
\left[
\Gamma^{\mathrm{info}}(0,t)
\mathbf{1}\{s_t=s\}
\,
\mathbb{E}_{a_t\sim\tilde{\pi}(\cdot|s_t)}
\left[
A_{\pi}^{\mathrm{info}}(s,a_t)
\right]
\right].
\label{eq:state_rep_iterated}
\end{align}

On the event $\{s_t=s\}$,
\begin{equation}
\mathbb{E}_{a_t\sim\tilde{\pi}(\cdot|s_t)}
\left[
A_{\pi}^{\mathrm{info}}(s,a_t)
\right]
=
\sum_a
\tilde{\pi}(a|s)
A_{\pi}^{\mathrm{info}}(s,a).
\end{equation}
This quantity depends only on the fixed state $s$ and is therefore
constant with respect to the outer expectation over $h_t$. Hence,
\begin{align}
&
\mathbb{E}_{\tau\sim\tilde{\pi}}
\left[
\Gamma^{\mathrm{info}}(0,t)
\mathbf{1}\{s_t=s\}
A_{\pi}^{\mathrm{info}}(s,a_t)
\right]
\nonumber\\
&=
\mathbb{E}_{h_t\sim\tilde{\pi}}
\left[
\Gamma^{\mathrm{info}}(0,t)
\mathbf{1}\{s_t=s\}
\right]
\sum_a
\tilde{\pi}(a|s)
A_{\pi}^{\mathrm{info}}(s,a).
\label{eq:state_rep_action}
\end{align}

Substituting Eq.~\eqref{eq:state_rep_action} into
Eq.~\eqref{eq:state_rep_decompose} and collecting the contributions
associated with each state yields
\begin{align}
\eta^{\mathrm{info}}(\tilde{\pi})
-
\eta^{\mathrm{info}}(\pi)
&=
\sum_s
\left[
\sum_{t=0}^{\infty}
\mathbb{E}_{\tau\sim\tilde{\pi}}
\left[
\Gamma^{\mathrm{info}}(0,t)
\mathbf{1}\{s_t=s\}
\right]
\right]
\sum_a
\tilde{\pi}(a|s)
A_{\pi}^{\mathrm{info}}(s,a)
\nonumber\\
&=
\sum_s
\mathbb{E}_{\tau\sim\tilde{\pi}}
\left[
\sum_{t=0}^{\infty}
\Gamma^{\mathrm{info}}(0,t)
\mathbf{1}\{s_t=s\}
\right]
\sum_a
\tilde{\pi}(a|s)
A_{\pi}^{\mathrm{info}}(s,a)
\nonumber\\
&=
\sum_s
\nu_{\tilde{\pi}}^{\mathrm{info}}(s)
\sum_a
\tilde{\pi}(a|s)
A_{\pi}^{\mathrm{info}}(s,a),
\end{align}
where the final equality follows from the definition of
$\nu_{\tilde{\pi}}^{\mathrm{info}}(s)$. 
\end{proof}

Corollary~\ref{app:info_state_representation} provides the state-space
representation used to define the surrogate in Eq.~\eqref{eq:surrogate}.
We now turn to bounding the approximation error introduced by replacing
$\nu_{\tilde{\pi}}^{\mathrm{info}}$ with
$\nu_{\pi}^{\mathrm{info}}$. For this purpose, it is convenient to
return to the equivalent trajectory representation and follow the
coupling argument of TRPO \citep{schulman2015trust}.

To do so, we first define $\meanadv(s)$ as the expected information-time advantage of the
candidate policy $\tilde{\pi}$ relative to the current policy $\pi$ at
state $s$:
\begin{align}
\meanadv(s)
=
\mathbb{E}_{a\sim\tilde{\pi}(\cdot|s)}
\left[
A_\pi^{\mathrm{info}}(s,a)
\right].
\end{align}

The Information-Time Performance Difference
(\Cref{app_lemma:im_performance_diff}) can then be written as:
\begin{align}
\eta^{\text{info}}(\tilde\pi) = \eta^{\text{info}}(\pi) + \mathbb{E}_{\tau \sim \tilde\pi}\left[{\sum_{t=0}^{\infty} \Gamma^{\text{info}}(0,t) \meanadv(s_t)}\right]. \label{etadef}
\end{align}
The surrogate objective can be written as
\begin{align}
L_{\pi}^{\text{info}}(\tilde\pi) = \eta^{\text{info}}(\pi) + \mathbb{E}_{\tau \sim \pi}\left[{\sum_{t=0}^{\infty} \Gamma^{\text{info}}(0,t) \meanadv(s_t)}\right]. \label{surrogatedef} 
\end{align}
The two expressions differ only in the trajectory distribution.
the true return uses trajectories generated by the candidate policy
$\tilde{\pi}$, whereas the surrogate uses trajectories generated by
the current policy $\pi$. To compare these distributions under the
state-wise total-variation constraint in
Eq.~(\ref{eq:tv_divergence}), we place the two policies on a common
probability space through a state-wise coupling.

\begin{definition}
$(\pi,\tilde{\pi})$ is an \textit{$\alpha\rho$-coupled policy pair}
if, for every state $s$, there exists a joint distribution
$(a,\tilde{a})|s$ satisfying
$P(a\neq\tilde{a}| s)\le\alpha\rho(s)$,
where $\pi(\cdot|s)$ and $\tilde{\pi}(\cdot|s)$ are the marginal
distributions of $a$ and $\tilde{a}$, respectively.
\end{definition}
By the standard coupling characterization of total variation,
the divergence condition in Eq.~(\ref{eq:tv_divergence}) guarantees
the existence of such a coupling. Thus, the coupling condition used
below is the coupling representation of the state-wise total-variation
condition assumed in Theorem~\ref{theoremLB}.

\begin{lemma}\label{llama_adv_upper_bound}
Given that $\pi, \tilde{\pi}$ are $\alpha\rho$-coupled policies, for all $s$,
\begin{align}
|\bar{A}^{\text{info}}(s)| \le 2 \alpha \rho(s) \max_{s,a}|A_{\pi}^{\text{info}}(s,a)|
\end{align}
\end{lemma}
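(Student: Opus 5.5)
The argument adapts the corresponding step in TRPO \citep{schulman2015trust}. The key fact is that the expected advantage of the current policy under its own action distribution vanishes. Since $V_\pi^{\mathrm{info}}(s)=\mathbb{E}_{a\sim\pi(\cdot|s)}[Q_\pi^{\mathrm{info}}(s,a)]$ by definition of the information-time value functions, we have, for every state $s$,
\begin{equation}
\mathbb{E}_{a\sim\pi(\cdot|s)}\left[A_\pi^{\mathrm{info}}(s,a)\right]=0 .
\end{equation}
This step uses only the definitions of $V_\pi^{\mathrm{info}}$ and $Q_\pi^{\mathrm{info}}$ with respect to the fixed density $\rho$. The state-dependent discount $\Gamma^{\mathrm{info}}(t,t+1)=\gamma^{\rho(s_t)}$ plays no role here, since it is determined by $s_t$ alone.

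Next, fix $s$ and take the joint distribution $(a,\tilde a)\mid s$ supplied by the $\alpha\rho$-coupled pair definition. Its marginals are $\pi(\cdot|s)$ and $\tilde\pi(\cdot|s)$, and it satisfies $P(a\neq\tilde a\mid s)\le\alpha\rho(s)$. Using the identity above, rewrite
\begin{equation}
\meanadv(s)=\mathbb{E}_{\tilde a\sim\tilde\pi}\left[A_\pi^{\mathrm{info}}(s,\tilde a)\right]-\mathbb{E}_{a\sim\pi}\left[A_\pi^{\mathrm{info}}(s,a)\right]=\mathbb{E}_{(a,\tilde a)}\left[A_\pi^{\mathrm{info}}(s,\tilde a)-A_\pi^{\mathrm{info}}(s,a)\right].
\end{equation}
The integrand is zero on the event $\{a=\tilde a\}$. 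Therefore
\begin{equation}
\meanadv(s)=P(a\neq\tilde a\mid s)\,\mathbb{E}\left[A_\pi^{\mathrm{info}}(s,\tilde a)-A_\pi^{\mathrm{info}}(s,a)\,\middle|\,a\neq\tilde a,s\right].
\end{equation}

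To finish, bound the conditional difference in absolute value by $2\max_{s,a}|A_\pi^{\mathrm{info}}(s,a)|$ using the triangle inequality. Then bound $P(a\neq\tilde a\mid s)$ by $\alpha\rho(s)$, which yields the claim.

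There is no serious obstacle. The one point to check is that conditioning on $\{a\neq\tilde a\}$ is legitimate. If that event has probability zero, then $\meanadv(s)=0$ and the bound holds trivially. It is cleaner to avoid conditioning altogether and write $\meanadv(s)=\mathbb{E}\left[\mathbf{1}\{a\neq\tilde a\}\left(A_\pi^{\mathrm{info}}(s,\tilde a)-A_\pi^{\mathrm{info}}(s,a)\right)\right]$ before bounding. Finiteness of $C=\max_{s,a}|A_\pi^{\mathrm{info}}(s,a)|$ is implicitly assumed, as it is in Theorem~\ref{theoremLB}.
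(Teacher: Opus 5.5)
Your proposal is correct and matches the paper's proof step for step: write $\bar{A}^{\mathrm{info}}(s)$ as the coupled expectation of $A_\pi^{\mathrm{info}}(s,\tilde a)-A_\pi^{\mathrm{info}}(s,a)$ using the zero-mean property, restrict to the event $\{a\neq\tilde a\}$, and bound the result by $\alpha\rho(s)\cdot 2C$. Your explicit check that $\mathbb{E}_{a\sim\pi(\cdot|s)}[A_\pi^{\mathrm{info}}(s,a)]=0$ holds under the information-time definitions, and your indicator-based form that avoids conditioning on a possibly null event, are minor clarifications that the paper leaves implicit.
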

\begin{proof}
\begin{align}
\bar{A}^{\text{info}}(s) &= \mathbb{E}_{\tilde{a} \sim \tilde{\pi}}\left[{{A}_\pi^{\text{info}}(s,\tilde{a})} \right]=\mathbb{E}_{(a,\tilde{a}) \sim (\pi,\tilde{\pi})}\left[{{A}_\pi^{\text{info}}(s,\tilde{a}) - {A}_\pi^{\text{info}}(s, a)}\right] \\
&= P(a \neq \tilde{a} | s) \mathbb{E}_{(a,\tilde{a}) \sim (\pi,\tilde{\pi}) | a \neq \tilde{a}}\left[{A}_\pi^{\text{info}}(s,\tilde{a}) - {A}_\pi^{\text{info}}(s, a)\right]\\
|\bar{A}^{\text{info}}(s)| &\le \alpha \rho(s) \cdot 2 \max_{s,a}|A_{\pi}^{\text{info}}(s,a)| = 2\alpha\rho(s) \max_{s,a}|A_{\pi}^{\text{info}}(s,a)|
\end{align}
\end{proof}

Next we establish a bound on the cumulative discounted information weights, which will be used to control the surrogate approximation error.

\begin{lemma}
\label{lemma:geometric_sum}
For any sequence of information densities $\{\rho(s_t)\}_{t=0}^\infty$, the following bound holds:
\begin{equation}
    \sum_{t=0}^\infty \Gamma^{\text{info}}(0,t) \rho(s_t) \le \frac{1}{1 - \gamma}.
\end{equation}
\end{lemma}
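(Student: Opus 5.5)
The plan is to write the cumulative information time as $S_t \coloneqq \sum_{u=0}^{t-1}\rho(s_u)$, with $S_0=0$, so that $\Gamma^{\text{info}}(0,t)=\gamma^{S_t}$ and $S_{t+1}-S_t=\rho(s_t)\in[0,1]$. The sum in the lemma then becomes a Riemann-type sum $\sum_t \gamma^{S_t}(S_{t+1}-S_t)$. The idea is to compare this sum with the integral $\int_0^\infty \gamma^x\,dx = 1/\ln(1/\gamma)$, and then tighten the comparison to obtain the sharper constant $1/(1-\gamma)$. We assume $\gamma\in(0,1)$ throughout; for $\gamma=1$ the bound is vacuous.

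The key step is a per-step inequality of the form
\begin{equation}
\gamma^{S_t}\rho(s_t) \le \frac{\gamma^{S_t}-\gamma^{S_{t+1}}}{1-\gamma},
\end{equation}
which is equivalent to $\rho(1-\gamma)\le 1-\gamma^{\rho}$ for $\rho\in[0,1]$. I would prove this by concavity. The function $f(\rho)=1-\gamma^{\rho}$ is concave on $[0,1]$ because $f''(\rho)=-(\ln\gamma)^2\gamma^\rho\le 0$. Moreover $f(0)=0$ and $f(1)=1-\gamma$. Hence $f$ lies above the chord joining its endpoints, which is exactly $f(\rho)\ge \rho(1-\gamma)$. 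Multiplying by $\gamma^{S_t}\ge 0$ gives the per-step bound.

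Next, sum the per-step inequality over $t=0,\dots,N$. The right-hand side telescopes to
\begin{equation}
\frac{\gamma^{S_0}-\gamma^{S_{N+1}}}{1-\gamma} = \frac{1-\gamma^{S_{N+1}}}{1-\gamma} \le \frac{1}{1-\gamma},
\end{equation}
and this bound is uniform in $N$. All terms on the left are nonnegative, so the partial sums are monotone and bounded, and letting $N\to\infty$ yields the claim. No assumption on whether $S_t$ diverges is needed, since we only use $\gamma^{S_{N+1}}\ge 0$.

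The only step that needs care is the elementary inequality $1-\gamma^\rho\ge\rho(1-\gamma)$. It is exactly where the normalization $\rho\in[0,1]$ enters: for $\rho>1$ the chord argument fails, and so does the inequality. I expect no further obstacles. The result holds pathwise, for any realized sequence $\{\rho(s_t)\}$, so it passes directly inside the expectations used in the subsequent TRPO-style coupling bound.
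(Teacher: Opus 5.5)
Your proposal is correct and matches the paper's proof: both derive $1-\gamma^{\rho}\ge\rho(1-\gamma)$ from the chord bound for the concave function $1-\gamma^x$ on $[0,1]$, multiply by $\Gamma^{\text{info}}(0,t)$, and telescope using $\Gamma^{\text{info}}(0,t+1)=\Gamma^{\text{info}}(0,t)\gamma^{\rho(s_t)}$. Your handling of the limit through uniformly bounded nonnegative partial sums is slightly more careful than the paper's direct appeal to $\lim_{t\to\infty}\Gamma^{\text{info}}(0,t)$. The integral comparison you mention serves only as motivation and is not used in the argument.
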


\begin{proof}
For brevity, let $\rho_t \coloneqq \rho(s_t)$.
Consider the function $f(x)=1-\gamma^x$. Since $f$ is concave, the chord between endpoints 0 and 1 gives
\begin{align}
1-\gamma^{\rho_t}
=
f(\rho_t) \ge
(1-\rho_t)f(0)+\rho_t f(1) =
\rho_t(1-\gamma).
\end{align}
Rearranging terms yields 
\begin{equation}
\rho_t
\le
\frac{1-\gamma^{\rho_t}}{1-\gamma}.
\end{equation}

Multiplying by $\Gamma^{\text{info}}(0,t)$ and summing over $t$:
\begin{align}
    \sum_{t=0}^\infty \Gamma^{\text{info}}(0,t) \rho_t &\le \frac{1}{1 - \gamma} \sum_{t=0}^\infty \Gamma^{\text{info}}(0,t) (1 - \gamma^{\rho_t}) \\
    &= \frac{1}{1 - \gamma} \sum_{t=0}^\infty (\Gamma^{\text{info}}(0,t) - \Gamma^{\text{info}}(0,t+1)) \quad (\text{since } \Gamma^{\text{info}}(0,t+1) = \Gamma^{\text{info}}(0,t) \gamma^{\rho_t})
\end{align}
The sum on the right-hand side is a telescoping sum:
\begin{align}
    \sum_{t=0}^\infty (\Gamma^{\text{info}}(0,t) - \Gamma^{\text{info}}(0,t+1)) = \Gamma^{\text{info}}(0,0) - \lim_{t \to \infty} \Gamma^{\text{info}}(0,t) \le 1.
\end{align}
Thus, $\sum_{t=0}^\infty \Gamma^{\text{info}}(0,t) \rho(s_t) \le \frac{1}{1 - \gamma}$.
\end{proof}

\begin{theorem}  
\label{thm:lower_bound}
Let $L_\pi^{\text{info}}(\tilde{\pi})$ be the surrogate objective and $\eta^{\text{info}}(\tilde{\pi})$ be the true expected return. Assume the policies $(\pi,\tilde{\pi})$ are $\alpha\rho$-coupled. Then:
\begin{equation}
    |\eta^{\text{info}}(\tilde{\pi}) - L_\pi^{\text{info}}(\tilde{\pi})| \le \frac{4 C \alpha^2}{(1 - \gamma)^2},
\end{equation}
where $C = \max\limits_{s,a}|A_{\pi}^{\text{info}}(s,a)|$.
\end{theorem}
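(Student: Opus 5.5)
The plan is to follow the TRPO coupling argument, replacing its geometric-series step with an information-time version of Lemma~\ref{lemma:geometric_sum}. By Eq.~\eqref{etadef} and Eq.~\eqref{surrogatedef}, the two quantities share the same integrand $\sum_t \Gamma^{\mathrm{info}}(0,t)\meanadv(s_t)$ and differ only in the trajectory law:
\begin{equation*}
\eta^{\mathrm{info}}(\tilde\pi)-L^{\mathrm{info}}_\pi(\tilde\pi)=\sum_{t\ge 0}\Big(\mathbb{E}_{\tilde\tau\sim\tilde\pi}\big[\Gamma^{\mathrm{info}}(0,t)\meanadv(\tilde s_t)\big]-\mathbb{E}_{\tau\sim\pi}\big[\Gamma^{\mathrm{info}}(0,t)\meanadv(s_t)\big]\Big).
\end{equation*}

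\textbf{Step 1: couple the two trajectories.} I would put $\tau$ and $\tilde\tau$ on one probability space. At each step the state-wise coupling from the definition of an $\alpha\rho$-coupled pair is used, so that whenever the current states coincide, $P(a_t\neq\tilde a_t\mid s_t)\le \alpha\rho(s_t)$. After the first disagreement the two trajectories may evolve arbitrarily. Let $n_t$ be the number of disagreements strictly before time $t$.

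On the event $\{n_t=0\}$ the prefixes $s_0,\dots,s_t$ and $\tilde s_0,\dots,\tilde s_t$ coincide. Since $\Gamma^{\mathrm{info}}(0,t)=\gamma^{\sum_{u<t}\rho(s_u)}$ depends only on that prefix, the discounts coincide as well, and the $t$-th summands cancel exactly there. Hence
\begin{equation*}
|\eta^{\mathrm{info}}(\tilde\pi)-L^{\mathrm{info}}_\pi(\tilde\pi)|\le \sum_t \mathbb{E}\big[\mathbf 1\{n_t>0\}\Gamma^{\mathrm{info}}(0,t)|\meanadv(s_t)|\big]+\sum_t \mathbb{E}\big[\mathbf 1\{n_t>0\}\tilde\Gamma^{\mathrm{info}}(0,t)|\meanadv(\tilde s_t)|\big].
\end{equation*}

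\textbf{Step 2: bound each term pathwise.} Lemma~\ref{llama_adv_upper_bound} gives $|\meanadv(s)|\le 2\alpha C\rho(s)$. A union bound over the steps before $t$, conditioning on the path prefix, gives
\begin{equation*}
P(n_t>0\mid s_0,\dots,s_t)\le \alpha M_t,\qquad M_t:=\sum_{u<t}\rho(s_u).
\end{equation*}
This works because a disagreement can only first occur at a step $u$ where the prefixes still agree, and there its conditional probability is at most $\alpha\rho(s_u)$. The same holds symmetrically along $\tilde\tau$, since the coupling is symmetric in this respect. Each of the two sums is therefore at most
\begin{equation*}
2\alpha^2 C\,\mathbb{E}\Big[\sum_t \gamma^{M_t}\rho_t M_t\Big].
\end{equation*}

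\textbf{Step 3: the main obstacle.} What remains is a deterministic bound $\sum_t \gamma^{M_t}\rho_t M_t\le (1-\gamma)^{-2}$, valid for any sequence $\rho_t\in[0,1]$. This replaces the classical $\sum_t t\gamma^t$ computation, which no longer applies because the discounts are not geometric.

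First, reuse the concavity inequality from Lemma~\ref{lemma:geometric_sum}, namely $\rho_t\le (1-\gamma^{\rho_t})/(1-\gamma)$. This gives
\begin{equation*}
\sum_t \gamma^{M_t}\rho_t M_t\le \frac{1}{1-\gamma}\sum_t M_t\big(\gamma^{M_t}-\gamma^{M_{t+1}}\big).
\end{equation*}
Next, write each increment as an integral, $\gamma^{M_t}-\gamma^{M_{t+1}}=\int_{M_t}^{M_{t+1}}\ln(1/\gamma)\gamma^m\,dm$. Since $M_t\le m$ on that interval, the sum is a Riemann–Stieltjes lower sum, so
\begin{equation*}
\sum_t M_t\big(\gamma^{M_t}-\gamma^{M_{t+1}}\big)\le \int_0^\infty m\ln(1/\gamma)\gamma^m\,dm=\frac{1}{\ln(1/\gamma)}\le \frac{1}{1-\gamma},
\end{equation*}
where the last inequality uses $\ln(1/\gamma)\ge 1-\gamma$.

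\textbf{Conclusion.} Combining the two symmetric contributions gives the bound $2\cdot 2\alpha^2C/(1-\gamma)^2=4C\alpha^2/(1-\gamma)^2$. The care needed is mainly in Step 2: justifying the pathwise conditional bound on $P(n_t>0)$ when $\Gamma^{\mathrm{info}}$ and $\rho_t$ are themselves random. I would handle this by conditioning on the prefix up to time $t$, noting that both $\Gamma^{\mathrm{info}}(0,t)$ and $\rho_t$ are measurable with respect to it.
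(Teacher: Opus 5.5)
Your Step~2 fails, and it is the point you flagged at the end: conditioning on the prefix up to time $t$ does not fix it. Because $s_{u+1}=s_u\oplus a_u$, the prefix $s_0,\dots,s_t$ reveals the realized actions $a_0,\dots,a_{t-1}$. The $\alpha\rho$-coupling only controls disagreement conditionally on the history $H_u$ \emph{before} $a_u$ is drawn. Conditionally on the realized $a_u$, the disagreement probability can be $1$.

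For a counterexample, take $t=1$, $\rho(s_0)=1$, two actions with $\pi(a|s_0)=\epsilon$ and $\tilde\pi(a|s_0)=0$, and $\alpha=\epsilon$. In every coupling $\tilde a_0=b$ almost surely, so $n_1>0$ exactly when $a_0=a$. Hence $P(n_1>0\mid s_0,s_1)=1$ on $\{s_1=s_0\oplus a\}$, while $\alpha M_1=\epsilon$.

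An averaged version does not rescue you either, because your weight $\Gamma^{\mathrm{info}}(0,t)\rho(s_t)$ depends on the same post-divergence states. With $\rho(s_0\oplus a)=1$ and $\rho(s_0\oplus b)=0$, one gets $\mathbb{E}[\mathbf{1}\{n_1>0\}\Gamma^{\mathrm{info}}(0,1)\rho(s_1)]=\gamma\epsilon$, but $\alpha\,\mathbb{E}[M_1\Gamma^{\mathrm{info}}(0,1)\rho(s_1)]=\gamma\epsilon^2$. So the inequality feeding into Step~3 is false, and the same objection applies to your symmetric claim along $\tilde\tau$. In TRPO this issue never arises: $\gamma^t$ is deterministic and $|\bar A|$ is bounded uniformly, so only the unconditional $P(n_t>0)$ is needed.

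The missing idea is to charge the factor $\alpha$ at the divergence time rather than at time $t$. Write $\mathbf{1}\{n_t>0\}=\sum_{k<t}\mathbf{1}\{T_{\mathrm{div}}=k\}$ and swap the sums. Then bound the whole post-divergence tail pathwise on each trajectory, $\sum_{t\ge k}\Gamma^{\mathrm{info}}(k,t)\rho(s_t)\le(1-\gamma)^{-1}$, by Lemma~\ref{lemma:geometric_sum}. What remains is $\mathbb{E}[\mathbf{1}\{T_{\mathrm{div}}=k\}\Gamma^{\mathrm{info}}(0,k)]$. Since $\Gamma^{\mathrm{info}}(0,k)$ is $H_k$-measurable, conditioning on $H_k$ is now legitimate and gives at most $\alpha\,\mathbb{E}_{\pi}[\Gamma^{\mathrm{info}}(0,k)\rho(s_k)]$. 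Summing over $k$ and applying Lemma~\ref{lemma:geometric_sum} once more yields $4C\alpha^2/(1-\gamma)^2$; this is the paper's argument. Your Step~3 computation is correct, and even gives the sharper $1/((1-\gamma)\ln(1/\gamma))$. However, the quantity $M_t$ it controls arises only from the invalid conditioning, so it is not needed.
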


\begin{proof}
Let $\Delta = |\eta^{\text{info}}(\tilde{\pi}) - L_\pi^{\text{info}}(\tilde{\pi})|$. The difference between the true and surrogate objectives can be expressed as the expected sum of advantages over the trajectories generated by $\tilde{\pi}$ vs $\pi$:
\begin{align}
    \Delta &= \left| \mathbb{E}_{\tau \sim \tilde\pi}\left[{\sum_{t=0}^{\infty} \Gamma^{\text{info}}(0,t) \bar{A}^{\text{info}}(s_t)}\right] - \mathbb{E}_{\tau \sim \pi}\left[{\sum_{t=0}^{\infty} \Gamma^{\text{info}}(0,t) \bar{A}^{\text{info}}(s_t)}\right] \right|.
\end{align}
We analyze this difference under the joint coupling distribution of the two policies. Let $T_{\text{div}}$ be the time step of the first action divergence between the coupled policies. For $t < T_{\text{div}}$, the trajectories are identical, so the difference is zero. We can decompose the expectation using the indicator function $\mathbf{1}{\{T_{\text{div}}=k\}}$:
\begin{align}
    \Delta &= \left| \mathbb{E}_{(\tau, \tilde{\tau})}\left[ \sum_{k=0}^{\infty} \mathbf{1}{\{T_{\text{div}}=k\}} \left(\sum_{t=k}^{\infty} \Gamma^{\text{info}}(0,t) \bar{A}^{\text{info}}(s_t) \bigg|_{\tilde\tau \sim \tilde{\pi}} - \sum_{t=k}^{\infty} \Gamma^{\text{info}}(0,t) \bar{A}^{\text{info}}(s_t) \bigg|_{\tau \sim \pi} \right) \right] \right|.
\end{align}
Using the triangle inequality and the fact that $\Gamma^{\text{info}}(0,t) = \Gamma^{\text{info}}(0,k)\Gamma^{\text{info}}(k,t)$, we factor out the common discount term up to step $k$:

\begin{equation}
\begin{aligned}
\Delta
&\le
\sum_{k=0}^{\infty}
\mathbb{E}_{(\tau,\tilde{\tau})}
\Bigg[
\mathbf{1}\{T_{\mathrm{div}}=k\}
\Gamma^{\mathrm{info}}(0,k)
\Bigg(
\\
&\qquad\qquad
\left|
\sum_{t=k}^{\infty}
\Gamma^{\mathrm{info}}(k,t)
\bar{A}^{\mathrm{info}}(s_t)
\right|_{\tilde{\tau}\sim\tilde{\pi}}
+
\left|
\sum_{t=k}^{\infty}
\Gamma^{\mathrm{info}}(k,t)
\bar{A}^{\mathrm{info}}(s_t)
\right|_{\tau\sim\pi}
\Bigg)
\Bigg].
\end{aligned}
\label{eq:coupling_tail_bound}
\end{equation}
Leveraging the result from Lemma~\ref{llama_adv_upper_bound}, we obtain:
\begin{align}
    \left| \sum_{t=k}^{\infty} \Gamma^{\text{info}}(k,t) \bar{A}^{\text{info}}(s_t) \right|
    \le \sum_{t=k}^{\infty} \Gamma^{\text{info}}(k,t) \left( 2\alpha  \rho(s_t) C \right) = 2\alpha C \sum_{t=k}^{\infty} \Gamma^{\text{info}}(k,t) \rho(s_t)
\end{align}
Applying Lemma~\ref{lemma:geometric_sum}, we bound the summation term inside the expectation:
\begin{align}
    \Delta 
    &\le \sum_{k=0}^{\infty} \mathbb{E}_{(\tau, \tilde{\tau})}\left[ \mathbf{1}{\{T_{\text{div}}=k\}} \Gamma^{\text{info}}(0,k) \left( \frac{2\alpha C}{1-\gamma} + \frac{2\alpha C}{1-\gamma} \right) \right] \\
    &= \frac{4\alpha C}{1-\gamma} \sum_{k=0}^{\infty} \mathbb{E}_{(\tau, \tilde{\tau})}\left[ \mathbf{1}{\{T_{\text{div}}=k\}} \Gamma^{\text{info}}(0,k) \right].
\end{align}
Next, we bound the probability of divergence. Let $H_k$ denote the coupled history up to state $s_k$,
before the actions at step $k$ are sampled. Since $
\Gamma^{\text{info}}(0,k)
=
\gamma^{\sum_{t=0}^{k-1}\rho(s_t)}
$ is determined by this history, the law of iterated expectations gives
\begin{align}
&
\mathbb{E}_{(\tau,\tilde{\tau})}
\left[
\mathbf{1}{\{T_{\text{div}}=k\}}
\Gamma^{\text{info}}(0,k)
\right]
\\
&=
\mathbb{E}_{(\tau,\tilde{\tau})}
\left[
\Gamma^{\text{info}}(0,k)
\,
\mathbb{E}
\left[
\mathbf{1}{\{T_{\text{div}}=k\}}
|
H_k
\right]
\right].
\end{align}

The event $\{T_{\text{div}}=k\}$ occurs if the two trajectories have
not diverged before step $k$ and their actions differ at step $k$.
Hence,
\begin{equation}
\{T_{\text{div}}=k\}
=
\{T_{\text{div}}\ge k\}
\cap
\{a_k\neq\tilde{a}_k\}.
\end{equation}
Because whether $T_{\text{div}}\ge k$ holds is already determined by
the history $H_k$,
\begin{align}
\mathbb{E}
\left[
\mathbf{1}{\{T_{\text{div}}=k\}}
|  H_k
\right]
&=\mathbb{E}
\left[
\mathbf{1}{\{T_{\text{div}}\ge k\}}\mathbf{1}{\{a_k\neq\tilde{a}_k\}}
|  H_k
\right] \\
&=
\mathbf{1}{\{T_{\text{div}}\ge k\}}
P(a_k\neq\tilde{a}_k|H_k)
\\
&\le
\mathbf{1}{\{T_{\text{div}}\ge k\}}
\alpha\rho(s_k),
\end{align}
where the inequality follows from the $\alpha\rho$-coupling assumption,
since on $\{T_{\text{div}}\ge k\}$ the two trajectories share the same
state $s_k$.

Substituting this inequality into the previous expression gives
\begin{align}
&
\mathbb{E}_{(\tau,\tilde{\tau})}
\left[
\mathbf{1}{\{T_{\text{div}}=k\}}
\Gamma^{\text{info}}(0,k)
\right]
\\
&\le
\alpha
\mathbb{E}_{(\tau,\tilde{\tau})}
\left[
\mathbf{1}{\{T_{\text{div}}\ge k\}}
\Gamma^{\text{info}}(0,k)
\rho(s_k)
\right]
\\
&\le
\alpha
\mathbb{E}_{(\tau,\tilde{\tau})}
\left[
\Gamma^{\text{info}}(0,k)
\rho(s_k)
\right]
\\
&=
\alpha
\mathbb{E}_{\tau\sim\pi}
\left[
\Gamma^{\text{info}}(0,k)
\rho(s_k)
\right],
\end{align}
where the second inequality uses
$\mathbf{1}{\{T_{\text{div}}\ge k\}}\le1$.
In the last equality, $s_k$ and $\Gamma^{\text{info}}(0,k)$ are taken
along the $\pi$-trajectory; the equality follows because the
$\pi$-marginal of the trajectory coupling is distributed according to
$\pi$.

Substituting this bound back into the summation:
\begin{align}
\Delta
&\le
\frac{4\alpha^2 C}{1-\gamma}
\mathbb{E}_{\tau\sim\pi}
\left[
\sum_{k=0}^{\infty}
\Gamma^{\text{info}}(0,k)
\rho(s_k)
\right].
\label{eq:final_sum}
\end{align}

Finally, applying Lemma~\ref{lemma:geometric_sum} again to the
summation term in Eq.~\eqref{eq:final_sum}:
\begin{equation}
\Delta
\le
\frac{4\alpha^2 C}{1-\gamma}
\cdot
\frac{1}{1-\gamma}
=
\frac{4 C \alpha^2}{(1-\gamma)^2}.
\end{equation}
\end{proof}

\subsection{Theoretical Analysis of Information-Time Policy Optimization}\label{app:proof_policy_optimization}

To connect the entropy-based information density to the policy update analysis, we examine the policy change induced by the information-time
surrogate at a fixed state $s$. With the information-time advantage evaluated under the old policy, its action-dependent component can be
written as
\begin{equation}
\sum_{a}
\tilde{\pi}(a|s)
A_{\pi_{\mathrm{old}}}^{\mathrm{info}}(s,a)=
\mathbb{E}_{a\sim\pi_{\mathrm{old}}(\cdot|s)}
\left[
\frac{\tilde{\pi}(a|s)}
{\pi_{\mathrm{old}}(a|s)}
A_{\pi_{\mathrm{old}}}^{\mathrm{info}}(s,a)
\right].
\label{eq:conditional_surrogate_appendix}
\end{equation}
The following proposition characterizes the
policy divergence induced by a local gradient step on this objective.

\begin{proposition}
Suppose that $\pi_{\mathrm{old}}$ is parameterized by a softmax
distribution and that $\pi_{\mathrm{new}}$ is obtained by one local
softmax-logit gradient step induced by the conditional surrogate, with
step size $0\leq\beta\leq\bar{\beta}$. Then, for every state $s$,
\begin{align}
D_{\mathrm{TV}}
\left(
\pi_{\mathrm{new}}(\cdot|s),
\pi_{\mathrm{old}}(\cdot|s)
\right)
\leq
\frac{\bar{\beta}C}{2}
\left(
1-\exp\{-\mathcal H_{\mathrm{old}}(s)\}
\right)
\leq
\frac{\bar{\beta}C\mathcal H_{\max}}{2}
\rho(s),
\label{eq:entropy_local_tv}
\end{align}
where
$C=\max_{s,a}|A_{\pi_{\mathrm{old}}}^{\mathrm{info}}(s,a)|$.
\end{proposition}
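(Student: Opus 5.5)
The plan is to work at a single fixed state $s$ and parametrize $\pi_{\mathrm{old}}(\cdot|s)=\mathrm{softmax}(z)$ by a logit vector $z$. I will write $p_b\coloneqq\pi_{\mathrm{old}}(b|s)$ and $A_b\coloneqq A_{\pi_{\mathrm{old}}}^{\mathrm{info}}(s,b)$. The conditional surrogate in Eq.~\eqref{eq:conditional_surrogate_appendix} evaluated at $\tilde\pi=\mathrm{softmax}(z)$ is $\sum_b \mathrm{softmax}(z)_b A_b$. Its logit gradient is $g_b=p_b(A_b-\sum_c p_cA_c)$.

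Since $A$ is the advantage of $\pi_{\mathrm{old}}$ itself, we have $\sum_c p_cA_c=0$. This gives the clean form $g_b=p_bA_b$ and $|g_b|\le Cp_b$. The update is $z'=z+\beta g$ with $0\le\beta\le\bar\beta$. In the local (first-order) regime the induced probability change is
\begin{equation*}
\delta p_b=\beta p_b\Big(g_b-\sum_c p_cg_c\Big),
\end{equation*}
so that
\begin{equation*}
D_{\mathrm{TV}}=\tfrac12\sum_b|\delta p_b|\le\tfrac{\bar\beta}{2}\sum_b p_b\Big|g_b-\sum_c p_cg_c\Big|.
\end{equation*}
The first inequality then reduces to a purely distributional bound:
\begin{equation*}
\sum_b p_b\Big|g_b-\sum_cp_cg_c\Big|\le C\big(1-e^{-\mathcal H_{\mathrm{old}}(s)}\big).
\end{equation*}

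For that bound I will pass through the maximal probability $p_\star=\max_bp_b$. The elementary fact $e^{-\mathcal H}\le p_\star$ holds because Shannon entropy dominates min-entropy $-\log p_\star$. It gives $1-p_\star\le 1-e^{-\mathcal H_{\mathrm{old}}(s)}$, so it suffices to show that the mean absolute deviation of $g$ under $p$ is at most $C(1-p_\star)$.

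The key input is the zero-mean constraint $\sum_bp_bA_b=0$. For the mode $b_\star$ it forces $p_\star|A_{b_\star}|\le\sum_{b\ne b_\star}p_b|A_b|\le C(1-p_\star)$. For every other action, $|g_b|\le Cp_b$ and these $p_b$ sum to $1-p_\star$. So all of $g$ except the single mode coordinate carries mass at most $C(1-p_\star)$, and the mode coordinate is itself controlled by the same quantity. I will then bound $\sum_bp_b|g_b-\bar g|$ by splitting into the mode term and the non-mode terms, using $|\bar g|\le\sum_bp_b|g_b|$.

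I expect this step to be the main obstacle. A crude triangle inequality loses a factor of 2, and recovering exactly $C(1-p_\star)$ will need a sharper argument. The first thing I would try is to write the mean absolute deviation as $2\sum_bp_b(g_b-\bar g)^+$ and use that only non-mode coordinates, or the mode with weight $p_\star\le1$, can contribute to the positive part. A secondary caveat is that the step is treated to first order in $\beta$ (the ``local step'' of the statement). If an exact bound is wanted, I would instead integrate the softmax Jacobian along the segment $z+t\beta g$, $t\in[0,1]$. The same mean-deviation bound then holds uniformly along the path provided the entropy proxy is evaluated at the old policy, which is why the local-step assumption is stated.

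The second inequality in Eq.~\eqref{eq:entropy_local_tv} is routine. Since $1-e^{-x}\le x$ for $x\ge0$, we get
\begin{equation*}
\tfrac{\bar\beta C}{2}\big(1-e^{-\mathcal H_{\mathrm{old}}(s)}\big)\le\tfrac{\bar\beta C}{2}\mathcal H_{\mathrm{old}}(s)=\tfrac{\bar\beta C\mathcal H_{\max}}{2}\rho(s),
\end{equation*}
by the definition $\rho(s)=\mathcal H_{\mathrm{old}}(s)/\mathcal H_{\max}$. Reading off $\alpha=\bar\beta C\mathcal H_{\max}/2$ then verifies condition Eq.~\eqref{eq:tv_divergence}. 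Through the coupling characterization, this makes Theorem~\ref{thm:lower_bound} applicable.
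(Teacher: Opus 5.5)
Your reduction is correct, and the inequality you reduce to, $\sum_b p_b|g_b-\bar g|\le C(1-p_\star)\le C(1-e^{-\mathcal H_{\mathrm{old}}(s)})$, is true. However, the step you flag as the main obstacle is left unproved, and it is the core of the argument.

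It cannot be closed by lossy bookkeeping, because the target is attained with equality. For two actions with $p=(\tfrac12,\tfrac12)$ and $A=(C,-C)$ you get $g=(\tfrac C2,-\tfrac C2)$ and $\bar g=0$, so $\sum_b p_b|g_b-\bar g|=\tfrac C2=C(1-p_\star)=C(1-e^{-\mathcal H_{\mathrm{old}}(s)})$.

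The missing ingredient is the $\ell_1\to\ell_1$ norm of the softmax Jacobian $J_{ij}=p_i(\mathbf{1}\{i=j\}-p_j)$. Its absolute column sums equal $2p_j(1-p_j)\le\tfrac12$, so $\sum_b p_b|g_b-\bar g|=\|Jg\|_1\le\tfrac12\|g\|_1$. Your mode/non-mode split already gives $\|g\|_1=\sum_b p_b|A_b|\le 2C(1-p_\star)$, and together these give exactly $C(1-p_\star)$.

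Your positive-part idea can succeed only if it reproduces these weights. For example, with $S_+=\{b:g_b>\bar g\}$ one has $\sum_b p_b|g_b-\bar g|=2\sum_{b\in S_+,\,c\notin S_+}p_bp_c(g_b-g_c)\le\sum_j 2p_j(1-p_j)|g_j|$. Observing which coordinates can be positive does not by itself supply the factor $p_j(1-p_j)\le\tfrac14$.

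The same ingredient is missing from your treatment of the step size. The statement is an exact inequality for every $\beta\le\bar\beta$, and the paper proves it exactly; ``local'' only means the gradient is taken at $z_{\mathrm{old}}$. A first-order expansion in $\beta$ therefore does not suffice.

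Your claim that the mean-deviation bound persists along $z+\xi\beta g$ is not supported by your argument. At an intermediate point the weights are $p^\xi=\mathrm{softmax}(z+\xi\beta g)\neq p$, whereas $g_b=p_bA_b$, the mode $p_\star$, and the zero-mean identity all refer to the old $p$. What makes the path integral work is that $\|J(\xi)\|_{1\to1}\le\tfrac12$ holds at every point. This gives $\|\pi_{\mathrm{new}}(\cdot|s)-\pi_{\mathrm{old}}(\cdot|s)\|_1\le\tfrac{\beta}{2}\|g\|_1$ directly, which is the paper's route.

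The rest of your plan is sound and differs from the paper only in the entropy comparison. The paper bounds $\sum_a p_a|A_a|\le 2C(1-\sum_a p_a^2)$ via pairwise differences and then uses Jensen, $e^{-\mathcal H_{\mathrm{old}}(s)}\le\sum_a p_a^2$. Your min-entropy route through $p_\star$ gives the slightly sharper intermediate bound $2C(1-p_\star)$, since $\sum_a p_a^2\le p_\star$.
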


\begin{proof}
Fix an arbitrary state $s$. By the definition of $C$,
\begin{equation}
\left|
A_{\pi_{\mathrm{old}}}^{\mathrm{info}}(s,a)
\right|
\leq C
\qquad
\text{for every }a\in\mathcal A.
\label{eq:uniform_advantage_bound}
\end{equation}
Moreover, the information-time advantage has zero expectation under
the old policy,
\begin{equation}
\sum_{a\in\mathcal A}
\pi_{\mathrm{old}}(a|s)
A_{\pi_{\mathrm{old}}}^{\mathrm{info}}(s,a)
=
0.
\label{eq:advantage_zero_mean_local}
\end{equation}
Therefore, for any action $a$,
\begin{align}
A_{\pi_{\mathrm{old}}}^{\mathrm{info}}(s,a)
&=
A_{\pi_{\mathrm{old}}}^{\mathrm{info}}(s,a)
-
\sum_{b\in\mathcal A}
\pi_{\mathrm{old}}(b|s)
A_{\pi_{\mathrm{old}}}^{\mathrm{info}}(s,b)
\nonumber\\
&=
\sum_{b\neq a}
\pi_{\mathrm{old}}(b|s)
\left[
A_{\pi_{\mathrm{old}}}^{\mathrm{info}}(s,a)
-
A_{\pi_{\mathrm{old}}}^{\mathrm{info}}(s,b)
\right].
\label{eq:advantage_pairwise_general}
\end{align}
Using Eq.~\eqref{eq:uniform_advantage_bound},
\begin{align}
\left|
A_{\pi_{\mathrm{old}}}^{\mathrm{info}}(s,a)
\right|
&\leq
\sum_{b\neq a}
\pi_{\mathrm{old}}(b|s)
\left|
A_{\pi_{\mathrm{old}}}^{\mathrm{info}}(s,a)
-
A_{\pi_{\mathrm{old}}}^{\mathrm{info}}(s,b)
\right|
\nonumber\\
&\leq
2C
\sum_{b\neq a}
\pi_{\mathrm{old}}(b|s)
\nonumber\\
&=
2C
\left(
1-\pi_{\mathrm{old}}(a|s)
\right).
\label{eq:advantage_probability_bound}
\end{align}
Multiplying by $\pi_{\mathrm{old}}(a|s)$ and summing over actions gives
\begin{align}
&
\sum_{a\in\mathcal A}
\pi_{\mathrm{old}}(a|s)
\left|
A_{\pi_{\mathrm{old}}}^{\mathrm{info}}(s,a)
\right|
\nonumber\\
&\qquad\leq
2C
\sum_{a\in\mathcal A}
\pi_{\mathrm{old}}(a|s)
\left(
1-\pi_{\mathrm{old}}(a|s)
\right)
\nonumber\\
&\qquad=
2C
\left(
1-
\sum_{a\in\mathcal A}
\pi_{\mathrm{old}}(a|s)^2
\right).
\label{eq:weighted_advantage_collision}
\end{align}

By concavity of the logarithm,
\begin{align}
-\mathcal H_{\mathrm{old}}(s)
=
\sum_{a\in\mathcal A}
\pi_{\mathrm{old}}(a|s)
\log\pi_{\mathrm{old}}(a|s)
\leq
\log
\left(
\sum_{a\in\mathcal A}
\pi_{\mathrm{old}}(a|s)^2
\right).
\end{align}
Exponentiating both sides yields
\begin{equation}
\exp\{-\mathcal H_{\mathrm{old}}(s)\}
\leq
\sum_{a\in\mathcal A}
\pi_{\mathrm{old}}(a|s)^2.
\label{eq:entropy_collision_relation}
\end{equation}
Combining
Eq.~\eqref{eq:weighted_advantage_collision}
and
Eq.~\eqref{eq:entropy_collision_relation},
\begin{align}
&
\sum_{a\in\mathcal A}
\pi_{\mathrm{old}}(a|s)
\left|
A_{\pi_{\mathrm{old}}}^{\mathrm{info}}(s,a)
\right|
\leq
2C
\left(
1-\exp\{-\mathcal H_{\mathrm{old}}(s)\}
\right).
\label{eq:entropy_weighted_advantage_bound}
\end{align}

Let $z_{\mathrm{old}}(s,\cdot)$ denote the logits of
$\pi_{\mathrm{old}}(\cdot|s)$, and let
$\pi_z(\cdot|s)$ denote the softmax policy induced by logits
$z(s,\cdot)$. Holding the information-time advantage under the old
policy fixed, the conditional surrogate in
Eq.~\eqref{eq:conditional_surrogate_appendix} can be written as
\begin{equation}
\sum_{b\in\mathcal A}
\pi_z(b|s)
A_{\pi_{\mathrm{old}}}^{\mathrm{info}}(s,b).
\label{eq:conditional_surrogate_logit_form}
\end{equation}
For the softmax mapping,
\begin{equation}
\frac{\partial\pi_z(b|s)}
{\partial z(s,a)}
=
\pi_z(b|s)
\left(
\mathbf{1}\{a=b\}
-
\pi_z(a|s)
\right).
\label{eq:softmax_derivative_local}
\end{equation}
Evaluating the gradient of
Eq.~\eqref{eq:conditional_surrogate_logit_form}
at $z=z_{\mathrm{old}}$ and using
Eq.~\eqref{eq:advantage_zero_mean_local} gives
\begin{align}
&
\left.
\frac{\partial}{\partial z(s,a)}
\sum_{b\in\mathcal A}
\pi_z(b|s)
A_{\pi_{\mathrm{old}}}^{\mathrm{info}}(s,b)
\right|_{z=z_{\mathrm{old}}}
\nonumber\\
&=
\pi_{\mathrm{old}}(a|s)
A_{\pi_{\mathrm{old}}}^{\mathrm{info}}(s,a)
-
\pi_{\mathrm{old}}(a|s)
\sum_{b\in\mathcal A}
\pi_{\mathrm{old}}(b|s)
A_{\pi_{\mathrm{old}}}^{\mathrm{info}}(s,b)
\nonumber\\
&=
\pi_{\mathrm{old}}(a|s)
A_{\pi_{\mathrm{old}}}^{\mathrm{info}}(s,a).
\label{eq:conditional_surrogate_logit_gradient}
\end{align}
Hence, the local softmax-logit gradient step in the proposition
satisfies
\begin{equation}
\Delta z(s,a)
\coloneqq
z_{\mathrm{new}}(s,a)
-
z_{\mathrm{old}}(s,a)
=
\beta
\pi_{\mathrm{old}}(a|s)
A_{\pi_{\mathrm{old}}}^{\mathrm{info}}(s,a).
\label{eq:local_logit_update}
\end{equation}
Taking the $\ell_1$ norm and using
Eq.~\eqref{eq:entropy_weighted_advantage_bound},
\begin{align}
\|\Delta z(s,\cdot)\|_1
&=
\beta
\sum_{a\in\mathcal A}
\pi_{\mathrm{old}}(a|s)
\left|
A_{\pi_{\mathrm{old}}}^{\mathrm{info}}(s,a)
\right|
\nonumber\\
&\leq
2\bar{\beta}C
\left(
1-\exp\{-\mathcal H_{\mathrm{old}}(s)\}
\right).
\label{eq:entropy_logit_movement_general}
\end{align}

Consider the line segment
\begin{equation}
z_\xi(s,\cdot)
=
z_{\mathrm{old}}(s,\cdot)
+
\xi\Delta z(s,\cdot),
\qquad
\xi\in[0,1].
\end{equation}
Let $J(\xi)$ denote the Jacobian of the softmax probability vector
with respect to the logits, evaluated at $z_\xi(s,a)$. Its entries are
\begin{equation}
J_{ij}(\xi)
=
\left.
\frac{
\partial \pi_z(i|s)
}{
\partial z(s,j)
}
\right|_{z=z_\xi}
=
\pi_{z_\xi}(i|s)
\left(
\mathbf{1}\{i=j\}
-
\pi_{z_\xi}(j|s)
\right).
\label{eq:softmax_jacobian_entries}
\end{equation}
For every column $j$,
\begin{align}
\sum_i
|J_{ij}(\xi)|
=
2\pi_{z_\xi}(j|s)
\left(
1-\pi_{z_\xi}(j|s)
\right)
\leq
\frac{1}{2}.
\end{align}
Therefore,
\begin{equation}
\|J(\xi)\|_{1\rightarrow1}
\leq
\frac{1}{2}.
\label{eq:softmax_jacobian_l1_bound}
\end{equation}

By the fundamental theorem of calculus,
\begin{align}
&
\left\|
\pi_{\mathrm{new}}(\cdot|s)
-
\pi_{\mathrm{old}}(\cdot|s)
\right\|_1
\nonumber\\
&=
\left\|
\int_0^1
J(\xi)\Delta z(s,\cdot)
\,d\xi
\right\|_1
\nonumber\\
&\leq
\int_0^1
\|J(\xi)\|_{1\rightarrow1}
\|\Delta z(s,\cdot)\|_1
\,d\xi
\nonumber\\
&\leq
\frac{1}{2}
\|\Delta z(s,\cdot)\|_1.
\label{eq:softmax_l1_policy_bound}
\end{align}
Thus,
\begin{align}
&D_{\mathrm{TV}}
\left(
\pi_{\mathrm{new}}(\cdot|s),
\pi_{\mathrm{old}}(\cdot|s)
\right)
\nonumber\\
&\qquad=
\frac{1}{2}
\left\|
\pi_{\mathrm{new}}(\cdot|s)
-
\pi_{\mathrm{old}}(\cdot|s)
\right\|_1
\nonumber\\
&\qquad\leq
\frac{1}{4}
\|\Delta z(s,\cdot)\|_1
\nonumber\\
&\qquad\leq
\frac{\bar{\beta}C}{2}
\left(
1-\exp\{-\mathcal H_{\mathrm{old}}(s)\}
\right).
\label{eq:entropy_tv_first_bound}
\end{align}
Finally, since $1-e^{-x}\leq x$ for $x\geq0$ and
$\mathcal H_{\mathrm{old}}(s)=\mathcal H_{\max}\rho(s)$,
\begin{equation}
D_{\mathrm{TV}}
\left(
\pi_{\mathrm{new}}(\cdot|s),
\pi_{\mathrm{old}}(\cdot|s)
\right)
\leq
\frac{\bar{\beta}C\mathcal H_{\max}}{2}
\rho(s).
\end{equation}
Since $s$ was arbitrary, the result holds for every state.
\end{proof}

By setting
$\alpha=
\bar{\beta}C\mathcal H_{\max}/{2}$,
the proposition yields
\begin{equation}
D_{\mathrm{TV}}
\left(
\pi_{\mathrm{new}}(\cdot|s),
\pi_{\mathrm{old}}(\cdot|s)
\right)
\leq
\alpha\rho(s),
\qquad
\forall s,
\label{eq:entropy_density_tv_condition}
\end{equation}
which is the state-dependent divergence condition in
Eq.~\eqref{eq:tv_divergence}.

To connect this bound to the coupling formulation used in the proof of
Theorem~\ref{thm:lower_bound}, we first define the common probability mass:
\begin{equation}
m_a
\coloneqq
\min
\left\{
\pi_{\mathrm{old}}(a|s),
\pi_{\mathrm{new}}(a|s)
\right\},
\qquad
c
\coloneqq
\sum_{a\in\mathcal A}m_a.
\label{eq:common_probability_mass}
\end{equation}
Using
\begin{equation}
\min\{x,y\}
=
\frac{x+y-|x-y|}{2},
\end{equation}
we obtain
\begin{align}
1-c
&=
\frac{1}{2}
\sum_{a\in\mathcal A}
\left|
\pi_{\mathrm{new}}(a|s)
-
\pi_{\mathrm{old}}(a|s)
\right|
\nonumber\\
&=
D_{\mathrm{TV}}
\left(
\pi_{\mathrm{new}}(\cdot|s),
\pi_{\mathrm{old}}(\cdot|s)
\right).
\label{eq:common_mass_tv}
\end{align}

The quantity $m_a$ represents the probability mass shared by the two
policies at action $a$. We now construct a coupling of $\pi_{\text{old}}(\cdot|s)$ and $\pi_{\text{new}}(\cdot|s)$ by assigning this common mass to identical action pairs. Specifically, for each action $a$, we assign
\begin{equation}
P
\left(
a_{\mathrm{old}}=a,
a_{\mathrm{new}}=a
\mid s
\right)
=
m_a.
\label{eq:common_mass_assignment}
\end{equation}
Since $
\sum_{a\in\mathcal A}m_a=c$,
the total probability assigned to identical action pairs is $c$.

After removing this common mass, the remaining probability masses of
the two policies are
\begin{equation}
\pi_{\mathrm{old}}(a|s)-m_a
\qquad\text{and}\qquad
\pi_{\mathrm{new}}(a|s)-m_a,\nonumber
\end{equation}
respectively. Moreover,
\begin{align}
\sum_{a\in\mathcal A}
\left(
\pi_{\mathrm{old}}(a|s)-m_a
\right)
&=
1-c,
\nonumber\\
\sum_{a\in\mathcal A}
\left(
\pi_{\mathrm{new}}(a|s)-m_a
\right)
&=
1-c.
\end{align}
Thus, when $c<1$, the normalized remaining distributions are
\begin{equation}
\frac{
\pi_{\mathrm{old}}(a|s)-m_a
}{
1-c
},
\qquad
\frac{
\pi_{\mathrm{new}}(a|s)-m_a
}{
1-c
}.
\label{eq:residual_distributions}
\end{equation}
The remaining probability mass $1-c$ is assigned according to these
two distributions.

By the definition of $m_a$, for every action $a$, at least one of
\begin{equation}
\pi_{\mathrm{old}}(a|s)-m_a
\qquad\text{and}\qquad
\pi_{\mathrm{new}}(a|s)-m_a\nonumber
\end{equation}
is zero. Hence, the two remaining distributions have disjoint
supports. The sampled actions therefore agree on the common mass and
differ almost surely on the remaining mass. Consequently,
\begin{align}
P
\left(
a_{\mathrm{old}}\neq a_{\mathrm{new}}
\mid s
\right)
=
1-c
=
D_{\mathrm{TV}}
\left(
\pi_{\mathrm{old}}(\cdot|s),
\pi_{\mathrm{new}}(\cdot|s)
\right)\le \alpha \rho(s).
\label{eq:maximal_coupling_tv}
\end{align}
Hence,
$(\pi_{\mathrm{old}},\pi_{\mathrm{new}})$
forms an $\alpha\rho$-coupled policy pair in the sense used in the proof of Theorem~\ref{thm:lower_bound}.

\begin{proposition}
\label{app:recomputed_density_bound}
Suppose that the terminal reward is bounded by
$|R_T|\leq R_{\max}<\infty$.
Under the update from $\pi_k$ to $\pi_{k+1}$ in
Proposition~\ref{proposition:entropy_policy_movement}, with $\alpha=\bar{\beta}C\mathcal H_{\max}/2$, the change in
information-time return caused by recomputing the information density
satisfies
\begin{equation}
\left|
\eta_{\rho_{k+1}}^{\mathrm{info}}(\pi_{k+1})
-
\eta_{\rho_k}^{\mathrm{info}}(\pi_{k+1})
\right|
\leq
\frac{8R_{\max}}{e}
\alpha.
\label{app_eq:recomputed_density_perturbation_bound}
\end{equation}
\end{proposition}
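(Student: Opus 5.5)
The plan is to use the sparse terminal reward to turn both returns into expectations of a single discount factor, and then to compare the two clocks on a logarithmic scale. By Eq.~\eqref{eq:sparse_reward}, for any density $\rho$,
\begin{equation}
\eta_{\rho}^{\mathrm{info}}(\pi_{k+1})=\mathbb{E}_{\tau\sim\pi_{k+1}}\bigl[\gamma^{X_\rho(\tau)}R_T\bigr],\qquad X_\rho(\tau)\coloneqq\sum_{t=0}^{T-1}\rho(s_t).
\end{equation}
Since the trajectory law is the same in both terms, the quantity to bound is
\begin{equation}
\bigl|\eta_{\rho_{k+1}}^{\mathrm{info}}(\pi_{k+1})-\eta_{\rho_k}^{\mathrm{info}}(\pi_{k+1})\bigr|\le R_{\max}\,\mathbb{E}_{\tau\sim\pi_{k+1}}\bigl|\gamma^{X_{\rho_{k+1}}(\tau)}-\gamma^{X_{\rho_k}(\tau)}\bigr|.
\end{equation}
So it suffices to prove the pathwise bound $|\gamma^{X'}-\gamma^{X}|\le \tfrac{8\alpha}{e}$, writing $X=X_{\rho_k}(\tau)$ and $X'=X_{\rho_{k+1}}(\tau)$.

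\textbf{Step 1: reduction to a log-ratio of clocks.} Set $u=\log X$ and $f(u)=\gamma^{e^u}$. Then
\begin{equation}
|f'(u)|=e^u\log(1/\gamma)\,\gamma^{e^u}=y e^{-y},\qquad y\coloneqq e^u\log(1/\gamma),
\end{equation}
and $y e^{-y}\le 1/e$ for all $y\ge 0$. By the mean value theorem,
\begin{equation}
|\gamma^{X'}-\gamma^{X}|\le \tfrac1e\,|\log X'-\log X|.
\end{equation}
This holds uniformly in $\gamma$, which accounts for the $1/e$ in the statement. Degenerate cases with $X=0$ or $X'=0$ need separate treatment, for instance by a limiting argument.

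\textbf{Step 2: reduction to a per-state bound.} A ratio of sums of nonnegative terms lies between the minimum and maximum of the termwise ratios. Hence it suffices to show, for every state on the trajectory,
\begin{equation}
\Bigl|\log\frac{\rho_{k+1}(s)}{\rho_k(s)}\Bigr|\le 8\alpha.
\end{equation}
I would take $\mathcal H_{\max}$ to be the same for both clocks, so that the ratio of densities equals $\mathcal H_{\mathrm{new}}(s)/\mathcal H_{\mathrm{old}}(s)$.

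\textbf{Step 3 (main obstacle): relative entropy change under the logit step.} The entropy change must be controlled \emph{relatively}, i.e.\ $|\tfrac{d}{d\xi}\log\mathcal H(\pi_{z_\xi})|\le 8\alpha$ along the segment $z_\xi=z_{\mathrm{old}}+\xi\Delta z$, using the explicit update $\Delta z(s,a)=\beta\,\pi_{\mathrm{old}}(a|s)A^{\mathrm{info}}_{\pi_{\mathrm{old}}}(s,a)$ from the proof of Proposition~\ref{proposition:entropy_policy_movement}.

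For a softmax distribution $p=\pi_{z_\xi}$,
\begin{equation}
\frac{d\mathcal H}{d\xi}=-\mathrm{Cov}_{p}\bigl(\log p,\ \Delta z\bigr).
\end{equation}
I would try to bound this covariance by $\mathcal H$ times a factor of order $\bar\beta C\mathcal H_{\max}$. The tools I would use are:
\begin{itemize}[leftmargin=*]
\item the pointwise bound $|\Delta z_a|\le 2\beta C\,\pi_{\mathrm{old}}(a)\bigl(1-\pi_{\mathrm{old}}(a)\bigr)$ from Eq.~\eqref{eq:advantage_probability_bound};
\item the fact that $\pi(1-\pi)$ vanishes at nearly deterministic actions, which are exactly where $\mathcal H$ is small;
\item an elementary inequality of the form $\sum_a p_a(1-p_a)\,|\log p_a+\mathcal H|\lesssim \mathcal H$.
\end{itemize}
The ratio $p/\pi_{\mathrm{old}}$ along the segment stays within constant factors, since the logit perturbation is bounded by $2\bar\beta C$. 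That constant-factor slack, together with the factor-of-two losses in the steps above, is where I expect the constant $8$ (rather than $4$ or $2$) to come from.

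This relative control is the delicate part. If a clean multiplicative bound fails near $\mathcal H\to 0$, my fallback is to use $\mathcal H_{\max}\rho=\mathcal H$ directly and absorb the small-entropy regime using $1-e^{-\mathcal H}\le\mathcal H$, as in Proposition~\ref{proposition:entropy_policy_movement}.
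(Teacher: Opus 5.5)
Your reductions match the paper's: the terminal-reward representation, the bound $|\gamma^{x}-\gamma^{y}|\le e^{-1}|\log x-\log y|$ via $u\mapsto\gamma^{e^u}$, and the ratio-of-sums reduction to a per-state log-ratio are exactly Steps 5, 4 and 3 of the paper's proof. The gap is your Step 3. It carries all the content, you leave it as a plan, and as sketched it would not give the stated bound, for two reasons.

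First, comparing $p=\pi_{z_\xi}$ with $\pi_{\mathrm{old}}$ along the segment costs a factor of order $e^{2\|\Delta z\|_\infty}$, i.e.\ $e^{O(\bar\beta C)}$. That is not an absolute constant, so the $8$ cannot come from ``constant-factor slack.'' Second, and more importantly, the target $8\alpha=4\bar\beta C\mathcal H_{\max}$ needs two separate entropy factors. One is $\mathcal H_{z_\xi}$, which makes the derivative relative. The other is $\mathcal H_{\mathrm{old}}(s)\le\mathcal H_{\max}$, which must come from the size of the step. Your elementary inequality $\sum_a p_a(1-p_a)|\log p_a+\mathcal H|\lesssim\mathcal H$ turns the $\pi(1-\pi)$ factor in $\Delta z$ into the relative factor $\mathcal H$. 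That leaves a bound of order $\bar\beta C$ with no $\mathcal H_{\max}$, which is not $O(\alpha)$ when $\mathcal H_{\max}$ is small. So bounding the step by $\mathcal H_{\mathrm{old}}(s)$, e.g.\ via $1-e^{-\mathcal H}\le\mathcal H$ applied to $\|\Delta z\|_1$, is not a fallback for small entropies; it is what supplies the $\mathcal H_{\max}$ in $\alpha$.

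The missing idea is a one-line gradient bound. Since $\partial\mathcal H_z/\partial z_a=-p_a(\log p_a+\mathcal H_z)$ and $|\log p_a+\mathcal H_z|\le-\log p_a+\mathcal H_z$, you get $\|\nabla_z\mathcal H_z\|_1\le 2\mathcal H_z$ at every point of the segment. H\"older then gives $|\tfrac{d}{d\xi}\log\mathcal H_{z_\xi}(s)|\le 2\|\Delta z\|_\infty$, with no comparison between $p$ and $\pi_{\mathrm{old}}$ at all. Separately, the proof of Proposition~\ref{proposition:entropy_policy_movement} gives $\|\Delta z\|_\infty\le\|\Delta z\|_1\le 2\bar\beta C(1-e^{-\mathcal H_k(s)})\le 2\bar\beta C\mathcal H_{\max}\rho_k(s)=4\alpha\rho_k(s)$. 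Integrating over $\xi\in[0,1]$ yields $|\log(\mathcal H_{k+1}(s)/\mathcal H_k(s))|\le 8\alpha\rho_k(s)\le 8\alpha$, so $8=2\times4$.

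Your pointwise tool can also be salvaged. By the grouping bound $\mathcal H\ge h_2(\pi(a))\ge 2\pi(a)(1-\pi(a))$, with $h_2$ the binary entropy in nats, you get $|\Delta z_a|\le 2\beta C\pi_{\mathrm{old}}(a)(1-\pi_{\mathrm{old}}(a))\le\bar\beta C\mathcal H_k(s)$. Combined with the $2\mathcal H_z$ gradient bound, this even gives the constant $4$. It has to be paired with that relative gradient estimate, though, not with a $p$-versus-$\pi_{\mathrm{old}}$ comparison.
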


\begin{proof}
We first bound the relative change in the state-wise information density, then propagate this bound to the accumulated information time along a trajectory, and finally control the resulting change in the terminal information-time discount.

\paragraph{Step 1: Control of the local logit movement.}

Fix an arbitrary state $s$ and define
\[
\Delta z_k(s,\cdot)
\coloneqq
z_{k+1}(s,\cdot)-z_k(s,\cdot).
\]
From Proposition~\ref{proposition:entropy_policy_movement},
\begin{align}
\|\Delta z_k(s,\cdot)\|_1
&\le
2\bar{\beta}C
\left(
1-e^{-\mathcal H_k(s)}
\right)
\nonumber\\
&\le
2\bar{\beta}C\mathcal H_k(s)
\nonumber\\
&=
2\bar{\beta}C\mathcal H_{\max}\rho_k(s)
\nonumber\\
&=
4\alpha\rho_k(s),
\end{align}
where the second inequality follows from
$1-e^{-x}\le x$ for $x\ge0$.
Since $\|v\|_\infty\le\|v\|_1$, we have
\begin{equation}
\|\Delta z_k(s,\cdot)\|_\infty
\le
4\alpha\rho_k(s).
\label{eq:recomputed_logit_movement}
\end{equation}

\paragraph{Step 2: Multiplicative stability of the information density.}

For a generic softmax distribution $\pi_z(\cdot|s)$, We write the corresponding predictive entropy as
\[
\mathcal H_z(s)
=
-\sum_a
\pi_z(a|s)\log\pi_z(a|s).
\]
Its derivative with respect to the logit $z_a$ is
\begin{equation}
\frac{\partial\mathcal H_z(s)}{\partial z(s,a)}
=
-\pi_z(a|s)
\left(
\log\pi_z(a|s)+\mathcal H_z(s)
\right).
\end{equation}
Hence
\begin{align}
\|\nabla_z\mathcal H_z(s)\|_1
&=
\sum_a
\pi_z(a|s)
\left|
\log\pi_z(a|s)+\mathcal H_z(s)
\right|
\nonumber\\
&\le
\sum_a
\pi_z(a|s)
\left(
-\log\pi_z(a|s)+\mathcal H_z(s)
\right)
\nonumber\\
&=
2\mathcal H_z(s).
\label{eq:entropy_gradient_bound_recompute}
\end{align}

Consider the interpolation
\[
z_\xi(s,\cdot)
=
z_k(s,\cdot)+\xi\Delta z_k(s,\cdot),
\qquad
\xi\in[0,1],
\]
By the chain rule, Hölder's inequality, and
Eq.~\eqref{eq:entropy_gradient_bound_recompute},
\begin{align}
\left|
\frac{d}{d\xi}\mathcal H_{z_\xi}(s)
\right|
&=
\left|
\nabla_z\mathcal H_{z_\xi}(s)^\top
\Delta z_k(s,\cdot)
\right|
\nonumber\\
&\le
\|\nabla_z\mathcal H_{z_\xi}(s)\|_1
\|\Delta z_k(s,\cdot)\|_\infty
\nonumber\\
&\le
2\mathcal H_{z_\xi}(s)
\|\Delta z_k(s,\cdot)\|_\infty.
\end{align}
For positive entropy,
\begin{equation}
\left|
\frac{d}{d\xi}\log\mathcal H_{z_{\xi}}(s)
\right|
\le
2\|\Delta z_k(s,\cdot)\|_\infty.
\end{equation}
Integrating over $\xi\in[0,1]$ yields
\begin{align}
\left|
\log\mathcal H_{z_1}(s)
-
\log\mathcal H_{z_0}(s)
\right|
&=
\left|
\int_0^1
\frac{d}{d\xi}
\log\mathcal H_{z_\xi}(s)
\,d\xi
\right|
\nonumber\\
&\le
\int_0^1
\left|
\frac{d}{d\xi}
\log\mathcal H_{z_\xi}(s)
\right|
\,d\xi
\nonumber\\
&\le
2\|\Delta z_k(s,\cdot)\|_\infty.
\label{eq:integrated_log_entropy_bound}
\end{align}

By construction,
\[
z_0(s,\cdot)=z_k(s,\cdot),
\qquad
z_1(s,\cdot)=z_{k+1}(s,\cdot),
\]
and hence
\[
\mathcal H_{z_0}(s)=\mathcal H_k(s),
\qquad
\mathcal H_{z_1}(s)=\mathcal H_{k+1}(s).
\]
Therefore,
\begin{equation}
\left|
\log
\frac{\mathcal H_{k+1}(s)}
{\mathcal H_k(s)}
\right|
\le
2\|\Delta z_k(s,\cdot)\|_\infty.
\end{equation}
Finally, applying
Eq.~\eqref{eq:recomputed_logit_movement} gives
\begin{equation}
\left|
\log
\frac{\mathcal H_{k+1}(s)}
{\mathcal H_k(s)}
\right|
\le
8\alpha\rho_k(s).
\label{eq:entropy_log_ratio_recompute}
\end{equation}

Equivalently,
\begin{equation}
e^{-8\alpha\rho_k(s)}
\mathcal H_k(s)
\le
\mathcal H_{k+1}(s)
\le
e^{8\alpha\rho_k(s)}
\mathcal H_k(s).
\end{equation}
Dividing both sides by the normalization scale
$\mathcal H_{\max}$ gives
\begin{equation}
e^{-8\alpha\rho_k(s)}
\rho_k(s)
\le
\rho_{k+1}(s)
\le
e^{8\alpha\rho_k(s)}
\rho_k(s).
\label{eq:density_multiplicative_stability_recompute}
\end{equation}

\paragraph{Step 3: Stability of the accumulated information time.}

Fix a finite trajectory
\[
\tau=(s_0,a_0,\ldots,s_T)
\]
and define
\begin{equation}
I_k(\tau)
\coloneqq
\sum_{t=0}^{T-1}\rho_k(s_t),
\qquad
I_{k+1}(\tau)
\coloneqq
\sum_{t=0}^{T-1}\rho_{k+1}(s_t).
\end{equation}
Also define
\begin{equation}
\rho_{k,\max}(\tau)
\coloneqq
\max_{0\le t<T}\rho_k(s_t).
\end{equation}

By Eq.~\eqref{eq:density_multiplicative_stability_recompute},
for every $t$,
\begin{equation}
e^{-8\alpha\rho_k(s_t)}
\rho_k(s_t)
\le
\rho_{k+1}(s_t)
\le
e^{8\alpha\rho_k(s_t)}
\rho_k(s_t).
\end{equation}
Since
$\rho_k(s_t)\le\rho_{k,\max}(\tau)$,
\begin{equation}
e^{-8\alpha\rho_{k,\max}(\tau)}
\rho_k(s_t)
\le
\rho_{k+1}(s_t)
\le
e^{8\alpha\rho_{k,\max}(\tau)}
\rho_k(s_t).
\end{equation}
Summing over $t=0,\ldots,T-1$ gives
\begin{equation}
e^{-8\alpha\rho_{k,\max}(\tau)}
I_k(\tau)
\le
I_{k+1}(\tau)
\le
e^{8\alpha\rho_{k,\max}(\tau)}
I_k(\tau).
\label{eq:information_time_multiplicative_stability}
\end{equation}

If $I_k(\tau)=0$, then nonnegativity of the information density implies
$\rho_k(s_t)=0$ for every $t$. Eq.~\eqref{eq:density_multiplicative_stability_recompute} then implies
$\rho_{k+1}(s_t)=0$ for every $t$, and therefore
$I_{k+1}(\tau)=0$. In this case the two trajectory discount factors
are identical.

Now suppose that $I_k(\tau)>0$. Eq.~\eqref{eq:information_time_multiplicative_stability} implies that
$I_{k+1}(\tau)>0$, and taking logarithms yields
\begin{equation}
\left|
\log I_{k+1}(\tau)
-
\log I_k(\tau)
\right|
\le
8\alpha\rho_{k,\max}(\tau).
\label{eq:information_time_log_stability}
\end{equation}

\paragraph{Step 4: Log-Lipschitz continuity of the information-time discount.}

We use the following elementary inequality. For any
$\gamma\in(0,1)$ and any $x,y>0$,
\begin{equation}
|\gamma^x-\gamma^y|
\le
\frac{1}{e}
|\log x-\log y|.
\label{eq:discount_log_lipschitz}
\end{equation}
To prove this inequality, consider the function
\begin{equation}
f(x)
\coloneqq
\gamma^{e^x}
=
\exp\!\left\{
(\log\gamma)e^x
\right\},
\qquad
x\in\mathbb R.
\end{equation}
Its derivative is
\begin{align}
f'(x)
&=
(\log\gamma)e^x
\exp\!\left\{
(\log\gamma)e^x
\right\}.
\end{align}
Since $\log\gamma<0$, letting
$q=(-\log\gamma)e^x>0$, we can get
\begin{equation}
|f'(x)|
=
q e^{-q}
\le
\frac{1}{e},
\end{equation}
where the last inequality follows because $q e^{-q}$ attains its
maximum at $q=1$. By applying the mean value theorem, we finally get
Eq.~\eqref{eq:discount_log_lipschitz}.

For the trajectory $\tau$,
\begin{align}
\Gamma_{\rho_k}^{\mathrm{info}}(0,T)
&=
\prod_{t=0}^{T-1}\gamma^{\rho_k(s_t)}
=
\gamma^{I_k(\tau)},\\
\Gamma_{\rho_{k+1}}^{\mathrm{info}}(0,T)
&=
\prod_{t=0}^{T-1}\gamma^{\rho_{k+1}(s_t)}
=
\gamma^{I_{k+1}(\tau)}.
\end{align}
Therefore, when $I_k(\tau)>0$,
Eqs.~\eqref{eq:discount_log_lipschitz} and
\eqref{eq:information_time_log_stability} give
\begin{align}
\left|
\Gamma_{\rho_{k+1}}^{\mathrm{info}}(0,T)
-
\Gamma_{\rho_k}^{\mathrm{info}}(0,T)
\right|
\le
\frac1e
\left|
\log I_{k+1}(\tau)
-
\log I_k(\tau)
\right|
\le
\frac{8\alpha}{e}
\rho_{k,\max}(\tau).
\label{eq:pathwise_clock_refresh_bound}
\end{align}
As shown above, the same inequality also holds when
$I_k(\tau)=0$, because in that case the left-hand side is zero.
Hence Eq.~\eqref{eq:pathwise_clock_refresh_bound} holds for every
trajectory.

\paragraph{Step 5: From pathwise discount stability to return stability.}

Under the terminal-reward setting,
\begin{equation}
\eta_{\rho}^{\mathrm{info}}(\pi_{k+1})
=
\mathbb E_{\tau\sim\pi_{k+1}}
\left[
\Gamma_{\rho}^{\mathrm{info}}(0,T)R_T
\right].
\end{equation}
Thus
\begin{align}
&
\left|
\eta_{\rho_{k+1}}^{\mathrm{info}}(\pi_{k+1})
-
\eta_{\rho_k}^{\mathrm{info}}(\pi_{k+1})
\right|
\nonumber\\
&=
\left|
\mathbb E_{\tau\sim\pi_{k+1}}
\left[
R_T
\left(
\Gamma_{\rho_{k+1}}^{\mathrm{info}}(0,T)
-
\Gamma_{\rho_k}^{\mathrm{info}}(0,T)
\right)
\right]
\right|
\nonumber\\
&\le
R_{\max}
\mathbb E_{\tau\sim\pi_{k+1}}
\left[
\left|
\Gamma_{\rho_{k+1}}^{\mathrm{info}}(0,T)
-
\Gamma_{\rho_k}^{\mathrm{info}}(0,T)
\right|
\right]
\nonumber\\
&\le
\frac{8R_{\max}\alpha}{e}
\mathbb E_{\tau\sim\pi_{k+1}}
\left[
\rho_{k,\max}(\tau)
\right].
\label{eq:trajectory_dependent_recompute_bound}
\end{align}
Finally, since $\rho_k(s)\in[0,1]$,
\[
\rho_{k,\max}(\tau)\le1
\]
for every trajectory. Therefore,
\begin{equation}
\left|
\eta_{\rho_{k+1}}^{\mathrm{info}}(\pi_{k+1})
-
\eta_{\rho_k}^{\mathrm{info}}(\pi_{k+1})
\right|
\le
\frac{8R_{\max}}{e}\alpha.
\end{equation}
This proves the proposition.
\end{proof}

The preceding proposition characterizes the variation in the information clock
induced by policy updates under a fixed normalization scheme. In
Section~\ref{sec:empirical_information_time}, we further empirically examine
different normalization schemes. Although changing the normalization scale may
introduce additional variation in the resulting information density, more
adaptive normalization can also better capture local information dynamics.
Among the three normalization schemes considered, batch-level normalization
provides a favorable balance between local adaptivity and the stability of the
induced information-density scale.

\begin{proposition}
\label{app_prop:cross_iteration_clock_ordering}
Let $\pi_x$ and $\pi_y$, with $x<y$, be two policy iterates from the
same sequence of updates, and let $\rho_r$ be any fixed
information-density snapshot used as a common reference clock.
Suppose that $|R_T|\le R_{\max}<\infty$. Define the cumulative certified gain between the two iterates as
\begin{equation}
\mathcal G_{x,y}
\coloneqq
\sum_{t=x}^{y-1}
\left[
L_{\pi_t}^{\mathrm{info}}(\pi_{t+1})
-
\eta_{\rho_t}^{\mathrm{info}}(\pi_t)
-
\frac{4C_t\alpha_t^2}{(1-\gamma)^2}
-
\frac{8R_{\max}}{e}\alpha_t
\right],
\label{eq:cumulative_certified_gain}
\end{equation}
where $C_t$ and $\alpha_t$ denote the corresponding quantities for
the update $\pi_t\rightarrow\pi_{t+1}$, and all information-time quantities in the $t$-th summand are evaluated under $\rho_t$. For a trajectory $\tau=(s_0,a_0,\ldots,s_T)$, define $
\bar\rho_j(\tau)
\coloneqq
\frac{1}{T}
\sum_{u=0}^{T-1}\rho_j(s_u),
$
and
$
\varepsilon_{x,y}^{(r)}
\coloneqq
\frac{1}{2}
\sum_{j\in\{x,y\}}
\mathbb E_{\tau\sim\pi_j}
\left[
\left|
\log
\frac{\bar\rho_r(\tau)}
{\bar\rho_j(\tau)}
\right|
\right]
$.
Then
\begin{equation}
\eta_{\rho_r}^{\mathrm{info}}(\pi_y)
-
\eta_{\rho_r}^{\mathrm{info}}(\pi_x)
\ge
\mathcal G_{x,y}
-
\frac{2R_{\max}}{e}
\varepsilon_{x,y}^{(r)}.
\end{equation}
\end{proposition}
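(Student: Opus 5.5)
Since $|\eta_{\rho_r}^{\mathrm{info}}(\pi)-\eta_{\rho_j}^{\mathrm{info}}(\pi)|$ cannot be small for an arbitrary reference clock unless it is paid for by $\varepsilon_{x,y}^{(r)}$, the plan is to telescope under each iteration's own clock and convert only at the two endpoints. Concretely, write
\begin{align}
\eta_{\rho_r}^{\mathrm{info}}(\pi_y)-\eta_{\rho_r}^{\mathrm{info}}(\pi_x)
&=\big[\eta_{\rho_r}^{\mathrm{info}}(\pi_y)-\eta_{\rho_y}^{\mathrm{info}}(\pi_y)\big]
+\big[\eta_{\rho_y}^{\mathrm{info}}(\pi_y)-\eta_{\rho_x}^{\mathrm{info}}(\pi_x)\big]
\nonumber\\
&\quad+\big[\eta_{\rho_x}^{\mathrm{info}}(\pi_x)-\eta_{\rho_r}^{\mathrm{info}}(\pi_x)\big].
\nonumber
\end{align}
The middle bracket is split as $\sum_{t=x}^{y-1}\big[\eta_{\rho_{t+1}}^{\mathrm{info}}(\pi_{t+1})-\eta_{\rho_t}^{\mathrm{info}}(\pi_t)\big]$. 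Each summand in turn splits as $\big[\eta_{\rho_t}^{\mathrm{info}}(\pi_{t+1})-\eta_{\rho_t}^{\mathrm{info}}(\pi_t)\big]$ plus $\big[\eta_{\rho_{t+1}}^{\mathrm{info}}(\pi_{t+1})-\eta_{\rho_t}^{\mathrm{info}}(\pi_{t+1})\big]$.

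For the first piece I will use Theorem~\ref{theoremLB}, with the coupling supplied by Proposition~\ref{proposition:entropy_policy_movement}, both under the fixed clock $\rho_t$. This gives $\eta_{\rho_t}^{\mathrm{info}}(\pi_{t+1})\ge L_{\pi_t}^{\mathrm{info}}(\pi_{t+1})-4C_t\alpha_t^2/(1-\gamma)^2$. For the second piece, Proposition~\ref{prop:recomputed_density_bound} gives a lower bound of $-\tfrac{8R_{\max}}{e}\alpha_t$. Summing these over $t$ reproduces exactly $\mathcal G_{x,y}$.

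It remains to show that the two endpoint brackets total at least $-\tfrac{2R_{\max}}{e}\varepsilon_{x,y}^{(r)}$. I will show each satisfies $|\eta_{\rho_r}^{\mathrm{info}}(\pi_j)-\eta_{\rho_j}^{\mathrm{info}}(\pi_j)|\le \tfrac{R_{\max}}{e}\mathbb E_{\tau\sim\pi_j}\big[|\log(\bar\rho_r(\tau)/\bar\rho_j(\tau))|\big]$ for $j\in\{x,y\}$. Adding the two bounds gives at most $\tfrac{2R_{\max}}{e}\varepsilon_{x,y}^{(r)}$, since $\varepsilon$ carries the factor $\tfrac12$.

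The endpoint bound follows the pathwise route of Steps 4--5 in the proof of Proposition~\ref{app:recomputed_density_bound}. Under terminal reward, $\eta_\rho^{\mathrm{info}}(\pi)=\mathbb E_{\tau\sim\pi}[\gamma^{I_\rho(\tau)}R_T]$ with $I_\rho(\tau)=\sum_{u<T}\rho(s_u)=T\bar\rho(\tau)$. Because the factor $T$ is common, it cancels in the log ratio: $\log I_{\rho_r}-\log I_{\rho_j}=\log(\bar\rho_r/\bar\rho_j)$. The log-Lipschitz inequality \eqref{eq:discount_log_lipschitz}, $|\gamma^a-\gamma^b|\le e^{-1}|\log a-\log b|$, then gives the pathwise bound. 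Multiplying by $|R_T|\le R_{\max}$ and taking expectation under $\pi_j$ finishes it.

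The main obstacle is the set of degenerate trajectories where $\bar\rho_r$ or $\bar\rho_j$ vanishes. If both vanish, the discounts coincide and the contribution is zero. If exactly one vanishes, the log ratio is $+\infty$ and the bound holds trivially, by the convention that $\varepsilon$ is then infinite. A secondary care point is the $\gamma=1$ case, where all discounts equal one and the claim is immediate. Also, $\eta_{\rho_t}^{\mathrm{info}}(\pi_t)$ inside $\mathcal G$ matches $\eta^{\mathrm{info}}(\pi_t)$ in $L$, since both are evaluated under $\rho_t$.
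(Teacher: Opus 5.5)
Your proposal is correct and follows essentially the same route as the paper. You telescope under each iterate's own clock, combining Theorem~\ref{theoremLB} with the clock-refresh bound of Proposition~\ref{prop:recomputed_density_bound} to get $\eta_{\rho_y}^{\mathrm{info}}(\pi_y)-\eta_{\rho_x}^{\mathrm{info}}(\pi_x)\ge\mathcal G_{x,y}$, and then re-anchor only the two endpoints to $\rho_r$ via $|\gamma^a-\gamma^b|\le e^{-1}|\log a-\log b|$, with $T$ cancelling in the ratio (your handling of vanishing $\bar\rho$ and of $\gamma=1$ is extra care the paper leaves implicit).
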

\begin{proof}

For each $t\in\{x,\ldots,y-1\}$, the information density $\rho_t$
is fixed during the update from $\pi_t$ to $\pi_{t+1}$.
Hence, by Theorem~\ref{theoremLB},
\begin{align}
\eta_{\rho_t}^{\mathrm{info}}(\pi_{t+1})
-
\eta_{\rho_t}^{\mathrm{info}}(\pi_t)
\ge\;&
L_{\pi_t}^{\mathrm{info}}(\pi_{t+1})
-
\eta_{\rho_t}^{\mathrm{info}}(\pi_t)
-
\frac{4C_t\alpha_t^2}{(1-\gamma)^2}.
\label{eq:cross_iteration_fixed_clock_gain}
\end{align}
Moreover, Proposition~\ref{prop:recomputed_density_bound} gives
\begin{equation}
\eta_{\rho_{t+1}}^{\mathrm{info}}(\pi_{t+1})
-
\eta_{\rho_t}^{\mathrm{info}}(\pi_{t+1})
\ge
-
\frac{8R_{\max}}{e}\alpha_t.
\label{eq:cross_iteration_refresh_gain}
\end{equation}
Adding the two inequalities yields
\begin{align}
\eta_{\rho_{t+1}}^{\mathrm{info}}(\pi_{t+1})
-
\eta_{\rho_t}^{\mathrm{info}}(\pi_t)
\ge
L_{\pi_t}^{\mathrm{info}}(\pi_{t+1})
-
\eta_{\rho_t}^{\mathrm{info}}(\pi_t)
-
\frac{4C_t\alpha_t^2}{(1-\gamma)^2}
-
\frac{8R_{\max}}{e}\alpha_t.
\end{align}
Summing over $t=x,\ldots,y-1$ and telescoping therefore gives
\begin{equation}
\eta_{\rho_y}^{\mathrm{info}}(\pi_y)
-
\eta_{\rho_x}^{\mathrm{info}}(\pi_x)
\ge
\mathcal G_{x,y}.
\label{eq:native_clock_cumulative_gain}
\end{equation}

It remains to evaluate the two endpoint policies under the common
reference clock $\rho_r$. For $j\in\{x,y\}$, let
\[
I_j(\tau)=T\bar\rho_j(\tau),
\qquad
I_r(\tau)=T\bar\rho_r(\tau).
\]
Under the terminal-reward setting and the log-Lipschitz bound in
Eq.~\eqref{eq:discount_log_lipschitz},
\begin{align}
\left|
\eta_{\rho_r}^{\mathrm{info}}(\pi_j)
-
\eta_{\rho_j}^{\mathrm{info}}(\pi_j)
\right|
\le
\frac{R_{\max}}{e}
\mathbb E_{\tau\sim\pi_j}
\left[
\left|
\log\frac{I_r(\tau)}{I_j(\tau)}
\right|
\right]
=
\frac{R_{\max}}{e}
\mathbb E_{\tau\sim\pi_j}
\left[
\left|
\log
\frac{\bar\rho_r(\tau)}
{\bar\rho_j(\tau)}
\right|
\right],
\label{eq:endpoint_reanchoring_bound}
\end{align}
where the trajectory length $T$ cancels in the ratio.

Finally, define
\[
\Delta_j^{(r)}
\coloneqq
\eta_{\rho_r}^{\mathrm{info}}(\pi_j)
-
\eta_{\rho_j}^{\mathrm{info}}(\pi_j).
\]
Then
\begin{align}
\eta_{\rho_r}^{\mathrm{info}}(\pi_y)
-
\eta_{\rho_r}^{\mathrm{info}}(\pi_x)
&=
\eta_{\rho_y}^{\mathrm{info}}(\pi_y)
-
\eta_{\rho_x}^{\mathrm{info}}(\pi_x)
+
\Delta_y^{(r)}
-
\Delta_x^{(r)}
\nonumber\\
&\ge
\mathcal G_{x,y}
-
|\Delta_y^{(r)}|
-
|\Delta_x^{(r)}|
\nonumber\\
&\ge
\mathcal G_{x,y}
-
\frac{2R_{\max}}{e}
\varepsilon_{x,y}^{(r)},
\end{align}
where the last inequality follows from
Eq.~\eqref{eq:endpoint_reanchoring_bound} and the definition of
$\varepsilon_{x,y}^{(r)}$.
This proves the proposition.
\end{proof}
Proposition~\ref{app_prop:cross_iteration_clock_ordering} shows that
recomputing the information clock across training does not by itself
invalidate comparisons between policy iterates. If the cumulative
certified gain exceeds the discrepancy introduced by re-anchoring the
endpoint policies to a common reference clock, the later policy still
achieves a higher information-time return under that clock.
Consequently, smaller clock drift makes the comparison more robust,
although the information clock need not remain fixed throughout
training. We therefore empirically track the evolution of normalized entropy throughout training to characterize how the information clock changes
over the full optimization trajectory.

\newpage
\section{Implementation Details}
\label{app_:implementation_details}

\label{app_:implementation_details}

We train Qwen3-4B Base Model, Qwen3-8B Base Model, and Qwen3-14B Base Model on DAPO-Math-17K.
For both PPO and \methodabb{}, the policy learning rate is
$1\times10^{-6}$ with a 10-step warmup, while the critic learning rate is
$2\times10^{-6}$ without warmup.
The maximum response length is 10k tokens for the 4B and 8B models and
12k tokens for the 14B model.

Unless otherwise specified in the sensitivity experiments, PPO uses
$\gamma=1$ and $\lambda=1$, with clipping parameters
$\epsilon_{\mathrm{low}}=0.2$ and $\epsilon_{\mathrm{high}}=0.28$.
For \methodabb{}, we use $\gamma=0.999$ and $\lambda=0.99$.
The adaptive clipping bounds follow Eq.~(\ref{eq:actor_loss}), with
$\epsilon_{\mathrm{low}}^{\mathrm{info}}=10$ and
$\epsilon_{\mathrm{high}}^{\mathrm{info}}=20$, as supported by the ablation
study below.

\begin{figure*}[ht]
    \centering


    \begin{subfigure}[b]{0.32\textwidth}
        \centering
        \includegraphics[width=\textwidth]{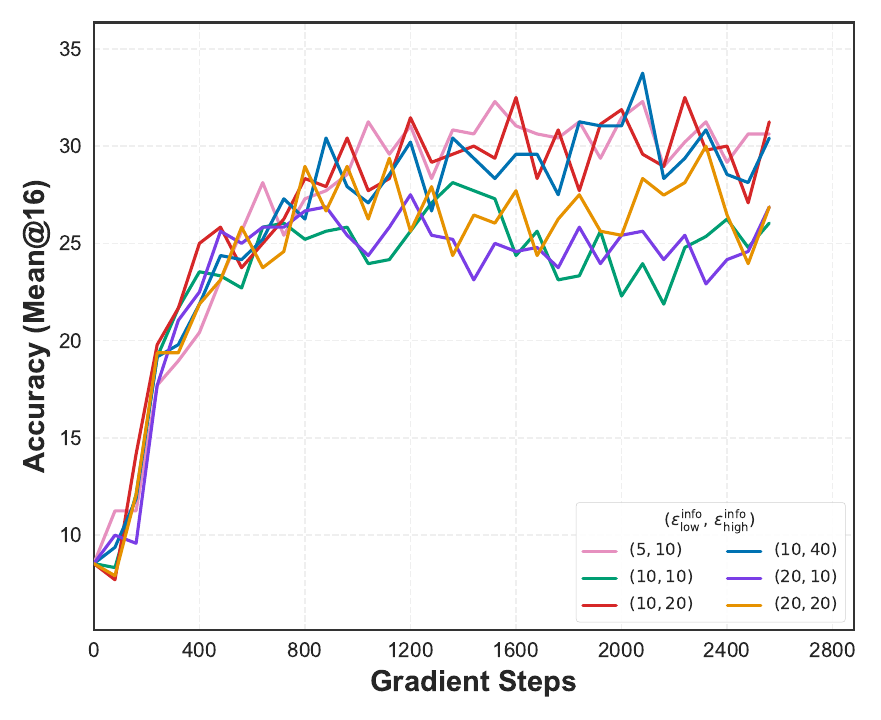}
        \vspace{-0.05in}
        \caption{AIME24 Accuracy.}
        \label{lu_app_fig:info_time_aime24}
    \end{subfigure}
    \hfill
    \begin{subfigure}[b]{0.32\textwidth}
        \centering
        \includegraphics[width=\textwidth]{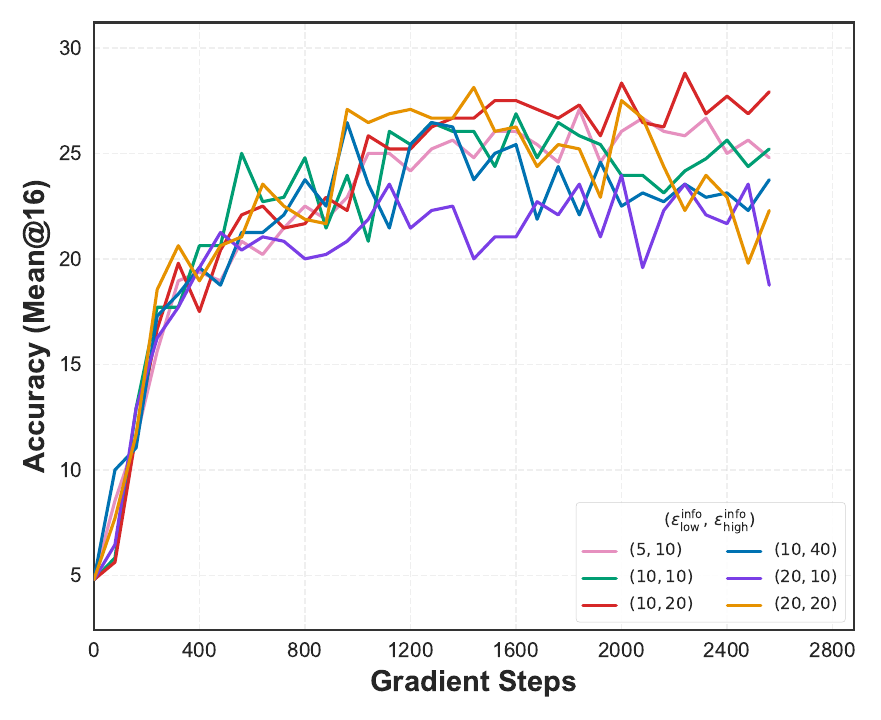}
        \vspace{-0.05in}
        \caption{AIME25 Accuracy.}
        \label{lu_app_fig:info_time_aime25}
    \end{subfigure}
    \hfill
    \begin{subfigure}[b]{0.32\textwidth}
        \centering
        \includegraphics[width=\textwidth]{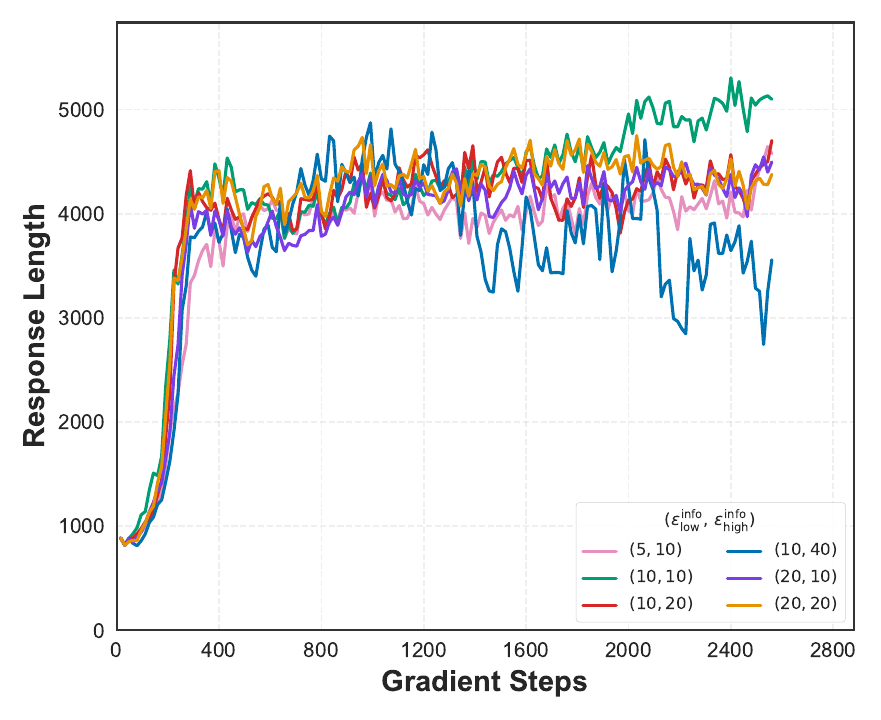}
        \vspace{-0.05in}
        \caption{Response Length.}
        \label{lu_app_fig:response_length}
    \end{subfigure}

    \par\vspace{0.08in}


    \begin{subfigure}[b]{0.32\textwidth}
        \centering
        \includegraphics[width=\textwidth]{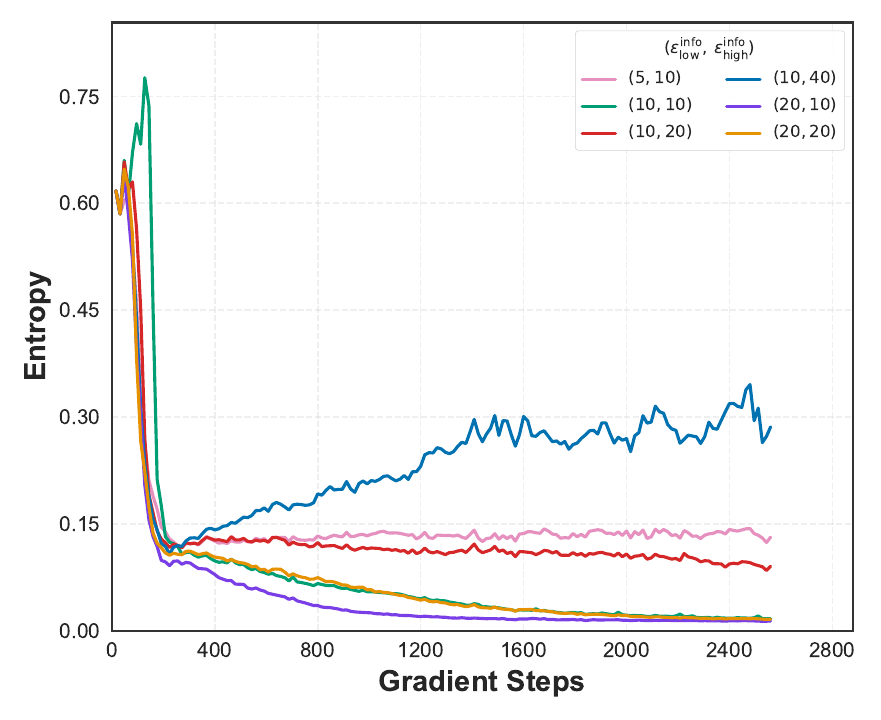}
        \vspace{-0.05in}
        \caption{Raw Predictive Entropy.}
        \label{lu_app_fig:entropy}
    \end{subfigure}
    \hfill
    \begin{subfigure}[b]{0.32\textwidth}
        \centering
        \includegraphics[width=\textwidth]{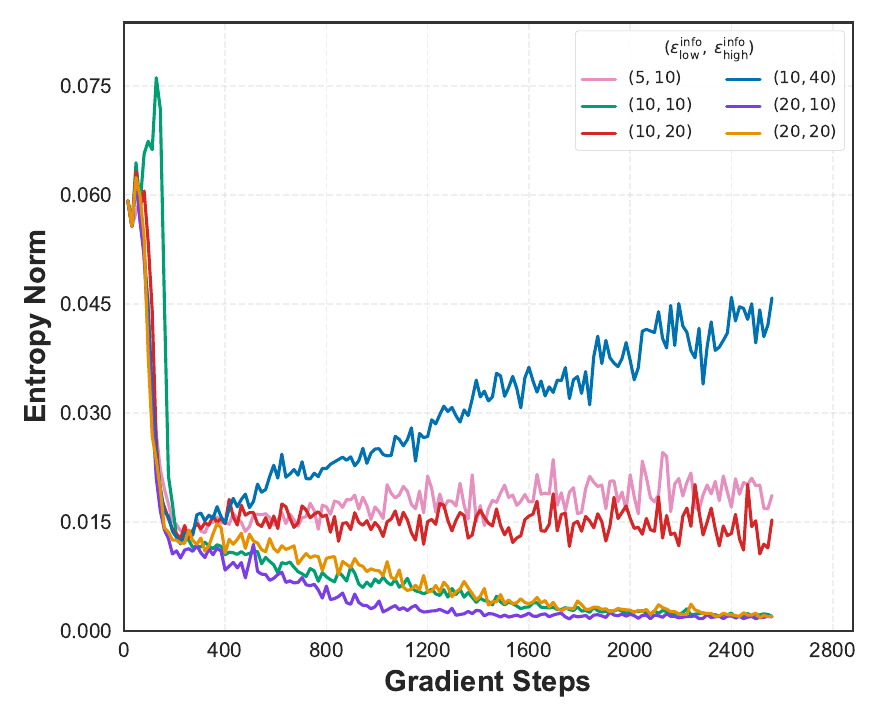}
        \vspace{-0.05in}
        \caption{Normalized Entropy.}
        \label{lu_app_fig:normalized_entropy}
    \end{subfigure}
    \hfill
    \begin{subfigure}[b]{0.32\textwidth}
        \centering
        \includegraphics[width=\textwidth]{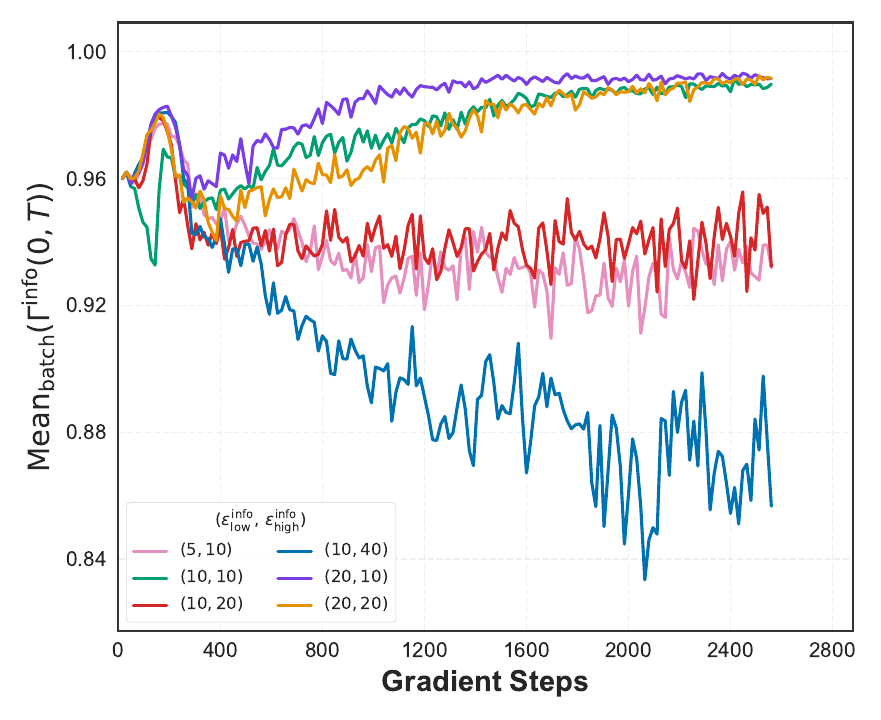}
        \vspace{-0.05in}
        \caption{Mean $\Gamma^{\text{info}}(0,T)$}
        \label{lu_app_fig:mean_gamma}
    \end{subfigure}
\par\vspace{0.08in}


\makebox[\textwidth][c]{%
    \begin{subfigure}[b]{0.32\textwidth}
        \centering
        \includegraphics[width=\textwidth]{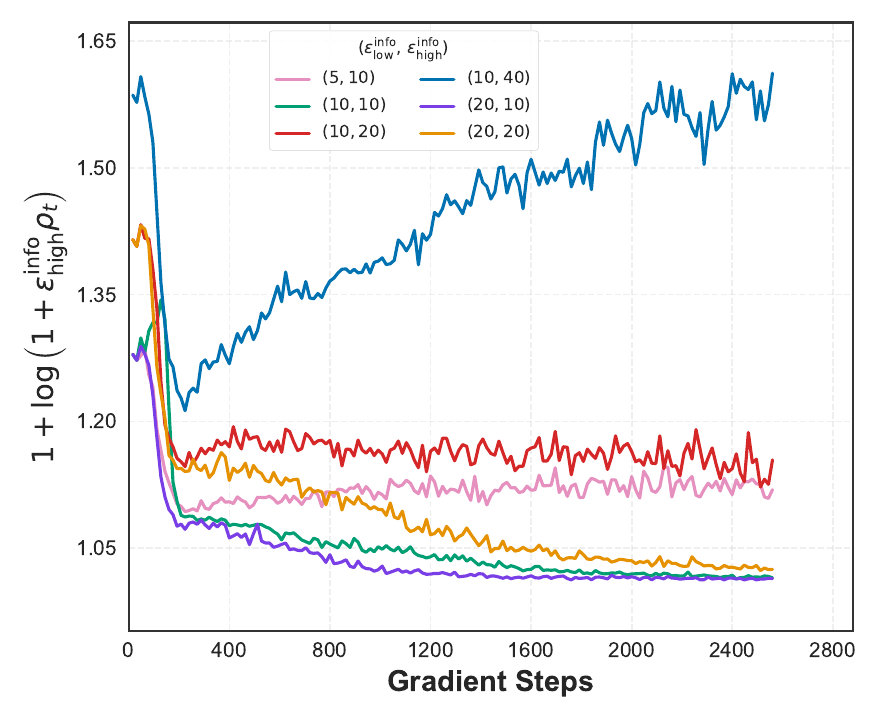}
        \vspace{-0.05in}
        \caption{Mean Upper Clipping Bound.}
        \label{lu_app_fig:mean_upper_clipping_bound}
    \end{subfigure}
    \hspace{0.02\textwidth}
    \begin{subfigure}[b]{0.32\textwidth}
        \centering
        \includegraphics[width=\textwidth]{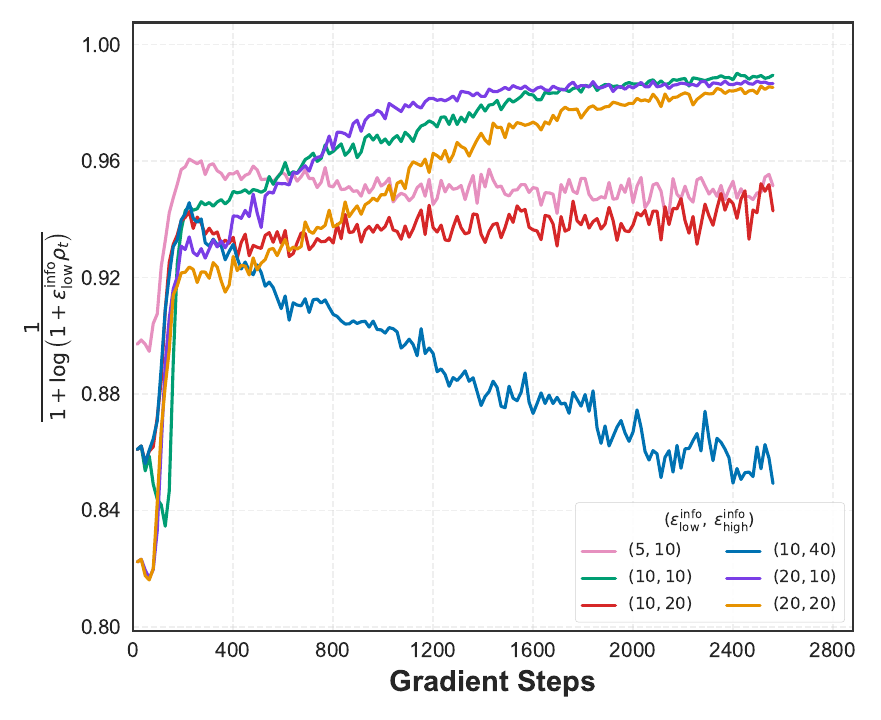}
        \vspace{-0.05in}
        \caption{Mean Lower Clipping Bound.}
        \label{lu_app_fig:mean_lower_clipping_bound}
    \end{subfigure}%
}
    \caption{
        Ablation of $\epsilon_{\mathrm{low}}^{\mathrm{info}}$ and
$\epsilon_{\mathrm{high}}^{\mathrm{info}}$ on the Qwen3-4B Base Model.
    }
    \label{app_fig:low_upper_ablation}

    \vspace{-0.15in}
\end{figure*}

\paragraph{Ablation of Adaptive Clipping Parameters.}
Figure~\ref{app_fig:low_upper_ablation} compares different combinations of
$\epsilon_{\mathrm{low}}^{\mathrm{info}}$ and
$\epsilon_{\mathrm{high}}^{\mathrm{info}}$ on the Qwen3-4B Base Model.
Increasing $\epsilon_{\mathrm{high}}^{\mathrm{info}}$ widens the upper
clipping range and is generally associated with higher predictive entropy,
whereas increasing $\epsilon_{\mathrm{low}}^{\mathrm{info}}$ lowers the
lower clipping bound and tends to reduce predictive entropy over training.
Configurations such as $(5,10)$ and $(10,20)$, where
$\epsilon_{\mathrm{high}}^{\mathrm{info}}$ is approximately twice
$\epsilon_{\mathrm{low}}^{\mathrm{info}}$, exhibit comparatively stable
entropy dynamics while maintaining competitive reasoning performance.
Such stability is consistent with the moderate variation of the information
clock considered in our theoretical analysis, and may also help preserve a
favorable balance between exploration and exploitation during RL optimization. Response length remains broadly stable across the tested clipping configurations,
with substantially smaller variation than that observed in the $\gamma$ ablation
in Figure~\ref{fig:ablation_study_gamma}. This suggests that $\gamma$ provides
a more direct and effective control knob for modulating response length. Considering the
overall reasoning performance and training dynamics, we use
$\epsilon_{\mathrm{low}}^{\mathrm{info}}=10$ and
$\epsilon_{\mathrm{high}}^{\mathrm{info}}=20$ in the main experiments.

\section{Additional Results on Llama Models}
\label{app:additional_experiments_llama}

\begin{table*}[t]
\centering
\caption{Performance comparison of \methodabb{} with PPO on Llama-3.2-3B Instruct model. For each prompt, we independently sample 16 responses and report the average accuracy, denoted as Mean@16.}
\label{tab:llama_results}

\begingroup
\setlength{\tabcolsep}{10pt}
\renewcommand{\arraystretch}{1.12}

\begin{tabular}{
l
cc
>{\columncolor[HTML]{D7E8E8}}c
>{\columncolor[HTML]{D7E8E8}}c
}
\toprule

\multirow{2}{*}{\textbf{Benchmark}}
& \multicolumn{2}{c}{\textbf{PPO}}
& \multicolumn{2}{c}{\textbf{\methodabb{}}} \\

\cmidrule(lr){2-3}
\cmidrule(lr){4-5}

& \textbf{Score} & \textbf{Len.}
& \textbf{Score} & \textbf{Len.} \\

\midrule

\multicolumn{5}{c}{\cellcolor[HTML]{EAF0FC}\textit{Llama-3.2-3B Instruct Model}} \\
\midrule

AMC23
& 39.2 & 648.1
& 42.3 & 511.9 \\

AIME24
& 11.9 & 906.0
& 14.0 & 516.4 \\

AIME25
& 0.0 & 665.2
& 0.8 & 374.7 \\

AIME26
& 0.2 & 542.8
& 0.6 & 365.7 \\

BeyondAIME
& 0.6 & 593.3
& 1.9 & 396.1 \\

\midrule
\textbf{Average}
& 10.4 & 671.1
& 11.9 & 433.0 \\

\bottomrule
\end{tabular}

\endgroup
\end{table*}

\begin{figure*}[ht]
    \centering

    \begin{subfigure}[b]{0.315\textwidth}
        \centering
        \includegraphics[width=\linewidth]{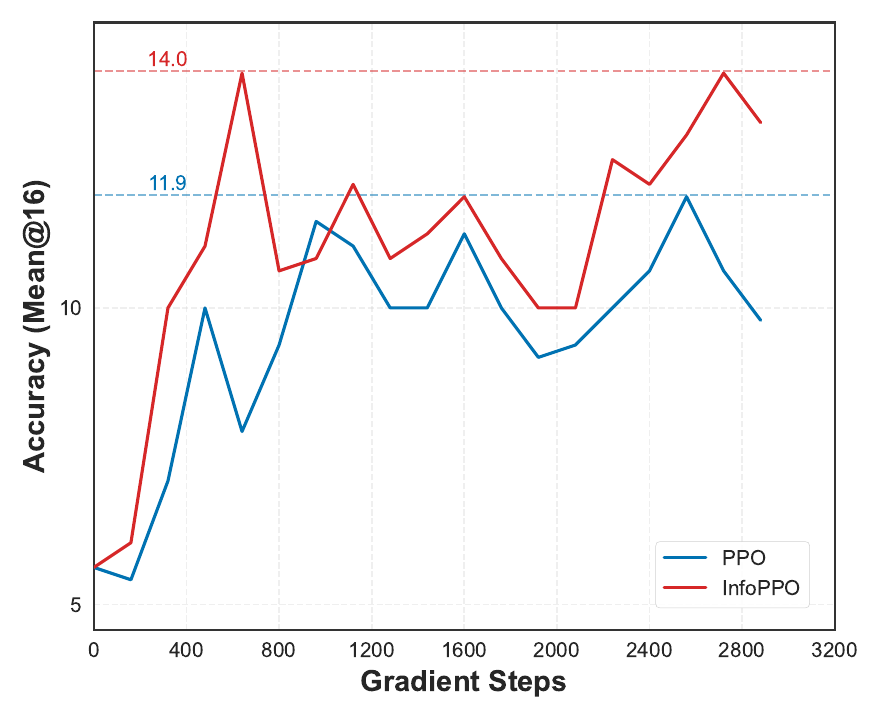}
        \vspace{-0.05in}
        \caption{AIME24 Accuracy.}
        \label{llama_app_fig:info_time_aime24}
    \end{subfigure}
    \hfill
    \begin{subfigure}[b]{0.315\textwidth}
        \centering
        \includegraphics[width=\linewidth]{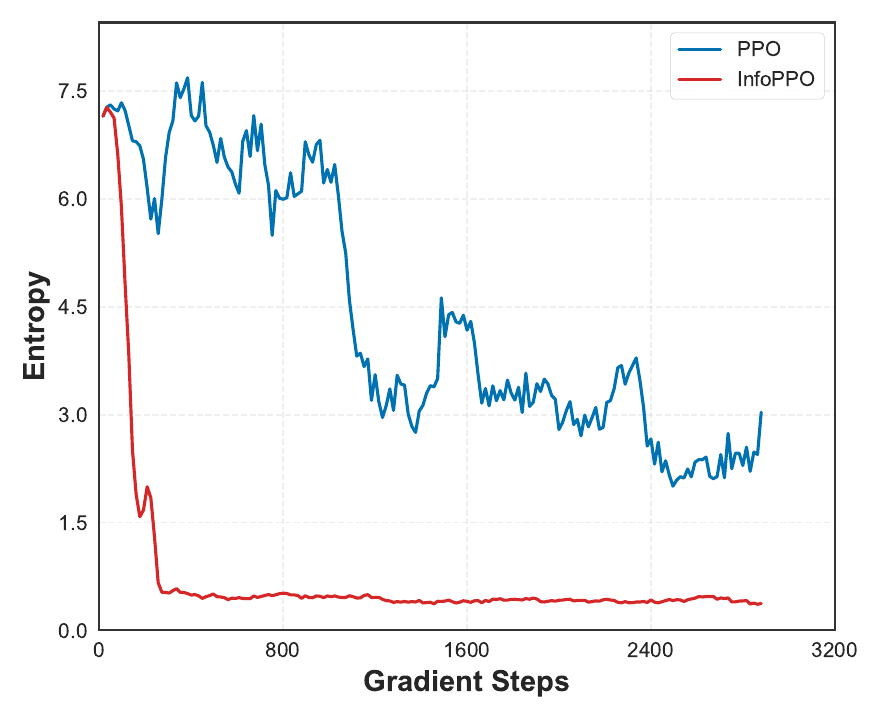}
        \vspace{-0.05in}
        \caption{Raw Predictive Entropy.}
        \label{llama_app_fig:info_time_entropy}
    \end{subfigure}
    \hfill
    \begin{subfigure}[b]{0.315\textwidth}
        \centering
        \includegraphics[width=\linewidth]{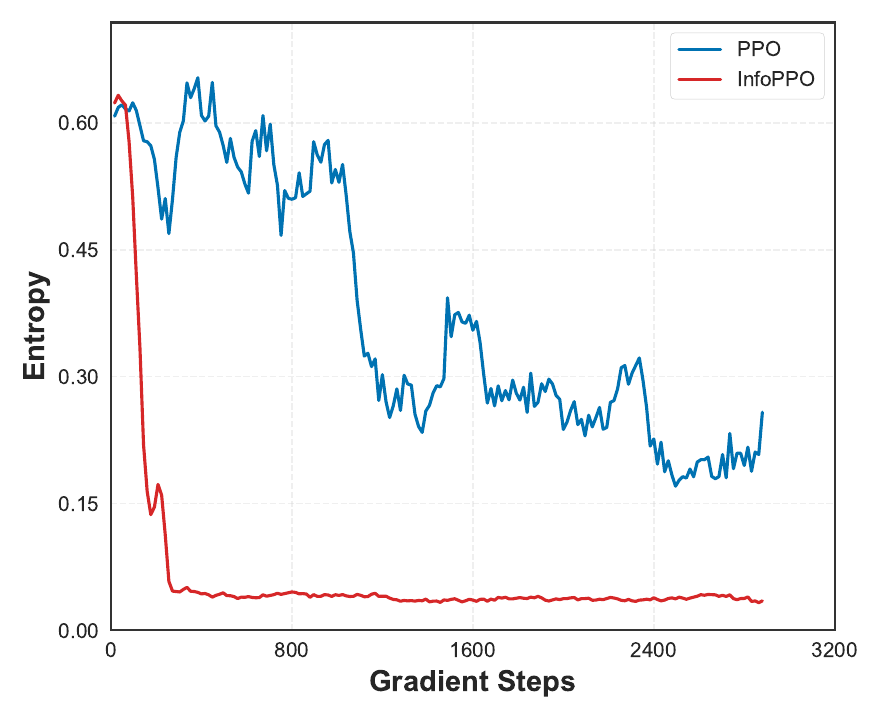}
        \vspace{-0.05in}
        \caption{Normalized Entropy.}
        \label{llama_app_fig:info_time_norm_entropy}
    \end{subfigure}

    \par\vspace{0.08in}
    \noindent

    \begin{subfigure}[b]{0.315\textwidth}
        \centering
        \includegraphics[width=\linewidth]{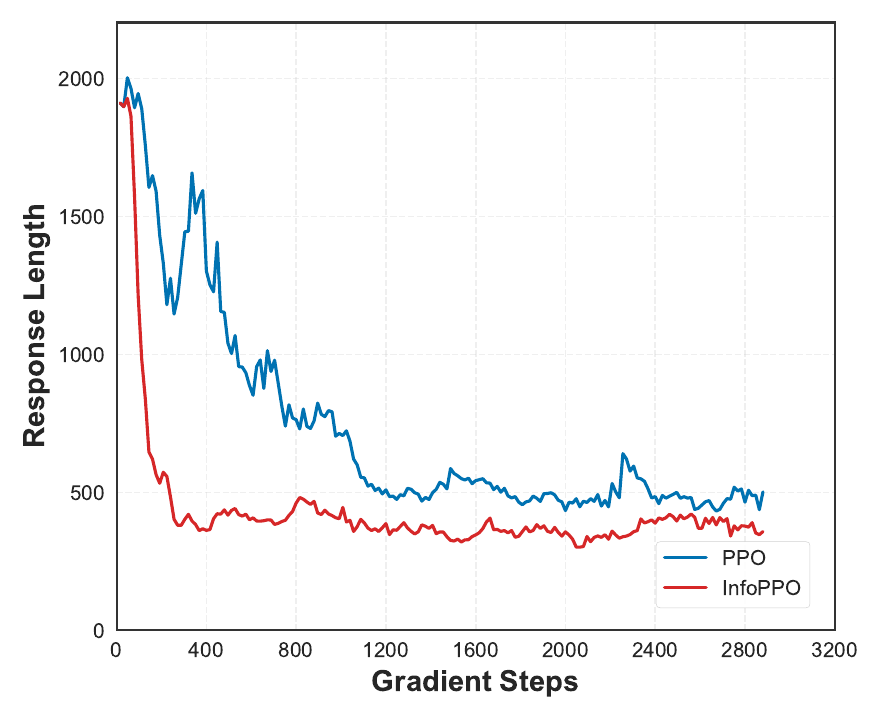}
        \vspace{-0.05in}
        \caption{Response Length.}
        \label{llama_app_fig:info_time_response}
    \end{subfigure}
    \hfill
    \begin{subfigure}[b]{0.315\textwidth}
        \centering
        \includegraphics[width=\linewidth]{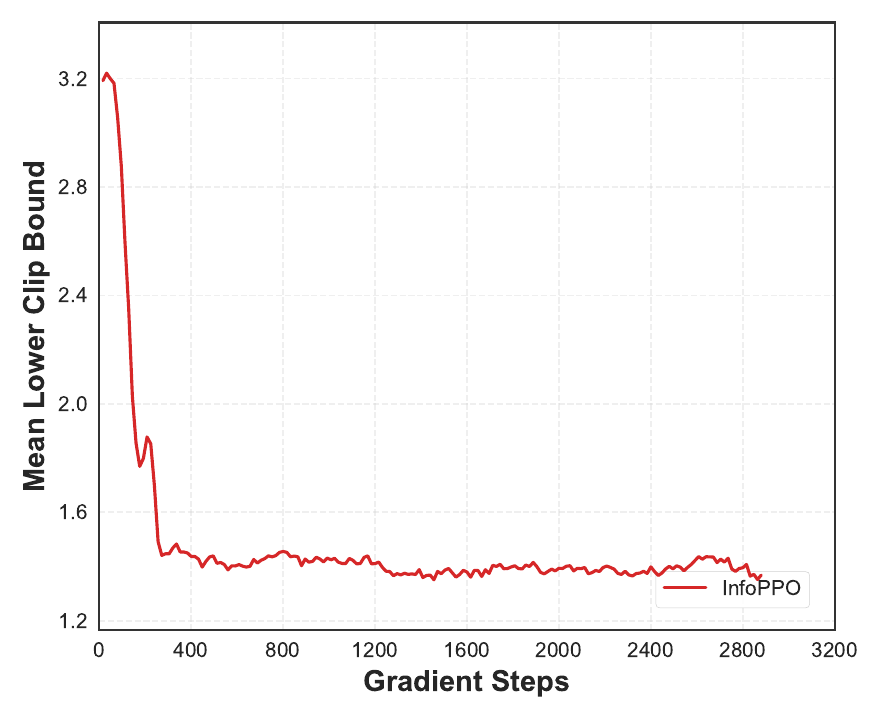}
        \vspace{-0.05in}
        \caption{Mean Upper Clipping Bound.}
        \label{llama_app_fig:info_time_upper_bound}
    \end{subfigure}
    \hfill
    \begin{subfigure}[b]{0.315\textwidth}
        \centering
        \includegraphics[width=\linewidth]{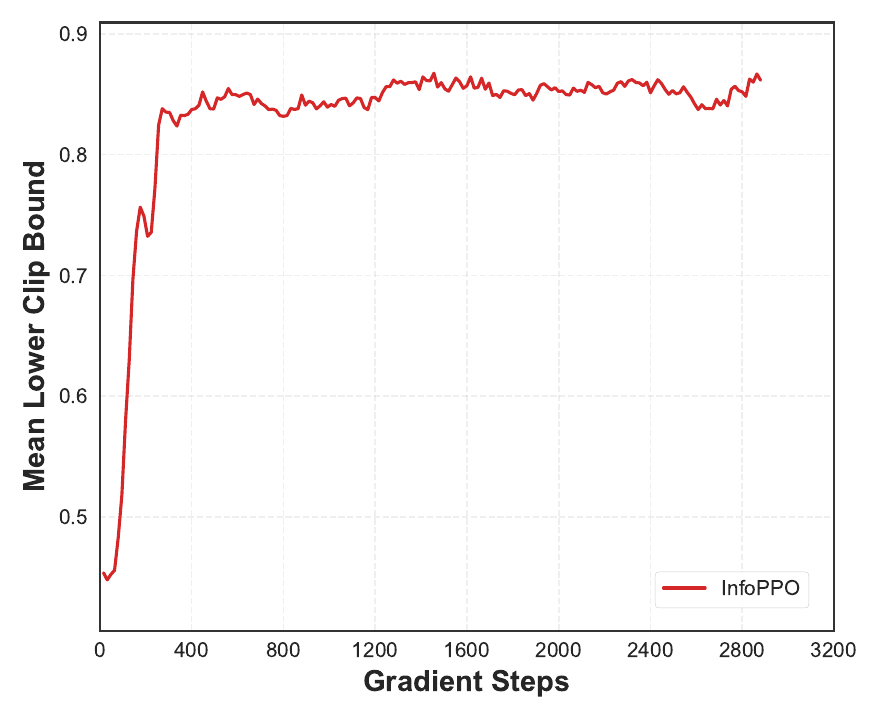}
        \vspace{-0.05in}
        \caption{Mean Lower Clipping Bound.}
        \label{llama_app_fig:info_time_lower_bound}
    \end{subfigure}

    \caption{
        Training dynamics on the Llama-3.2-3B Instruct Model.
    }
    \label{app_fig:llama_training_dynamics}

    \vspace{-0.15in}
\end{figure*}

To further examine whether the benefits of \methodabb{} extend beyond the
Qwen3 model family, we additionally evaluate PPO and \methodabb{} on the
Llama-3.2-3B Instruct Model under the same overall experimental protocol as
the main experiments.

Table~\ref{tab:llama_results} summarizes the final evaluation results.
\methodabb{} achieves clear improvements over PPO on AMC23 and AIME24,
while also obtaining higher accuracy on BeyondAIME. On AIME25 and AIME26,
both methods achieve very low absolute accuracy, making the comparison less informative on these particularly
challenging benchmarks. 

Figure~\ref{app_fig:llama_training_dynamics} further shows the training
dynamics on Llama-3.2-3B Instruct. On AIME24, \methodabb{} reaches a higher
final accuracy than PPO while exhibiting stable predictive-entropy and adaptive
clipping dynamics after the initial stage of training. Notably, both methods
show a substantial decrease in response length during training. This response-length contraction
may therefore be partly associated with the training dynamics of the
Llama-3.2-3B Instruct Model itself, since a similar and sustained decline is
also observed under PPO with $\gamma=1$. Taken together, these results provide
additional evidence that the optimization behavior of \methodabb{} remains
well behaved on a different model family.

\section{Additional Experimental Results}
\label{app:additional_experiments}

\begin{figure*}[t]
\vskip -0.2in
\begin{center}
\centerline{\includegraphics[width=1.2\textwidth]{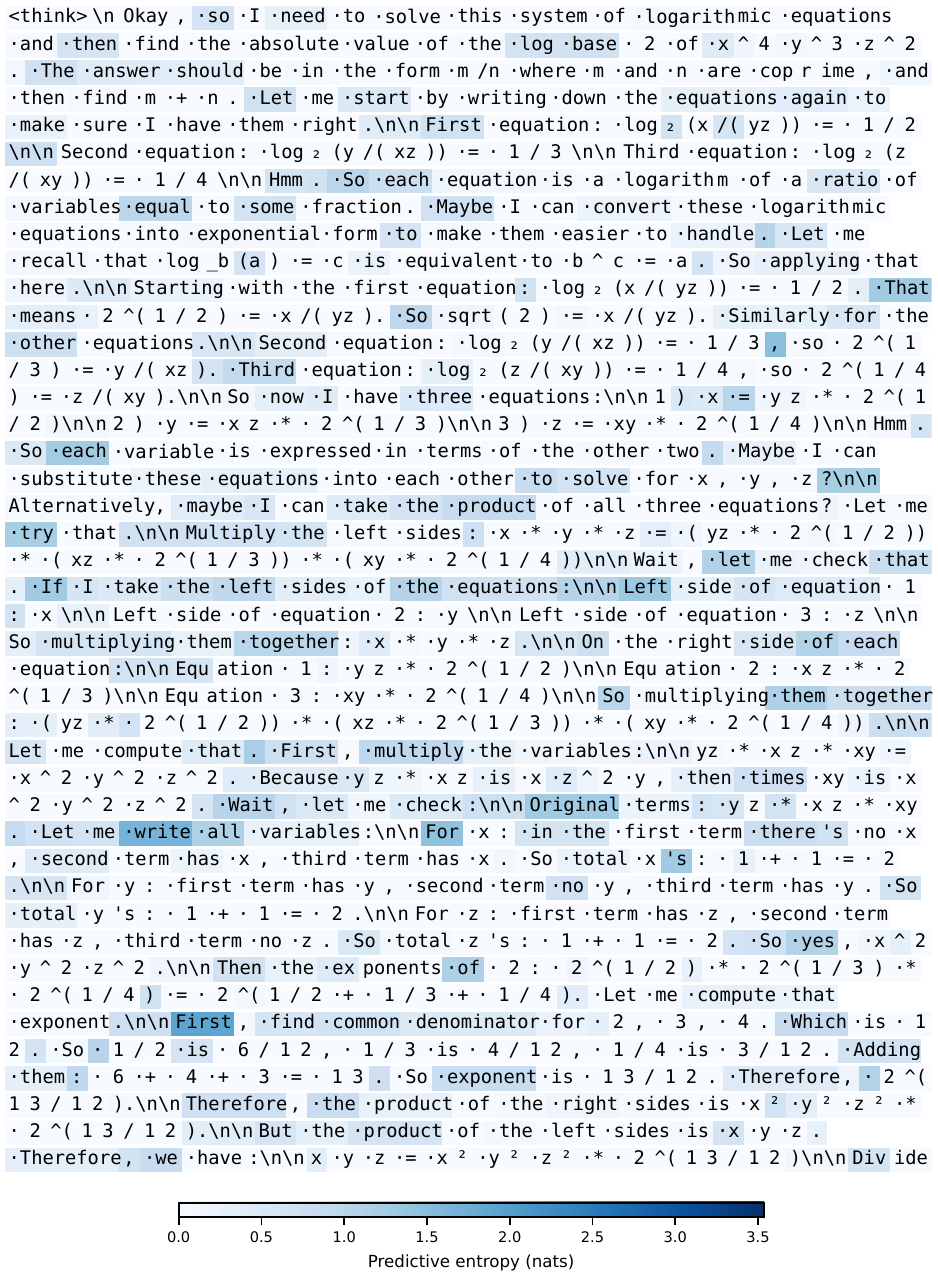}}
\caption{State-wise predictive entropy along a reasoning trajectory (Part 1/6).}
\label{app_fig:res_entropy_1}
\end{center}
\end{figure*}

\begin{figure*}[t]
\vskip -0.2in
\begin{center}
\centerline{\includegraphics[width=1.2\textwidth]{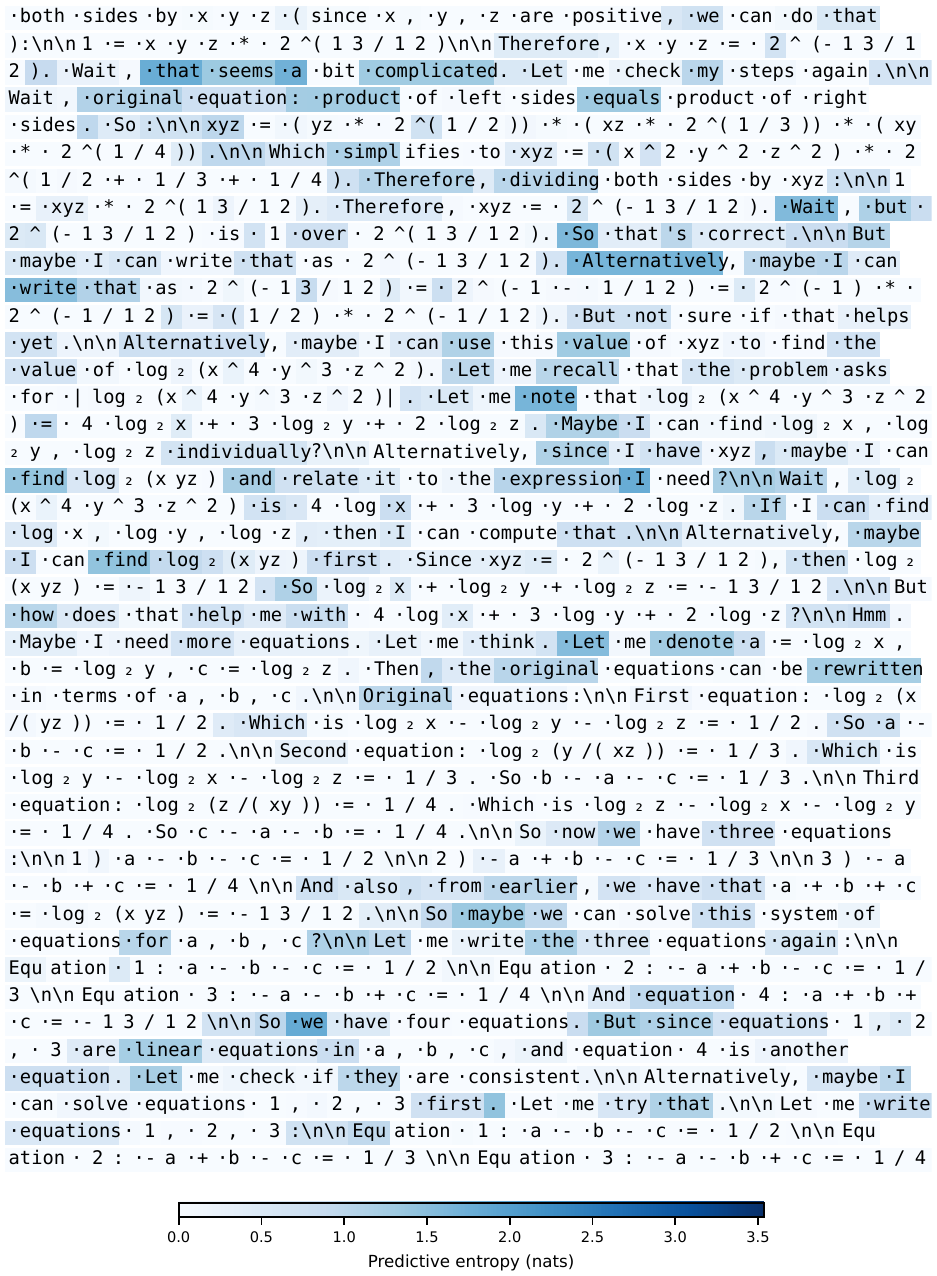}}
\caption{State-wise predictive entropy along a reasoning trajectory (Part 2/6).}
\label{app_fig:res_entropy_2}
\end{center}
\end{figure*}

\begin{figure*}[t]
\vskip -0.2in
\begin{center}
\centerline{\includegraphics[width=1.2\textwidth]{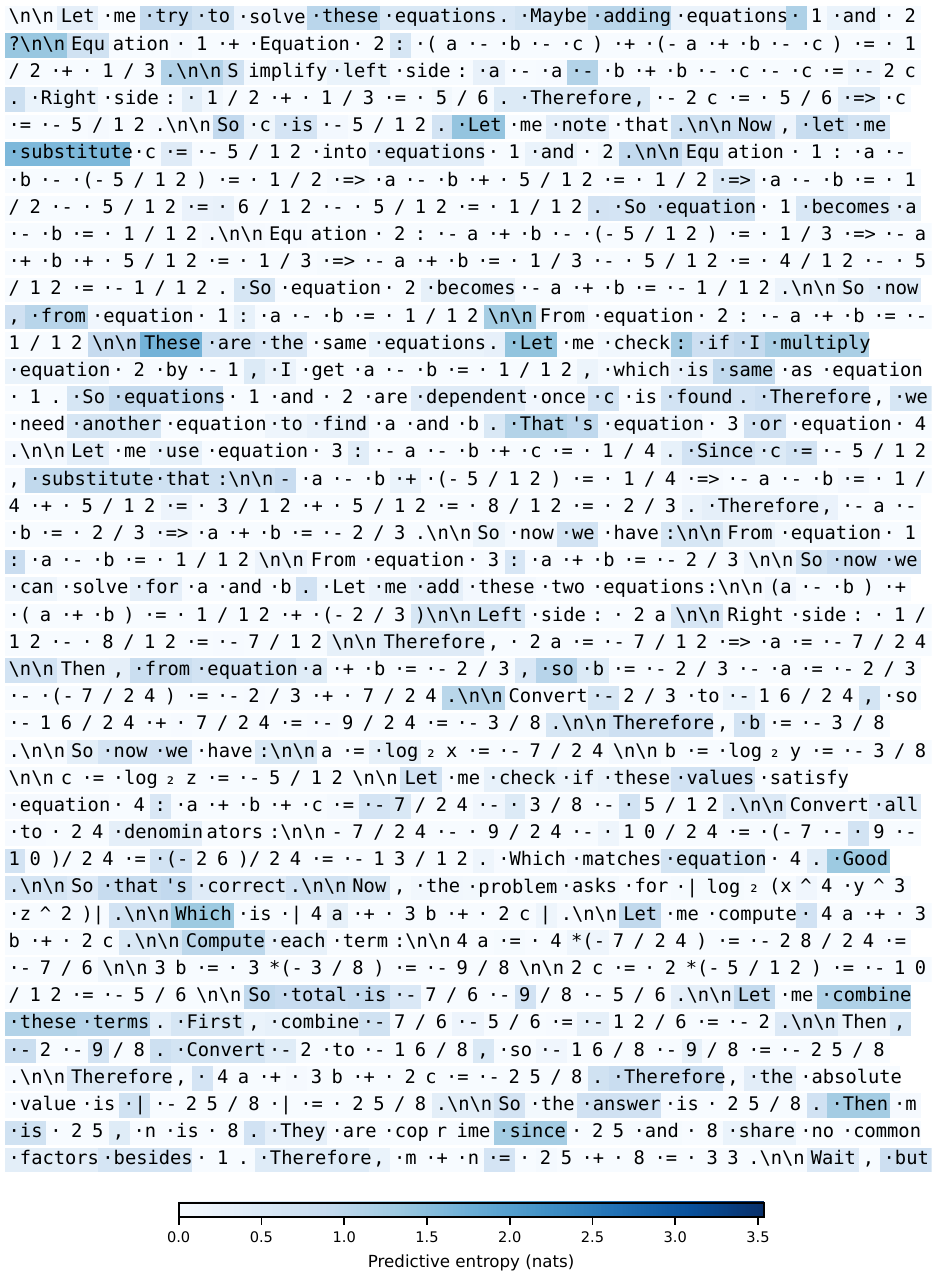}}
\caption{State-wise predictive entropy along a reasoning trajectory (Part 3/6).}
\label{app_fig:res_entropy_3}
\end{center}
\end{figure*}

\begin{figure*}[t]
\vskip -0.2in
\begin{center}
\centerline{\includegraphics[width=1.2\textwidth]{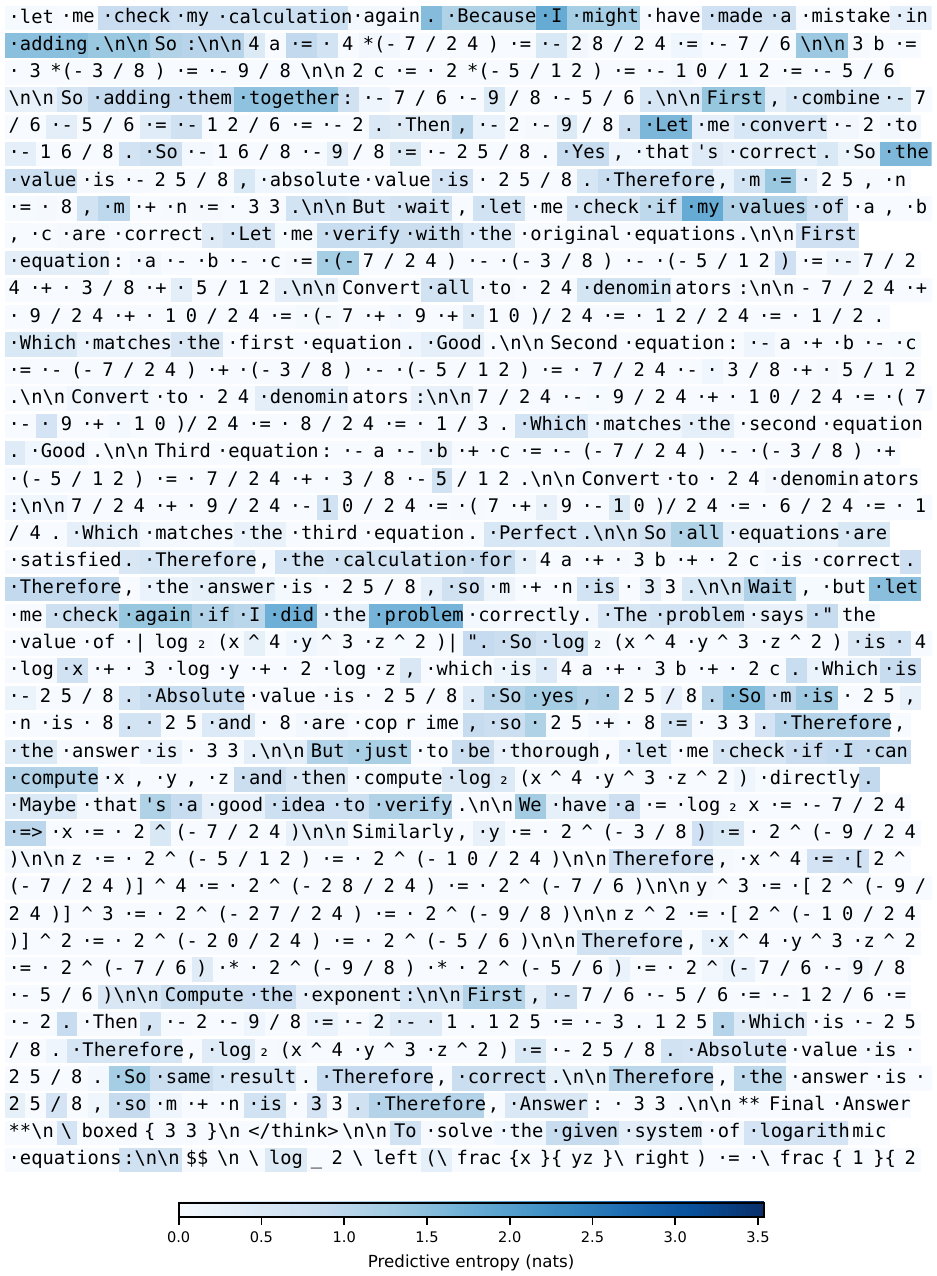}}
\caption{State-wise predictive entropy along a reasoning trajectory (Part 4/6).}
\label{app_fig:res_entropy_4}
\end{center}
\end{figure*}

\begin{figure*}[t]
\vskip -0.2in
\begin{center}
\centerline{\includegraphics[width=1.2\textwidth]{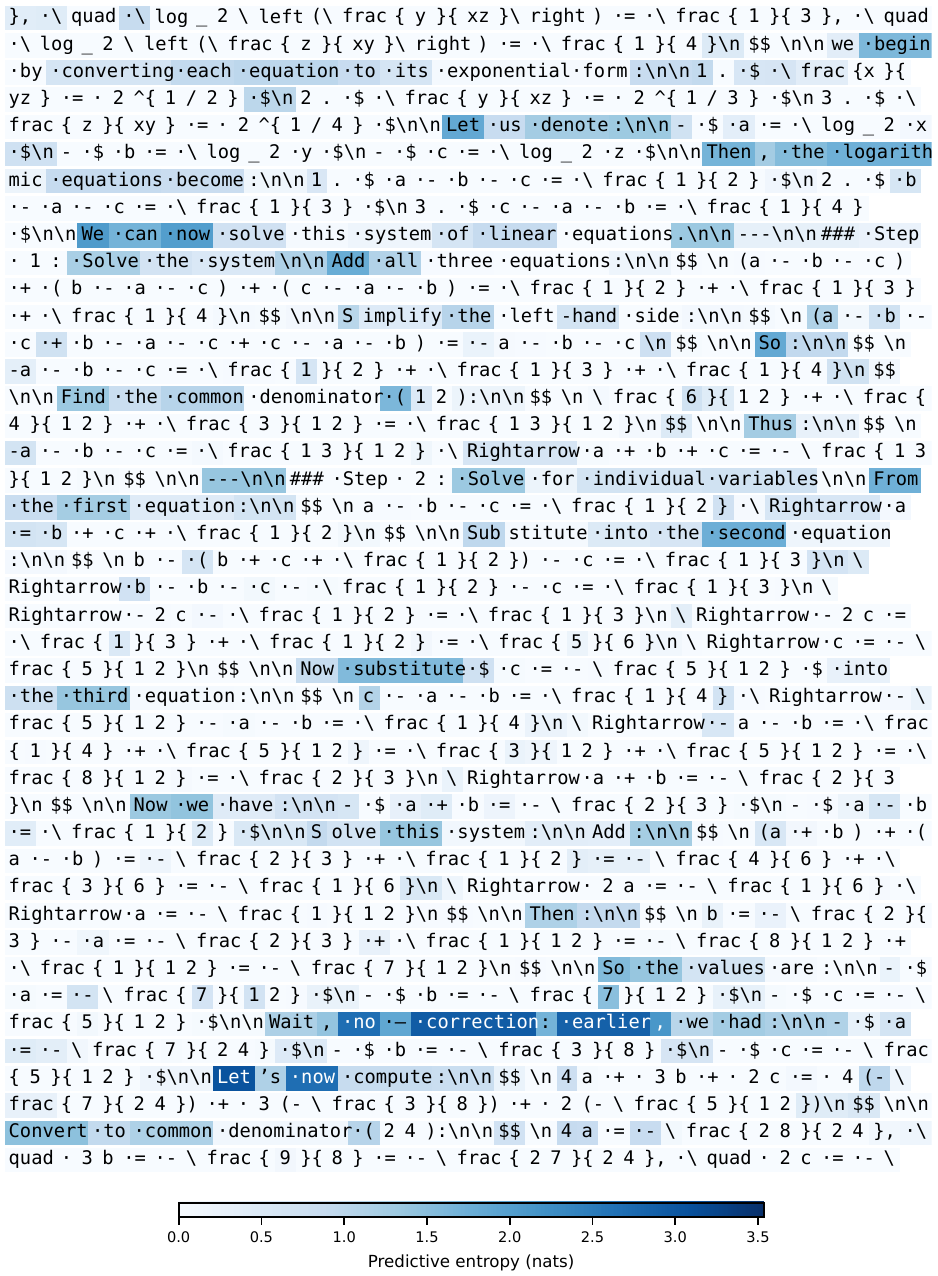}}
\caption{State-wise predictive entropy along a reasoning trajectory (Part 5/6).}
\label{app_fig:res_entropy_5}
\end{center}
\end{figure*}

\begin{figure*}[!t]
\vskip -0.2in
\begin{center}
\centerline{\includegraphics[width=1.2\textwidth]{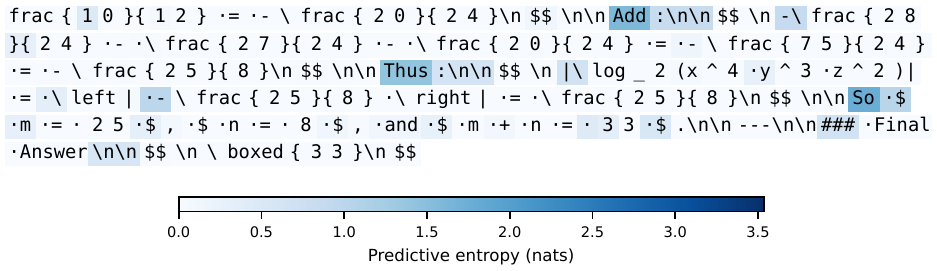}}
\caption{State-wise predictive entropy along a reasoning trajectory (Part 6/6).}
\label{app_fig:res_entropy_6}
\end{center}
\end{figure*}

\end{document}